\newcommand{\X}{\mathcal{X}}
\newcommand{\Y}{\mathcal{Y}}
\newcommand{\E}{\mathcal{E}}
\newcommand{\B}{\mathcal{B}}
\newcommand{\HS}{\B_2}
\renewcommand{\H}{\mathcal{H}}
\newcommand{\HY}{{\H_\Y}}
\newcommand{\HX}{{\H_\X}}
\newcommand{\HYX}{{\HY \otimes \H_\X}}
\newcommand{\R}{\mathbb{R}}
\newcommand{\G}{\Gamma}
\newcommand{\N}{\mathbb{N}}
\newcommand{\Z}{\mathbb{Z}}
\newcommand{\CC}{\mathbb{C}}
\newcommand{\mG}{\mathcal{G}}
\newcommand{\K}{\mathbf{K}}
\newcommand{\Kx}{{\K_x}}
\newcommand{\la}{\lambda}
\newcommand{\ot}{\otimes}
\newcommand{\loss}{\bigtriangleup}
\newcommand{\decoding}{d}
\newcommand{\refref}[2]{#1.~\ref{#2}}
\newcommand{\secref}[1]{\refref{Sec}{#1}}
\newcommand{\exref}[1]{Example~\ref{#1}}
\newcommand{\figref}[1]{\refref{Fig}{#1}}
\newcommand{\tabref}[1]{\refref{Tab}{#1}}
\renewcommand{\eqref}[1]{Eq.~(\ref{#1})}
\newcommand{\algref}[1]{Alg.~\ref{#1}}
\newcommand{\rhox}{{\rho_\X}}
\newcommand{\LX}{L^2(\X,\rhox)}
\newcommand{\LXR}{L^2(\X,\rhox,\R)}
\newcommand{\LXH}{L^2(\X,\rhox,\HY)}
\newcommand{\domrho}{D_{\rho|\X}}
\newcommand{\ip}[2]{\langle#1,#2\rangle}
\newcommand{\ipv}{\ip{\cdot}{\cdot}}
\newcommand{\minimize}[1]{\underset{#1}{\rm minimize}}
\newcommand{\Q}{Q}
\newcommand{\eqals}[1]{\begin{align*}#1\end{align*}}
\newcommand{\eqal}[1]{\begin{align}#1\end{align}}
\renewcommand{\eqals}[1]{\eqal{#1}}
\newcommand{\outkern}{h}
\newcommand{\tr}{\ensuremath{\text{\rm Tr}}}
\newcommand{\Span}{\ensuremath{\text{\rm span}}}
\newcommand{\ran}{\ensuremath{\text{\rm Ran}}}
\newcommand{\argmin}{\operatornamewithlimits{argmin}}
\newcommand{\argmax}{\operatornamewithlimits{argmax}}
\declaretheorem[name=Theorem,refname=Thm.]{theorem}
\declaretheorem[name=Lemma,sibling=theorem]{lemma}
\declaretheorem[name=Proposition,refname=Prop.,sibling=theorem]{proposition}
\declaretheorem[name=Remark]{remark}
\declaretheorem[name=Corollary,refname=Cor.,sibling=theorem]{corollary}
\declaretheorem[name=Definition,refname=Def.,sibling=theorem]{definition}
\declaretheorem[name=Example]{example}
\title{\sffamily\huge\bf A Consistent Regularization Approach \\ for Structured Prediction}
\author{Carlo Ciliberto $^{*,1}$ \\ {\small\em cciliber@mit.edu~~~~~} \and Alessandro Rudi $^{*,1,2}$ \\ {\small\em ale\_rudi@mit.edu~~~~~~~} \\ $ $ \and Lorenzo Rosasco $^{1,2}$ \\ {\small\em lrosasco@mit.edu~~~~~} \and {\small ${}^*$Equal contribution} }
\begin{document}

\maketitle

\begin{abstract}
\noindent We propose and analyze a regularization approach for structured prediction problems. We characterize a large class of loss functions that allows to naturally embed structured outputs in a linear space. 
We exploit this fact to  design learning  algorithms using a surrogate loss approach and regularization techniques.  
We prove universal consistency and finite sample bounds characterizing the generalization properties of the proposed methods. Experimental results are provided to demonstrate the practical usefulness of the proposed approach.
\end{abstract}

\begin{bibunit}[unsrt]
\section{Introduction}

\footnotetext[1]{Laboratory for Computational and Statistical Learning - Istituto Italiano di Tecnologia, Genova, Italy \& Massachusetts Institute of Technology, Cambridge, MA 02139, USA.}\footnotetext[2]{Universit\`a degli Studi di Genova, Genova, Italy.}
Many machine learning  applications   require dealing with  data-sets having  complex structures, e.g.   natural language processing, image segmentation, reconstruction or captioning, pose estimation, protein folding prediction to name a few.
 \cite{felzenszwalb2010object,karpathy2015deep,bakir2007}. Structured prediction problems pose a challenge 
for classic off-the-shelf learning algorithms for regression or binary classification. Indeed, this has motivated the extension of methods such as support vector machines to structured problems \cite{tsochantaridis2005}.  
Dealing with structured prediction problems is also a challenge for learning theory. While the theory of empirical risk minimization provides a very general statistical framework, in practice computational considerations make things more involved. Indeed, in the last few years, an effort has been made to analyze specific structured problems, such as multiclass classification \cite{mroueh2012}, multi-labeling \cite{gao2013}, ranking~\cite{duchi2010} or quantile estimation \cite{steinwart2011}. A natural question is then whether a unifying learning theoretic framework can be developed encompassing a wide range of problems as special cases.

In this paper we take a step in this direction  proposing  and analyzing  a
regularization  approach to   a wide class of structured prediction 
problems defined by  loss  functions  satisfying  mild conditions. Indeed, our starting observation is that  a large class of loss functions  naturally define an embedding of  the structured outputs in a linear space. This fact allows  to define a (least squares) surrogate loss and  cast the problem in a multi-output regularized learning framework \cite{micchelli2004,caponnetto2007,alvarez2012}. The corresponding algorithm essentially reduces to a form of kernel ridge regression and generalizes the approach proposed in \cite{cortes2005}, hence providing also a novel derivation for this latter algorithm. Our theoretical analysis allows to characterize the generalization properties of the proposed approach. In particular, it allows to quantify the impact due to the surrogate approach and establishes universal consistency as well as finite sample bounds. An experimental analysis shows promising results on a variety of structured prediction problems.

The rest of this paper is organized as follows: in \secref{sec:statement&algorithm} we introduce the structured prediction problem in its generality and present our algorithm to approach it. In \secref{sec:surrogate} we introduce and discuss a surrogate framework for structured prediction, from which we derive our algorithm. In \secref{sec:analysis}, we analyze the theoretical properties of the proposed algorithm. In \secref{sec:connections} we draw connections with previous work in structured prediction while in \secref{sec:experiments} we report a preliminary empirical analysis and comparison of the proposed approach. Finally, \secref{sec:conclusions} concludes the paper outlining relevant directions for future research.

\section{A Regularization Approach to Structured prediction}\label{sec:statement&algorithm}


The goal of supervised learning is to learn functional relations $f:\X\to\Y$ between two sets $\X,\Y$, given a finite number of examples. In particular in this work we are interested to {\it structured prediction}, namely the case where $\Y$ is a set of structured outputs (such as histograms, graphs, time sequences, points on a manifold, etc.). Moreover, structure on $\Y$ can be implicitly induced by a suitable loss $\loss:\Y\times\Y\to\R$ (such as edit distance, ranking error, geodesic distance, indicator function of a subset, etc.). Then, the problem of structured prediction becomes
\begin{equation}\label{eq:expected_risk_minimization}
   \minimize{f:\X\to\Y} \quad \E(f), \qquad \mbox{with} \qquad \E(f) = \int_{\X\times\Y} \loss(f(x),y) ~~ d\rho(x,y)
\end{equation}
and the goal is to find a good estimator for the minimizer of the above equation, given a finite number of (training) points $\{(x_i,y_i)\}_{i=1}^n$ sampled from a unknown probability distribution $\rho$ on $\X\times\Y$. In the following we introduce an estimator $\hat{f}:\X\to\Y$ to approach \eqref{eq:expected_risk_minimization}. The rest of this paper is devoted to prove that $\hat{f}$ it a consistent estimator for a minimizer of \eqref{eq:expected_risk_minimization}.
\paragraph{Our Algorithm for Structured Prediction.}
In this paper we propose and analyze the following estimator
\eqal{\label{eq:algorithm}\tag{Alg.~$1$}
  \hat{f}(x) = \argmin_{y\in\Y} ~ \sum_{i=1}^n \alpha_i(x) \loss(y,y_i) \quad \mbox{with} \quad \alpha(x) = (\K + n\la I)^{-1} \K_x \in \R^n
}
given a positive definite kernel $k:\X\times \X\to \R$ and training set $\{(x_i,y_i)\}_{i=1}^n$. In the above expression, $\alpha_i(x)$ is $i$-th entry in $\alpha(x)$, $\K\in\R^{n \times n}$ is the kernel matrix $\K_{i,j} = k(x_i,x_j)$, $\K_x\in\R^n$ the vector with entires $(\K_x)_i = k(x,x_i)$, $\la>0$ a regularization parameter and $I$ the identity matrix. From a computational perspective, the procedure in~\ref{eq:algorithm} is divided in two steps: a {\it learning} step where input-dependents weights $\alpha_i(\cdot)$ are computed (which essentially consists in solving a kernel ridge regression problem) and a {\it prediction} step where the $\alpha_i(x)$-weighted linear combination in \ref{eq:algorithm} is optimized, leading to a prediction $\hat{f}(x)$ given an input $x$. This idea was originally proposed in \cite{collins2002discriminative}, where a ``score'' function $F(x,y)$ was learned to estimate the ``likelihood'' of a pair $(x,y)$ sampled from $\rho$, and then used in
\eqal{
  \hat{f}(x) = \argmin_{y\in\Y} ~ - F(x,y),
}
to predict the best $\hat{f}(x)\in\Y$ given $x\in\X$. This strategy was extended in \cite{tsochantaridis2005} for the popular {\it SVMstruct} and adopted also in a variety of approaches for structured prediction \cite{felzenszwalb2010object,cortes2005,kadri2013}.\\

\paragraph{Intuition.} While providing a principled derivation of \ref{eq:algorithm} for a large class of loss functions is a main contribution of this work, it is useful to first consider the special case where $\loss$ is induced by a reproducing kernel $\outkern:\Y\times\Y\to\R$ on the output set, such that
\eqal{\label{eq:ker_asm}
\loss(y,y') = \outkern(y,y) - 2 \outkern(y,y') + \outkern(y',y').
}
This choice of $\loss$ was originally considered in {\it Kernel Dependency Estimation (KDE)}~\cite{weston2002}. In particular, for the case of normalized kernels (i.e. $\outkern(y,y) = 1  ~~ \forall y\in\Y$), \ref{eq:algorithm} essentially reduces to \cite{cortes2005,kadri2013} and recalling their derivation is insightful. Note that, since a kernel can be written as $\outkern(y,y') = \ip{\psi(y)}{\psi(y')}_\HY$, with $\psi:\Y\to\HY$ a non-linear map into a feature space $\HY$~\cite{berlinet2011}, then \eqref{eq:ker_asm} can be rewritten as
\eqal{\label{eq:ker_asm2}
  \loss(f(x),y') = \|\psi(f(x)) - \psi(y')\|_\HY^2.
}
Directly minimizing the equation above with respect to $f$ is generally challenging due to the non linearity $\psi$. A possibility is to replace $\psi \circ f$ by a function $g:\X\to\HY$ that is easier to optimize. We can then consider the regularized problem
\eqal{\label{eq:kde}
  \minimize{g\in\mG} ~ \frac{1}{n} \sum_{i=1}^n \|g(x_i) - \psi(y_i)\|_\HY^2 + \lambda \|g\|_\mG^2
}
with $\mG$ a space of functions\footnote{$\mG$ is the reproducing kernel Hilbert space for vector-valued functions~\cite{micchelli2004} with inner product $\ip{k(x_i,\cdot)c_i}{k(x_j,\cdot)c_j}_{\mG} = k(x_i,x_j)\ip{c_i}{c_j}_\HY$} $g:\X\to\HY$ of the form $g(x) = \sum_{i=1} k(x,x_i) c_i$ with $c_i\in\HY$ and $k$ a reproducing kernel. Indeed, in this case the solution to \eqref{eq:kde} is
\eqal{\label{eq:kde_solution}
  \hat{g}(x) = \sum_{i=1}^n \alpha_i(x) \psi(y_i) \quad \mbox{with} \quad \alpha(x) = (\K + n \la I)^{-1} \K_x \in \R^n
}
where the $\alpha_i$ are the same as in \ref{eq:algorithm}. Since we replaced $\loss(f(x),y)$ by $\|g(x)-\psi(y)\|_\HY^2$, a natural question is how to recover an estimator $\hat{f}$ from $\hat{g}$. In \cite{cortes2005} it was proposed to consider
\eqal{\label{eq:decoding_kde}
  \hat{f}(x) = \argmin_{y\in\Y} ~ \|\psi(y) - \hat{g}(x)\|_\HY^2 =  \argmin_{y\in\Y} ~ \outkern(y,y) - 2 \sum_{i=1}^n \alpha_i(x)\outkern(y,y_i),
}
which corresponds to \ref{eq:algorithm} when $\outkern$ is a normalized kernel.

The discussion above provides an intuition on how \ref{eq:algorithm} is derived but raises also a few questions. First, it is not clear if and how the same strategy could be generalized to loss functions that do not satisfy \eqref{eq:ker_asm}. Second, the above reasoning hinges on the idea of replacing $\hat{f}$ with $\hat{g}$ (and then recovering $\hat{f}$ by \eqref{eq:decoding_kde}), however it is not clear whether this approach can be justified theoretically. Finally, we can ask what are the statistical properties of the resulting algorithm. We address the first two questions in the next section, while the rest of the paper is devoted to establish universal consistency and generalization bounds for algorithm~\ref{eq:algorithm}.

\paragraph{Notation and Assumptions.} We introduce here some minimal technical assumptions that we will use throughout this work. We will assume $\X$ and $\Y$ to be Polish spaces, namely separable completely metrizable spaces equipped with the associated Borel sigma-algebra. Given a Borel probability distribution $\rho$ on $\X\times\Y$ we denote with $\rho(\cdot|x)$ the associated conditional measure on $\Y$ (given $x\in\X$) and with $\rho_\X$ the marginal distribution on $\X$.

\section{Surrogate Framework and Derivation}\label{sec:surrogate}
To derive \ref{eq:algorithm} we consider ideas from surrogate approaches \cite{bartlett2006,steinwart2008,duchi2010} and in particular \cite{mroueh2012}. The idea is to tackle \eqref{eq:expected_risk_minimization} by substituting $\loss(f(x),y)$ with a ``relaxation'' $L(g(x),y)$ that is easy to optimize. The corresponding surrogate problem is
\begin{equation}\label{eq:surrogate}
    \minimize{g:\X\to\HY} \quad \mathcal{R}(g), \qquad \mbox{with} \qquad \mathcal{R}(g) = \int_{\X\times\Y} L(g(x),y) ~~ d\rho(x,y),
\end{equation}
and the question is how a solution $g^*$ for the above problem can be related to a minimizer $f^*$ of \eqref{eq:expected_risk_minimization}. This is made possible by the requirement that there exists a {\it decoding} $\decoding:\HY\to\Y$, such that
  \eqal{
    & \textrm{\it Fisher Consistency:} ~~~ \E(\decoding \circ g^*)  = \E(f^*), \label{eq:fisher_init}\\ 
    & \textrm{\it Comparison Inequality:} ~~~ \E(\decoding \circ g) - \E(f^*) \leq \varphi(\mathcal{R}(g) - \mathcal{R}(g^*)), \label{eq:comparison_init}\qquad\qquad\qquad
  }
hold for all $g:\X\to\HY$, where $\varphi:\R\to\R$ is such that $\varphi(s)\to0$ for $s\to0$. Indeed, given an estimator $\hat{g}$ for $g^*$, we can ``decode'' it considering $\hat{f} = \decoding\circ \hat{g}$ and use the {\it excess risk} $\mathcal{R}(\hat{g}) - \mathcal{R}(g^*)$ to control $\E(\hat{f})- \E(f^*)$ via the comparison inequality in \eqref{eq:comparison_init}. In particular, if $\hat{g}$ is a data-dependent predictor trained on $n$ points and $\mathcal{R}(\hat{g})\to\mathcal{R}(g^*)$ when $n\to+\infty$, we automatically have $\E(\hat{f})\to\E(f^*)$. Moreover, if $\varphi$ in \eqref{eq:comparison_init} is known explicitly, generalization bounds for $\hat{g}$ are automatically extended to $\hat{f}$.
 
Provided with this perspective on surrogate approaches, here we revisit the discussion of \secref{sec:statement&algorithm} for the case of a loss function induced by a kernel $\outkern$. Indeed, by assuming the surrogate $L(g(x),y) = \|g(x) - \psi(y)\|_\HY^2$, \eqref{eq:kde} becomes the empirical version of the surrogate problem at \eqref{eq:surrogate} and leads to an estimator $\hat{g}$ of $g^*$ as in \eqref{eq:kde_solution}. Therefore, the approach in \cite{cortes2005,kadri2013} to recover $\hat{f}(x) = \argmin_y L(g(x),y)$ can be interpreted as the result $\hat{f}(x) = \decoding \circ \hat{g}(x)$ of a suitable decoding of $\hat{g}(x)$. An immediate question is whether the above framework satisfies \eqref{eq:fisher_init} and (\ref{eq:comparison_init}). Moreover, we can ask if the same idea could be applied to more general loss functions.

In this work we identify conditions on $\loss$ that are satisfied by a large family of functions and moreover allow to design a surrogate framework for which we prove \eqref{eq:fisher_init} and (\ref{eq:comparison_init}). The first step in this direction is to introduce the following assumption.

\begin{restatable}{assumption}{Amain}\label{assumption:main}
There exists a separable Hilbert space $\HY$ with inner product $\ip{\cdot}{\cdot}_\HY$, a continuous embedding $\psi:\Y\to\HY$ and a bounded linear operator $V:\HY\to\HY$, such that
\begin{equation}\label{eq:linearized_loss}
    \loss(y,y') = \ip{\psi(y)}{V\psi(y')}_\HY \qquad \forall y,y'\in\Y
\end{equation}
\end{restatable}
\autoref{assumption:main} is similar to \eqref{eq:ker_asm2} and in particular to the definition of a reproducing kernel. Note however that by not requiring $V$ to be positive semidefinite (or even symmetric), we allow for a surprisingly wide range of functions. Indeed, below we discuss some examples of functions that satisfy \autoref{assumption:main} (see Appendix \secref{sec:losses} for more details):
\begin{example}\label{example:general}\em
The following functions of the form $\loss:\Y\times\Y\to\R$ satisfy \autoref{assumption:main}:
\begin{enumerate}
\item{\em Any loss on $\Y$ of finite cardinality}. Several problems belong to this setting, such as Multi-Class Classification, Multi-labeling, Ranking, predicting Graphs (e.g. protein foldings).
\item{\em Regression and Classification Loss Functions}: Least-squares, Logistic, Hinge, $\epsilon$-insensitive, $\tau$-Pinball.
\item{\em Robust Loss Functions} Most loss functions used for {\it robust estimation}~\cite{huber2011} such as the absolute value, Huber, Cauchy, German-McLure, ``Fair'' and $L_2-L1$. See \cite{huber2011} or the Appendix for their explicit formulation.
\item{\em KDE}. Loss functions $\loss$ induced by a kernel such as in \eqref{eq:ker_asm}.
\item{\em Distances on Histograms/Probabilities}. The $\chi^2$ and the squared Hellinger distances.
\item{\em Diffusion distances on Manifolds}. The squared diffusion distance induced by the heat kernel (at time $t>0$) on a compact Reimannian manifold without boundary~\cite{schoen1994}.
\end{enumerate}
\end{example}
\paragraph{The Least Squares Loss Surrogate Framework.} \autoref{assumption:main} implicitly defines the space $\HY$ similarly to \eqref{eq:ker_asm2}. The following result motivates the choice of the least squares surrogate and moreover suggests a possible choice for the decoding.
\begin{restatable}{lemma}{Lrelation}\label{lm:relation_expected_risk}
Let $\loss:\Y\times\Y\to\R$ satisfy \autoref{assumption:main} with $\psi:\Y\to\HY$ bounded. Then the expected risk in \eqref{eq:expected_risk_minimization} can be written as
\eqal{\label{eq:relation_expected_risk}
  \E(f) = \int_\X \ip{\psi(f(x))}{V g^*(x)}_\HY ~ d\rhox(x)
}
for all $f:\X\to\Y$, where $g^*:\X\to\HY$ minimizes
\begin{equation}\label{eq:surrogate_ls}
    \mathcal{R}(g) = \int_{\X\times\Y} \|g(x) - \psi(y)\|_\HY^2 ~~ d\rho(x,y).
\end{equation}
\end{restatable}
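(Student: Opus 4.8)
The plan is to first identify the surrogate minimizer $g^*$ explicitly as a conditional mean of $\psi$, and then to substitute the linearized form \eqref{eq:linearized_loss} of $\loss$ into $\E(f)$ and recognize $g^*$ after disintegrating $\rho$. Since $\X$ and $\Y$ are Polish, the disintegration theorem applies: writing $d\rho(x,y) = d\rho(y|x)\,d\rhox(x)$, both $\E$ and $\mathcal{R}$ split into an inner integral over $\Y$ against the conditional $\rho(\cdot|x)$ followed by an outer integral over $\X$ against $\rhox$. Because $\psi$ is bounded, the Bochner integral $g^*(x) := \int_\Y \psi(y)\,d\rho(y|x)$ is well defined for $\rhox$-almost every $x$, is uniformly bounded in $\HY$-norm, and therefore defines an element of $\LXH$.

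Next I would show this $g^*$ is the minimizer of $\mathcal{R}$. For each fixed $x$, minimizing the inner integral $\int_\Y \|v-\psi(y)\|_\HY^2\,d\rho(y|x)$ over $v\in\HY$ is a least-squares problem in a Hilbert space, and the bias–variance (orthogonality) decomposition
\[
  \int_\Y \|v-\psi(y)\|_\HY^2\,d\rho(y|x) = \|v - g^*(x)\|_\HY^2 + \int_\Y \|\psi(y)-g^*(x)\|_\HY^2\,d\rho(y|x),
\]
in which the second term is independent of $v$, shows that the pointwise minimizer is exactly $v=g^*(x)$. Since minimizing the integrand pointwise minimizes $\mathcal{R}$ over all measurable $g$, the function $g^*$ is the claimed minimizer of \eqref{eq:surrogate_ls}.

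Finally I would compute $\E(f)$ itself. Substituting \eqref{eq:linearized_loss} and disintegrating gives $\E(f) = \int_\X \big(\int_\Y \ip{\psi(f(x))}{V\psi(y)}_\HY\,d\rho(y|x)\big)\,d\rhox(x)$. For fixed $x$ the vector $\psi(f(x))$ is constant, so $v\mapsto \ip{\psi(f(x))}{Vv}_\HY = \ip{V^*\psi(f(x))}{v}_\HY$ is a bounded linear functional on $\HY$ (here the boundedness of $V$ and continuity of the inner product enter); it therefore commutes with the Bochner integral, and the inner integral equals $\ip{\psi(f(x))}{V g^*(x)}_\HY$. Integrating over $\X$ against $\rhox$ yields precisely \eqref{eq:relation_expected_risk}.

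The routine parts are the bias–variance identity and the Fubini-type change in order of integration. The step demanding the most care is the interchange of the Bochner integral with the operator $V$ and the inner product: this is exactly where boundedness of $V$ (to get a bounded functional) and boundedness of $\psi$ (to ensure integrability and that $g^*\in\LXH$) are used. Existence of the disintegration, guaranteed by the Polish-space assumption, is the other place where the hypotheses are genuinely needed.
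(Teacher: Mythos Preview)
Your proposal is correct and follows essentially the same route as the paper. The paper factors the characterization of $g^*$ as the conditional mean $\int_\Y \psi(y)\,d\rho(y|x)$ and its minimality for $\mathcal{R}$ into a separate preliminary lemma, then quotes it; you carry out those same two steps inline (the bias--variance identity being exactly the expansion the paper uses) before performing the identical substitution and interchange in $\E(f)$.
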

\autoref{lm:relation_expected_risk} shows how \eqref{eq:surrogate_ls} arises naturally as surrogate problem. In particular, \eqref{eq:relation_expected_risk} suggests how to chose the decoding. Indeed, consider an $f:\X\to\Y$ such that for each $x\in\X$, $f(x)$ is a minimizer of the argument in \eqref{lm:relation_expected_risk}, namely $\ip{\psi(f(x))}{V g^*(x)}_\HY \leq \ip{\psi(y}{V g^*(x)}_\HY$ for all $y\in\Y$. Then we have $\E(f) \leq \E(f')$ for any $f':\X\to\Y$. Following this observation, in this work we consider the decoding $\decoding:\HY\to\Y$ such that
\eqal{\label{eq:decoding_function}
  \decoding(h) = \argmin_{y\in\Y} ~ \ip{~\psi(y)~}{~V h~}_\HY \qquad \forall h\in\HY.
}
So that $\E(\decoding \circ g^*) \leq \E(f')$ for all $f':\X\to\Y$. Indeed, for this choice of decoding, we have the following result.
\begin{restatable}{theorem}{Tsurrogate}\label{teo:surrogate}
Let $\loss:\Y\times\Y\to\R$ satisfy \autoref{assumption:main} with $\Y$ a compact set. Then, for every measurable $g:\X\to\HY$ and $d:\HY\to\Y$ satisfying \eqref{eq:decoding_function}, the following holds
\eqal{
    & \E(\decoding \circ g^*) = \E(f^*) \\ 
    & \E(\decoding \circ g) - \E(f^*) \leq 2 c_\loss \sqrt{\mathcal{R}(g) - \mathcal{R}(g^*)
}. \label{eq:comparison_inequality_ls}
}
with $c_\loss = \|V\| \max_{y\in\Y} \|\psi(y)\|_\HY$.
\end{restatable}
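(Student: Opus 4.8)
The plan is to reduce both claims to pointwise statements on $\X$ through \autoref{lm:relation_expected_risk}, and then to an $L^2$ estimate controlled by the surrogate excess risk. The starting point is the observation that the minimizer of the least squares surrogate \eqref{eq:surrogate_ls} is the conditional expectation $g^*(x) = \EE[\psi(y)\mid x] = \int_\Y \psi(y)\, d\rho(y|x)$ (well defined since $\psi$ is bounded), for which the usual bias--variance (Pythagorean) decomposition gives the identity
\eqal{
  \mathcal{R}(g) - \mathcal{R}(g^*) = \int_\X \|g(x) - g^*(x)\|_\HY^2\, d\rhox(x) = \|g - g^*\|_{\LXH}^2.
}
This rewrites the surrogate excess risk as the squared $\LXH$-distance between $g$ and $g^*$, which is what I will ultimately bound against.

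For Fisher consistency, I would use \autoref{lm:relation_expected_risk} to write $\E(f) = \int_\X \ip{\psi(f(x))}{V g^*(x)}_\HY\, d\rhox(x)$ for every $f$. Since by \eqref{eq:decoding_function} the point $\decoding(g^*(x))$ minimizes $y \mapsto \ip{\psi(y)}{V g^*(x)}_\HY$ pointwise, we get $\E(\decoding \circ g^*) \le \E(f)$ for every measurable $f:\X\to\Y$; in particular $\E(\decoding\circ g^*)\le \E(f^*)$. The reverse inequality is immediate from $f^*$ being a global minimizer of $\E$, so the two risks coincide. The one technical point is that $\decoding\circ g^*$ must itself be an admissible (measurable) map $\X\to\Y$: compactness of $\Y$ together with continuity of $\psi$ guarantees the argmin is attained, and a measurable selection argument yields a measurable decoding --- this is precisely where the compactness hypothesis on $\Y$ enters.

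For the comparison inequality, set $f = \decoding \circ g$ and subtract the two expressions given by \autoref{lm:relation_expected_risk} (using Fisher consistency for $\E(f^*) = \E(\decoding \circ g^*)$), reducing the excess risk to the integral of the pointwise gap $\ip{\psi(\decoding(g(x)))}{V g^*(x)}_\HY - \ip{\psi(\decoding(g^*(x)))}{V g^*(x)}_\HY$. The crux is to bound this gap by $2 c_\loss \|g(x) - g^*(x)\|_\HY$. Writing $a = g(x)$, $b = g^*(x)$ and $y_a = \decoding(a)$, $y_b = \decoding(b)$, the optimality inequality $\ip{\psi(y_a)}{Va}_\HY \le \ip{\psi(y_b)}{Va}_\HY$ defining the decoding at $a$ lets me telescope
\eqal{
  \ip{\psi(y_a) - \psi(y_b)}{Vb}_\HY = \ip{\psi(y_a) - \psi(y_b)}{Va}_\HY + \ip{\psi(y_a) - \psi(y_b)}{V(b - a)}_\HY,
}
where the first term is $\le 0$ by that optimality and the second is controlled by Cauchy--Schwarz as $\le \|V\|\,\|\psi(y_a) - \psi(y_b)\|_\HY\,\|a - b\|_\HY \le 2 c_\loss \|a - b\|_\HY$, using $\|\psi(y_a) - \psi(y_b)\|_\HY \le 2\max_{y}\|\psi(y)\|_\HY$. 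Identifying the right telescoping, which exploits the decoding's optimality condition to kill the leading term, is the main obstacle of the argument.

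Finally, I would integrate the pointwise bound over $\rhox$ and apply Jensen's inequality (equivalently Cauchy--Schwarz against the probability measure $\rhox$) to pass from the $L^1$ to the $L^2$ norm,
\eqal{
  \E(\decoding\circ g) - \E(f^*) \le 2 c_\loss \int_\X \|g(x) - g^*(x)\|_\HY\, d\rhox(x) \le 2 c_\loss\, \|g - g^*\|_{\LXH},
}
and then substitute the bias--variance identity $\|g - g^*\|_{\LXH}^2 = \mathcal{R}(g) - \mathcal{R}(g^*)$ to obtain the claimed bound $2 c_\loss \sqrt{\mathcal{R}(g) - \mathcal{R}(g^*)}$.
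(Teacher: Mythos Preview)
Your proof is correct and follows the same overall strategy as the paper: identify $g^*$ as the conditional mean, use the Pythagorean identity $\mathcal{R}(g)-\mathcal{R}(g^*)=\|g-g^*\|_{\LXH}^2$, establish Fisher consistency via pointwise optimality of the decoding, and finish the comparison inequality with Cauchy--Schwarz plus Jensen.

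The one genuine difference is in the decomposition for the comparison inequality. The paper splits $\ip{\psi(f(x))-\psi(f_0(x))}{Vg^*(x)}$ into two pieces $A+B$, where $A=\int \ip{\psi(f(x))}{V(g^*(x)-g(x))}\,d\rhox$ and $B$ is the difference of two infima; it then bounds $|B|$ via the elementary inequality $|\inf\eta-\inf\zeta|\le\sup|\eta-\zeta|$, so each piece contributes $Q\|V\|\,\|g-g^*\|$. Your telescope instead writes $\ip{\psi(y_a)-\psi(y_b)}{Vb}=\ip{\psi(y_a)-\psi(y_b)}{Va}+\ip{\psi(y_a)-\psi(y_b)}{V(b-a)}$ and kills the first term outright using only the optimality of $y_a$ at $a$; the factor $2$ then appears from $\|\psi(y_a)-\psi(y_b)\|\le 2Q$. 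Both routes land on the same bound $2c_\loss\|g-g^*\|_\HY$, but yours is slightly more economical in that it invokes a single optimality condition rather than two, while the paper's $A+B$ split makes more transparent where each $Q\|V\|$ factor comes from.
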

\autoref{teo:surrogate} shows that for all $\loss$ satisfying \autoref{assumption:main}, the corresponding surrogate framework identified by the surrogate in \eqref{eq:surrogate_ls} and decoding \eqref{eq:decoding_function} satisfies Fisher consistency and the comparison inequality in \eqref{eq:comparison_inequality_ls}. We recall that a finite set $\Y$ is always compact, and moreover, assuming the discrete topology on $\Y$, we have that any $\psi:\Y\to\HY$ is continuous. Therefore, \autoref{teo:surrogate} applies in particular to any structured prediction problem on $\Y$ with finite cardinality.

\autoref{teo:surrogate} suggest to approach structured prediction by first learning $\hat{g}$ and then decoding it to recover $\hat{f} = \decoding \circ \hat{g}$. A natural question is how to choose $\hat{g}$ in order to compute $\hat{f}$ in practice. In the rest of this section we propose an approach to this problem.

\paragraph{Derivation for \ref{eq:algorithm}.} Minimizing $\mathcal{R}$ in \eqref{eq:surrogate_ls} corresponds to a vector-valued regression problem \cite{micchelli2004,caponnetto2007,alvarez2012}. In this work we adopt an empirical risk minimization approach to learn $\hat{g}$ as in \eqref{eq:kde}. The following result shows that combining $\hat{g}$ with the decoding in \eqref{eq:decoding_function} leads to the $\hat{f}$ in \ref{eq:algorithm}.
\begin{restatable}{lemma}{Pcomputable}\label{prop:computable}
Let $\loss:\Y\times\Y\to\R$ satisfy \autoref{assumption:main} with $\Y$ a compact set. Let $\hat{g}:\X\to\HY$ be the minimizer of \eqref{eq:kde}. Then, for all $x\in\X$
\begin{equation}\label{eq:algorithm_with_weights}
    \decoding \circ \hat{g} (x) ~=~ \argmin_{y\in\Y} \sum_{i=1}^n \alpha_i(x) \loss(y,y_i) \qquad \alpha(x) = (\K + n \lambda I)^{-1} \K_x \in\R^n
\end{equation}
\end{restatable}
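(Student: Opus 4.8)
The plan is to explicitly compute the minimizer $\hat g$ of the empirical problem \eqref{eq:kde}, substitute it into the decoding \eqref{eq:decoding_function}, and then algebraically manipulate the resulting objective $\ip{\psi(y)}{V\hat g(x)}_\HY$ into the weighted-loss form $\sum_i \alpha_i(x)\loss(y,y_i)$ using \autoref{assumption:main}. The key observation is that all the work is really about the objective function inside the $\argmin$, not about the existence of the minimizer — compactness of $\Y$ and continuity of $\psi$ (hence of $y\mapsto\ip{\psi(y)}{V\hat g(x)}_\HY$) guarantee the $\argmin$ is attained, and I only need to show the two objectives agree up to terms independent of $y$.

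First I would recall (or re-derive via the representer theorem for vector-valued RKHS, citing the footnote to \cite{micchelli2004}) that the minimizer of \eqref{eq:kde} has the closed form $\hat g(x) = \sum_{i=1}^n \alpha_i(x)\psi(y_i)$ with $\alpha(x) = (\K + n\la I)^{-1}\K_x$, exactly as stated in \eqref{eq:kde_solution}. This is the standard kernel ridge regression solution: writing $\hat g(\cdot) = \sum_i k(\cdot,x_i)c_i$ with $c_i\in\HY$, the normal equations for the quadratic objective give $(\K + n\la I)C = \Psi$ where $\Psi$ stacks the $\psi(y_i)$, so that evaluating at $x$ produces the weights $\alpha(x)$. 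I would keep this step brief since it is classical and the form already appears in the excerpt.

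Next I would plug $\hat g(x)$ into the decoding. By \eqref{eq:decoding_function}, $\decoding\circ\hat g(x) = \argmin_{y\in\Y}\ip{\psi(y)}{V\hat g(x)}_\HY$. Substituting the closed form and using linearity of $V$ and of the inner product gives
\eqals{
  \ip{\psi(y)}{V\hat g(x)}_\HY = \sum_{i=1}^n \alpha_i(x)\,\ip{\psi(y)}{V\psi(y_i)}_\HY = \sum_{i=1}^n \alpha_i(x)\,\loss(y,y_i),
}
where the final equality is precisely \autoref{assumption:main}, Eq.~\eqref{eq:linearized_loss}. Since this chain of equalities holds pointwise in $y$, the two objectives coincide exactly (not merely up to a $y$-independent constant), so their $\argmin$ sets are identical, which is the claim \eqref{eq:algorithm_with_weights}.

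I do not expect a serious obstacle here: the result is essentially a one-line consequence of \autoref{assumption:main} once the closed form of $\hat g$ is in hand, and the bilinearity that makes \eqref{eq:linearized_loss} ``commute'' with the finite sum is what makes the generalization beyond the KDE case in \eqref{eq:decoding_kde} go through. The only points requiring minor care are that $V$ need not be symmetric or positive — but since I apply it to $\hat g(x)$ (the second slot) and never need to move it across the inner product, asymmetry is irrelevant — and that the $\argmin$ is well defined, which follows from compactness of $\Y$ together with continuity of $\psi$ assumed in \autoref{assumption:main}. I would close by noting this recovers \eqref{eq:decoding_kde} as the special case where $\loss$ comes from a normalized kernel and $V=I$.
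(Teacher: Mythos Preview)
Your proposal is correct and follows essentially the same approach as the paper: obtain the closed form $\hat g(x)=\sum_i\alpha_i(x)\psi(y_i)$ for the ridge regression minimizer, substitute into the decoding $\ip{\psi(y)}{V\hat g(x)}_\HY$, and invoke \autoref{assumption:main} together with linearity to collapse the objective to $\sum_i\alpha_i(x)\loss(y,y_i)$. The only cosmetic difference is that the paper derives the closed form via its operator machinery (\autoref{lemma:empirical_risk_solution} and the identity $\hat S(\hat C+\la)^{-1}\varphi(x)=(\K+n\la I)^{-1}\K_x$) rather than by invoking the representer theorem directly, but the content is the same.
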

\autoref{prop:computable} concludes the derivation of \ref{eq:algorithm}. An interesting observation is that computing $\hat{f}$ does not require explicit knowledge of the embedding $\psi$ and the operator $V$, which are implicitly encoded within the loss $\loss$ by \autoref{assumption:main}. In analogy to the {\em kernel trick}~\cite{scholkopf2002} we informally refer to such assumption as the ``loss trick''. We illustrate this effect with an example.
\begin{example}[Ranking]\label{example:ranking}
In ranking problems the goal is to predict ordered sequences of a fixed number $\ell$ of labels. For these problems, $\Y$ corresponds to the set of all ordered sequences of $\ell$ labels and has cardinality $|\Y|= \ell!$, which is typically dramatically larger than the number $n$ of training examples (e.g. for $\ell=15$, $\ell! \simeq 10^{12}$). Therefore, given an input $x\in\X$, directly computing $\hat{g}(x)\in\R^{|\Y|}$ is impractical. On the opposite, the loss trick allows to express $\decoding \circ \hat{g}(x)$ only in terms of the $n$ weights $\alpha_i(x)$ in \ref{eq:algorithm}, making the computation of the {$\mbox{argmin}$} easier to approach in general. For details on the rank loss $\loss_{rank}$ and the corresponding optimization over $\Y$ we refer to the empirical analysis of \secref{sec:experiments}.
\end{example}
%
%
In this section we have shown a derivation for the structured prediction algorithm proposed in this work. In \autoref{teo:surrogate} we have shown how the expected risk of the proposed estimator $\hat{f}$ is related to an estimator $\hat{g}$ via a comparison inequality. In the following we will make use of these results to prove consistency and generalization bounds for \algref{eq:algorithm}.

%
%
%
\section{Statistical Analysis}\label{sec:analysis}
In this section we study the statistical properties of \ref{eq:algorithm}. In particular we made use of the relation between the structured and surrogate problems via the comparison inequality in \autoref{teo:surrogate}. We begin our analysis by proving that \ref{eq:algorithm} is {\it universally consistent}.
\begin{restatable}[Universal Consistency]{theorem}{Tuniversal}\label{teo:universal_consistency}
Let $\loss:\Y\times\Y\to\R$ satisfy \autoref{assumption:main}, $\X$ and $\Y$ be compact sets and $k:\X\times\X\to\R$ a continuous universal reproducing kernel\footnote{This is a standard assumption for universal consistency (see~\cite{steinwart2008}). An example of continuous universal kernel is the Gaussian $k(x,x') = \exp(-\|x-x'\|^2/\sigma)$.}. For any $n\in\N$ and any distribution $\rho$ on $\X\times\Y$ let $\hat{f}_n:\X\to\Y$ be obtained by \ref{eq:algorithm} with $\{(x_i,y_i)\}_{i=1}^n$ training points independently sampled from $\rho$ and $\lambda_n = n^{-1/4}$. Then,
\eqal{\label{eq:universal_consistency}
\lim_{n \to +\infty} \E(\hat f_n) = \E(f^*)  \qquad \mbox{with probability} ~ 1
}
\end{restatable}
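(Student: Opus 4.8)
The plan is to reduce universal consistency of the structured estimator $\hat f_n$ to the surrogate regression problem via the comparison inequality of \autoref{teo:surrogate}, and then invoke known consistency results for vector-valued kernel ridge regression. Concretely, \autoref{teo:surrogate} gives
\eqal{
  \E(\decoding \circ \hat g_n) - \E(f^*) \leq 2 c_\loss \sqrt{\mathcal{R}(\hat g_n) - \mathcal{R}(g^*)},
}
where $\hat g_n$ is the minimizer of \eqref{eq:kde} and $\hat f_n = \decoding \circ \hat g_n$ by \autoref{prop:computable}. Since the right-hand side is a continuous (indeed monotone) function of the surrogate excess risk vanishing at $0$, it suffices to prove that $\mathcal{R}(\hat g_n) \to \mathcal{R}(g^*)$ with probability one. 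Thus the entire problem is transferred from the combinatorial output space $\Y$ to a standard least-squares regression problem in the Hilbert space $\HY$, for which the statistical machinery is well developed.

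For the surrogate consistency, the first step is to recognize \eqref{eq:kde} as regularized empirical risk minimization for vector-valued regression in the RKHS $\mG$ associated with the scalar kernel $k$ (acting componentwise on $\HY$). I would then appeal to the universal consistency theory for such kernel methods: under the stated hypotheses that $\X,\Y$ are compact, $\psi$ is continuous (hence $\psi(y)$ is bounded on the compact $\Y$, so the regression targets lie in a bounded subset of $\HY$), and $k$ is a continuous universal kernel, the associated RKHS $\mG$ is dense in $\LXH$. Density of $\mG$ together with the boundedness of the outputs guarantees that the approximation error $\inf_{g\in\mG}\mathcal{R}(g) - \mathcal{R}(g^*)$ is zero, so the regularized minimizer can approach the Bayes surrogate risk. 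The choice $\lambda_n = n^{-1/4}$ ensures the regularization vanishes slowly enough that the estimation error also goes to zero. The standard bias-variance decomposition
\eqal{
  \mathcal{R}(\hat g_n) - \mathcal{R}(g^*) = \underbrace{\mathcal{R}(\hat g_n) - \mathcal{R}(g_{\lambda_n})}_{\text{sample error}} + \underbrace{\mathcal{R}(g_{\lambda_n}) - \mathcal{R}(g^*)}_{\text{approximation error}},
}
where $g_{\lambda_n}$ is the population regularized minimizer, then controls both terms: the approximation error tends to $0$ as $\lambda_n \to 0$ by density, and the sample error is controlled by a concentration argument with $\lambda_n$ decaying slowly relative to $n$, yielding almost-sure convergence (via, e.g., a Borel–Cantelli argument once an exponential tail bound on the sample error is established).

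The main obstacle I anticipate is carefully justifying the vector-valued regression consistency in the infinite-dimensional setting. Two technical points need attention. First, the target map $x \mapsto g^*(x) = \int \psi(y)\,d\rho(y|x)$ takes values in the possibly infinite-dimensional $\HY$, so the consistency results must be stated for operator-valued kernels or for $\HY$-valued regression; I would invoke the framework of \cite{caponnetto2007,micchelli2004} and verify their hypotheses (boundedness of the feature map, measurability, separability of $\HY$) hold here. Second, the universality of $k$ must be translated into density of $\mG$ in $\LXH$ rather than merely in the scalar $\LXR$; this requires the observation that a scalar universal kernel induces, through the tensor construction $\mG \cong \HX \otimes \HY$, a space dense in $L^2(\X,\rhox,\HY)$ for any separable $\HY$. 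Once these two facts are in place, the remainder is a routine concentration-plus-Borel–Cantelli argument, and the comparison inequality does the rest of the work to transfer the conclusion back to $\E(\hat f_n) \to \E(f^*)$.
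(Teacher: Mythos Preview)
Your high-level strategy matches the paper's: use the comparison inequality of \autoref{teo:surrogate} to reduce to the surrogate excess risk $\mathcal{R}(\hat g_n) - \mathcal{R}(g^*)$, control this by a bias--variance decomposition, handle the approximation term via universality of $k$, and upgrade a high-probability bound to almost-sure convergence by Borel--Cantelli with $\delta_n$ summable. The paper proceeds exactly along these lines.

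There is, however, a genuine gap in your plan to ``invoke the framework of \cite{caponnetto2007}'' and verify its hypotheses. The operator-valued kernel here is $\G(x,x') = k(x,x')\, I_{\HY}$, and when $\HY$ is infinite-dimensional this operator is \emph{not} trace class, which is a standing assumption in \cite{caponnetto2007}. So verification fails rather than succeeds, and the paper explicitly flags this obstruction. Instead of working with $\hat g_n$ as an element of $\mG$, the paper rewrites the surrogate excess risk as $\|\hat G S^* - Z^*\|_{HS}^2$ (with $\hat G \in \HY \otimes \HX$ the operator representing $\hat g_n$ via \autoref{lemma:hxy_bis}) and then runs a Caponnetto-style three-term decomposition directly at the level of Hilbert--Schmidt operators, applying the concentration inequality of \cite{smale2007learning} to the rank-one random operators $\psi(y_i)\otimes\varphi(x_i)$ and $\varphi(x_i)\otimes\varphi(x_i)$. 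Boundedness of $\psi$ and $\varphi$ suffices to bound these in HS norm, which is what replaces the trace-class hypothesis. For the approximation term $\mathcal{A}(\lambda) = \lambda\,\|Z^*(L+\lambda)^{-1}\|_{HS}$, the paper uses universality of $k$ to show that the eigenfunctions of $L$ span $L^2(\X,\rhox)$ and hence $\mathcal{A}(\lambda_n)\to 0$; this is the concrete incarnation of the density argument you sketch. In short, your outline is right, but you cannot cite the sample-error bound off the shelf: you must reproduce the operator-level concentration analysis that the paper carries out in \autoref{lemma:prob-bound}.
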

\autoref{teo:universal_consistency} shows that, when the $\loss$ satisfies \autoref{assumption:main}, \ref{eq:algorithm} approximates a solution $f^*$ to \eqref{eq:expected_risk_minimization} arbitrarily well, given a sufficient number of training examples. To the best of our knowledge this is the first consistency result for structured prediction in the general setting considered in this work and characterized by \autoref{assumption:main}, in particular for the case of $\Y$ with infinite cardinality (dense or discrete). 

The {\it No Free Lunch} Theorem~\cite{wolpert1996lack} states that it is not possible to prove uniform convergence rates for \eqref{eq:universal_consistency}. However, by imposing suitable assumptions on the regularity of $g^*$ it is possible to prove generalization bounds for $\hat{g}$ and then, using \autoref{teo:surrogate}, extend them to $\hat{f}$. To show this, it is sufficient to require that $g^*$ belongs to $\mathcal{G}$ the reproducing kernel Hilbert space used in the ridge regression of \eqref{eq:kde}. Note that in the proofs of \autoref{teo:universal_consistency} and \autoref{teo:simple_bound}, our analysis on $\hat{g}$ borrows ideas from \cite{caponnetto2007} and extends their result to our setting for the case of $\HY$ infinite dimensional (i.e. when $\Y$ has infinite cardinality). Indeed, note that in this case \cite{caponnetto2007} cannot be applied to the estimator $\hat{g}$ considered in this work (see Appendix \autoref{sec:appendix_bounds}, \autoref{lemma:prob-bound} for details).
\begin{restatable}[Generalization Bound]{theorem}{Tsimplebound}\label{teo:simple_bound}
Let $\loss:\Y\times\Y\to\R$ satisfy \autoref{assumption:main}, $\Y$ be a compact set and $k:\X\times\X\to\R$ a bounded continuous reproducing kernel. Let $\hat{f}_n$ denote the solution of \ref{eq:algorithm} with $n$ training points and $\la = n^{-1/2}$. If the surrogate risk $\mathcal{R}$ defined in \eqref{eq:surrogate_ls} admits a minimizer $g^*\in\mG$, then
\eqal{
\E(\hat f_n) - \E(f^*)  \leq c \tau^2 ~ n^{-\frac{1}{4}}
}
holds with probability $1 - 8e^{-\tau}$ for any $\tau > 0$, with $c$ a constant not depending on $n$ and $\tau$.
\end{restatable}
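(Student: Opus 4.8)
The plan is to reduce the structured excess risk to a least-squares surrogate excess risk using the machinery already established, and then to prove a finite-sample rate for kernel ridge regression in the (possibly infinite-dimensional) output space $\HY$. First, by \autoref{prop:computable} the estimator $\hat f_n$ produced by \ref{eq:algorithm} coincides with $\decoding\circ\hat g_n$, where $\hat g_n$ is the minimizer of the empirical surrogate problem \eqref{eq:kde}; hence \autoref{teo:surrogate} applies and gives
\eqal{
\E(\hat f_n)-\E(f^*)\;\leq\; 2\,c_\loss\sqrt{\mathcal{R}(\hat g_n)-\mathcal{R}(g^*)},
}
so it suffices to control the surrogate excess risk with high probability. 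The key simplification is that, $\mathcal{R}$ being a least-squares risk, its minimizer is the conditional mean $g^*(x)=\EE_{y\sim\rho(\cdot|x)}[\psi(y)]$ and a standard bias--variance computation yields $\mathcal{R}(g)-\mathcal{R}(g^*)=\|g-g^*\|_{\LXH}^2$ for every measurable $g$. Thus everything reduces to bounding $\|\hat g_n-g^*\|_{\LXH}$, and to obtain the stated $n^{-1/4}$ rate for $\E(\hat f_n)-\E(f^*)$ it is enough to show $\|\hat g_n-g^*\|_{\LXH}^2\lesssim\tau^4 n^{-1/2}$ with probability $1-8e^{-\tau}$.

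Second, I would set up the operator-theoretic description of ridge regression. Writing $\Phi_x$ for the canonical feature map of $k$ (so $k(x,x')=\ip{\Phi_x}{\Phi_{x'}}_\HX$) and identifying the vector-valued space $\mG$ with $\HY\otimes\HX$, introduce the covariance operator $T=\int\Phi_x\otimes\Phi_x\,d\rhox$ and the cross-covariance $z=\int\Phi_x\otimes\psi(y)\,d\rho$, together with their empirical counterparts $\hat T,\hat z$. Then $\hat g_n=(\hat T+\la)^{-1}\hat z$, and the natural comparison point is the population-regularized solution $g_\la=(T+\la)^{-1}z$. Splitting
\eqal{
\|\hat g_n-g^*\|_{\LXH}\;\leq\;\|\hat g_n-g_\la\|_{\LXH}+\|g_\la-g^*\|_{\LXH},
}
the second (deterministic, approximation) term is controlled using the hypothesis $g^*\in\mG$: since $g_\la-g^*=-\la(T+\la)^{-1}g^*$ and $\|T^{1/2}(T+\la)^{-1}\|\leq(2\sqrt{\la})^{-1}$, one gets $\|g_\la-g^*\|_{\LXH}\leq\tfrac12\sqrt{\la}\,\|g^*\|_\mG$, which with $\la=n^{-1/2}$ is of order $n^{-1/4}$.

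Third, the probabilistic (sample) term $\|\hat g_n-g_\la\|_{\LXH}$ is handled by concentration. Factoring $\hat g_n-g_\la$ through $(\hat T+\la)^{-1}$ reduces the estimate to controlling (i) the deviation of the empirical covariance, through the quantity $\|(\hat T+\la)^{-1}(T+\la)\|$, and (ii) the deviation of the residual $\hat z-\hat T g_\la$ (equivalently $\hat z-z$ and $(\hat T-T)g_\la$), each via a Bernstein inequality for $\HY\otimes\HX$-valued averages. Because these two estimates enter multiplicatively, each contributing a $\log(1/\delta)\sim\tau$ factor, their product produces the $\tau^2$ dependence, giving $\|\hat g_n-g_\la\|_{\LXH}\lesssim\tau^2(\sqrt{\la\,n})^{-1}\lesssim\tau^2 n^{-1/4}$ for $\la=n^{-1/2}$. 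Combining the two terms, squaring, and feeding the result back into the comparison inequality of the first step yields $\E(\hat f_n)-\E(f^*)\lesssim\tau^2 n^{-1/4}$, as claimed.

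The main obstacle is the third step in the infinite-dimensional output setting: the boundedness of $\psi$ (guaranteed by compactness of $\Y$ and continuity of $\psi$) must be leveraged so that the Bernstein-type concentration constants are genuinely dimension-free in $\HY$ and the argument does not degrade when $\Y$, and hence $\HY$, is infinite. This is precisely the point where the analysis departs from the scalar-output estimator of \cite{caponnetto2007}, which cannot be applied directly, and where the operator concentration for $\HY\otimes\HX$-valued variables (the content of \autoref{lemma:prob-bound}) must be invoked with care.
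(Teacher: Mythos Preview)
Your proposal is correct and follows essentially the same route as the paper: reduce via the comparison inequality of \autoref{teo:surrogate} to the $L^2$ distance $\|\hat g_n-g^*\|_{\LXH}$, then split into an approximation term (controlled by $g^*\in\mG$, giving order $\sqrt{\la}$) and a sample term (controlled by Bernstein-type concentration for the empirical covariance and cross-covariance in $\HY\otimes\HX$, each contributing one $\log(1/\delta)\sim\tau$ factor, hence the $\tau^2$). The only cosmetic difference is the algebra of the sample-error split---your $g_\la$-centered two-term decomposition versus the paper's three-term split $A_1+A_2+A_3$ in \autoref{lemma:prob-bound} (with $A_3=\mathcal{A}(\la)$ the approximation piece)---but the ingredients, concentration steps, and resulting rates are identical.
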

The bound in \eqref{teo:simple_bound} is of the same order of the generalization bounds available for the least squares binary classifier\cite{yao2007early}. Indeed, in \secref{sec:connections} we show that in classification settings \ref{eq:algorithm} reduces to least squares classification.
\begin{remark}[Better Comparison Inequality]\label{rem:comparison}
The generalization bounds for the least squares classifier can be improved by imposing regularity conditions on $\rho$ via the {\it Tsybakov condition} \cite{yao2007early}. This was observed in \cite{yao2007early} for binary classification with the least squares surrogate, where a tighter comparison inequality than the one in \autoref{teo:surrogate} was proved. Therefore, a natural question is whether the inequality of \autoref{teo:surrogate} could be similarly improved, consequently leading to better rates for \autoref{teo:simple_bound}. Promising results in this direction can be found in \cite{mroueh2012}, where the Tsybakov condition was generalized to the multi-class setting and led to a tight comparison inequality analogous to the one for the binary setting. However, this question deserves further investigation. Indeed, it is not clear how the approach in \cite{mroueh2012} could be further generalized to the case where $\Y$ has infinite cardinality.
\end{remark}
\begin{remark}[Other Surrogate Frameworks]
In this paper we focused on a least squares surrogate loss function and corresponding framework. A natural question is to ask whether other loss functions could be considered to approach the structured prediction problem, sharing the same or possibly even better properties. This question is related also to \autoref{rem:comparison}, since different surrogate frameworks could lead to sharper comparison inequalities. This seems an interesting direction for future work.
\end{remark}
%
%
\section{Connection with Previous Work}\label{sec:connections}
In this section we draw connections between \ref{eq:algorithm} and previous methods for structured prediction learning. 
\paragraph{Binary and Multi-class Classification.}
It is interesting to note that in classification settings, \ref{eq:algorithm} corresponds to the least squares classifier~\cite{yao2007early}. Indeed, let $\Y = \{1,\dots,\ell\}$ be a set of labels and consider the {\it misclassification} loss $\loss(y,y') = 1$ for $y\neq y'$ and $0$ otherwise. Then $\loss(y,y') = e_y^\top V e_{y'}$ with $e_i\in\R^\ell$ the $i$-the element of the canonical basis of $\R^\ell$ and $V = \mathbf{1} - I$, where $I$ is the $\ell \times \ell$ identity matrix and $\mathbf{1}$ the matrix with all entries equal to $1$. In the notation of surrogate methods adopted in this work, $\HY = \R^\ell$ and $\psi(y) = e_y$. Note that both Least squares classification and our approach solve the surrogate problem at \eqref{eq:kde}
\eqal{
  \frac{1}{n} \sum_{i=1}^n \|g(x_i) - e_{y_i}\|_{\R^T}^2 + \lambda ~ \|g\|_\mG^2
}
to obtain a vector-valued predictor $\hat{g}:\X\to\R^T$ as in \eqref{eq:kde_solution}. Then, the least squares classifier $\hat{c}$ and the decoding $\hat{f} = \decoding \circ \hat{g}$ are respectively obtained by
\eqals{
  \hat{c}(x) = \argmax_{i=1,\dots,T} \hat{g}(x) \qquad\qquad\qquad \hat{f}(x) = \argmin_{i=1,\dots,T} V \hat{g}(x).
} 
However, since $V = \mathbf{1} - I$, it is easy to see that $\hat{c}(x) = \hat{f}(x)$ for all $x\in\X$.
\paragraph{Kernel Dependency Estimation.}
In \autoref{sec:statement&algorithm} we discussed the relation between KDE~\cite{weston2002,cortes2005} and \ref{eq:algorithm}. In particular, we have observed that if $\loss$ is induced by a kernel $h:\Y\times\Y\to\R$ as in \eqref{eq:ker_asm} and $h$ is normalized, i.e. $h(y,y) = 1 ~ \forall y\in\Y$, then algorithm \eqref{eq:decoding_kde} proposed in \cite{cortes2005} leads to the same predictor as \ref{eq:algorithm}. Therefore, we can apply \autoref{teo:universal_consistency} and \ref{teo:simple_bound} to prove universal consistency and generalization bounds for methods such as \cite{cortes2005,kadri2013}. We are not aware of previous results proving consistency (and generalization bounds) for the KDE methods in \cite{cortes2005,kadri2013}. Note however that  when the kernel $h$ is not normalized, the ``decoding'' in \eqref{eq:decoding_kde} is not equivalent to \ref{eq:algorithm}. In particular, given the surrogate solution $g^*$, applying \eqref{eq:decoding_kde} leads to predictors that are do not minimize \eqref{eq:expected_risk_minimization}. As a consequence the approach in \cite{cortes2005} is not consistent in the general case.
\paragraph{Support Vector Machines for Structured Output.}
A popular approach to structured prediction is the {\it Support Vector Machine for Structured Outputs (SVMstruct)}~\cite{tsochantaridis2005} that extends ideas from the well-known SVM algorithm to the structured setting. One of the main advantages of SVMstruct is that it can be applied to a variety of problems since it does not impose strong assumptions on the loss. In this view, our approach, as well as KDE, shares similar properties, and in particular allows to consider $\Y$ of infinite cardinality. Moreover, we note that generalization studies for SVMstruct are available \cite{bakir2007} (Ch. $11$). However, it seems that these latter results do not allow to derive universal consistency of the method.
\section{Experiments}\label{sec:experiments}
\begin{figure}[t]
\begin{minipage}{\textwidth}
\begin{minipage}{0.4\textwidth}
\begin{table}[H]
\footnotesize
\begin{center}
\begin{tabular}{rc}
\toprule
                                                      & {\bf Rank Loss} \\
\midrule
{\bf Linear}~\cite{duchi2010}                         & $0.430 \pm 0.004$  \\
{\bf Hinge}~\cite{herbrich1999}                       & $0.432 \pm 0.008$  \\
{\bf Logistic}~\cite{dekel2004}                       & $0.432 \pm 0.012$  \\
{\bf SVM Struct}~\cite{tsochantaridis2005}            & $0.451 \pm 0.008$ \\
{\bf \ref{eq:algorithm}}              & $\mathbf{0.396 \pm 0.003}$ \\
\bottomrule
\hline
\end{tabular}
\end{center}
\caption{Comparison of ranking methods on the MovieLens dataset~\cite{harper2015} with the normalized rank loss $\loss_{rank}$.}
\label{tab:ranking}
\end{table}
\end{minipage} \quad~ ~
\begin{minipage}{0.55\textwidth}
\begin{table}[H]
\footnotesize
\begin{center}
\begin{tabular}{rcc}
\toprule
{\bf Loss}      & \shortstack{\bf KDE~\cite{weston2002} \\ (Gaussian)}   & \shortstack{\bf \ref{eq:algorithm} \\ (Hellinger)}\\
\midrule
{\bf $\loss_G$}      & $\mathbf{0.149 \pm 0.013}$         & $0.172 \pm 0.011$ \\
{\bf $\loss_H$}      & $0.736 \pm 0.032$                  & $\mathbf{0.647 \pm 0.017}$ \\
{\bf $\loss_R$}      & $0.294 \pm 0.012$                  & $\mathbf{0.193 \pm 0.015}$ \\
\bottomrule
\hline
\end{tabular}
\end{center}
\caption{Digit reconstruction on USPS with KDE method~\cite{weston2002} (with Gaussian loss) and \ref{eq:algorithm} with squared Hellinger distances. Performance measured with Gaussian, Hellinger and Reconstruction loss (see text).
\label{tab:hellinger}}
\end{table}
\end{minipage}
\end{minipage}
\end{figure}
\paragraph{Ranking Movies.} 
We considered the problem of ranking movies in the MovieLens dataset \cite{harper2015} (ratings (from $1$ to $5$) of $1682$ movies by $943$ users). The goal was to predict preferences of a given user, i.e. an ordering of the $1682$ movies, according to the user's partial ratings. Note that, as observed in \exref{example:ranking}, in ranking problems the output set $\Y$ is the collection of all ordered sequences of a predefined length. Therefore, $\Y$ is finite (albeit extremely large) and we can apply \ref{eq:algorithm}. 

We applied \ref{eq:algorithm} to the ranking problem using the {\it rank loss} \cite{duchi2010} 
\eqal{
  \loss_{rank}(y,y') = \sum_{i,j=1}^M \gamma(y')_{ij} \ (1 - {\rm sign}(y_i - y_j))/2,
}
with $M$ is the number of movies in the database, $y\in\Y$ a vector of the $M$ integers $y_i\in\{1,\dots,M\}$ without repetition, where $y_i$ corresponding to the rank assigned by $y$ to movie $i$. In the definition of $\loss_{rank}$, $\gamma(y)_{ij}$ denotes the costs (or reward) of having movie $j$ ranked higher than movie $i$ and, similarly to \cite{duchi2010}, we set $\gamma(y)_{ij}$ equal to the difference of ratings provided by user associated to $y$ (from $1$ to $5$). We chose as $k$ in \ref{eq:algorithm}, a linear kernel on features similar to those proposed in \cite{duchi2010}, which were computed based on users' profession, age, similarity of previous ratings, etc. Since solving \ref{eq:algorithm} for $\loss_{rank}$ is NP-hard (see \cite{duchi2010}) we adopted the {\it Feedback Arc Set approximation (FAS)} proposed in \cite{eades1993} to approximate the $\hat{f}(x)$ of \ref{eq:algorithm}. Results are reported in \tabref{tab:ranking} comparing \ref{eq:algorithm} (Ours) with surrogate ranking methods using a Linear~\cite{duchi2010}, Hinge~\cite{herbrich1999} or Logistic~\cite{dekel2004} loss and  Struct SVM~\cite{tsochantaridis2005}\footnote{implementation from {\small\url{http://svmlight.joachims.org/svm_struct.html}}}. We randomly sampled $n = 643$ users for training and tested on the remaining $300$. We performed $5$-fold cross-validation for model selection. We report the normalized $\loss_{rank}$, averaged over $10$ trials to account for statistical variability. Interestingly, our approach appears to outperform all competitors, suggesting that \ref{eq:algorithm} is a viable approach to ranking.
\paragraph{Image Reconstruction with Hellinger Distance.} We considered the USPS\footnote{\small\url{http://www.cs.nyu.edu/~roweis/data.html}} digits reconstruction experiment originally proposed in \cite{weston2002}. The goal is to predict the lower half of an image depicting a digit, given the upper half of the same image in input. The standard approach is to use a Gaussian kernel $k_G$ on images in input and adopt KDE methods such as \cite{weston2002,cortes2005,kadri2013} with loss $\loss_{G}(y,y') = 1 - k_G(y,y')$. Here we take a different approach and, following \cite{cuturi2013sinkhorn}, we interpret an image depicting a digit as an histogram and normalize it to sum up to $1$. Therefore, $\Y$ becomes is the unit simplex in $\R^{128}$ ($16 \times 16$ images) and we adopt the squared Hellinger distance $\loss_H$ 
\eqals{
  \loss_H(y,y') = \sum_{i=1}^n (\sqrt{y_i} - \sqrt{y_i}')^2 \qquad \mbox{for ~~~~} y = (y_i)_{i=1}^M
}
to measure distances on $\Y$. We used the kernel $k_G$ on the input space and compared \ref{eq:algorithm} using respectively $\loss_H$ and $\loss_G$. For $\loss_G$ \ref{eq:algorithm} correpsponds to \cite{cortes2005}. We performed digit reconstruction experiments by training on $1000$ examples evenly distributed among the $10$ digits of USPS and tested on $5000$ images. We performed $5$-fold cross-validation for model selection. \tabref{tab:hellinger} reports the performance of \ref{eq:algorithm} and the KDE methods averaged over $10$ runs. Performance are reported according to the Gaussian loss $\loss_{G}$ Hellinger loss $\loss_{H}$. Unsurprisingly, methods trained with respect to a specific loss perform better than the competitor with respect to such loss. Therefore, as a further measure of performance we also introduced the ``Recognition'' loss $\loss_R$. This loss has to be intended as a measure of how ``well'' a predictor was able to correctly reconstruct an image for digit recognition purposes. To this end, we trained an automatic digit classifier and defined $\loss_R$ to be the misclassification error of such classifier when tested on images reconstructed by the two prediction algorithms. This automatic classifier was trained using a standard SVM \cite{scholkopf2002} (with {\sc libSVM}\footnote{\small\url{https://www.csie.ntu.edu.tw/~cjlin/libsvm/}}) on a separate subset of USPS images and achieved an average $~0.04\%$ error rate on the true $5000$ test sets. In this case a clear difference in performance can be observed between using two different loss functions, suggesting that $\loss_H$ is more suited to the reconstruction problem.
\begin{figure}[t]
\CenterFloatBoxes
\begin{floatrow}
\begin{minipage}[t]{0.4\textwidth}
\ffigbox{%
\hspace*{-0.35\textwidth}
\includegraphics[width=1\textwidth]{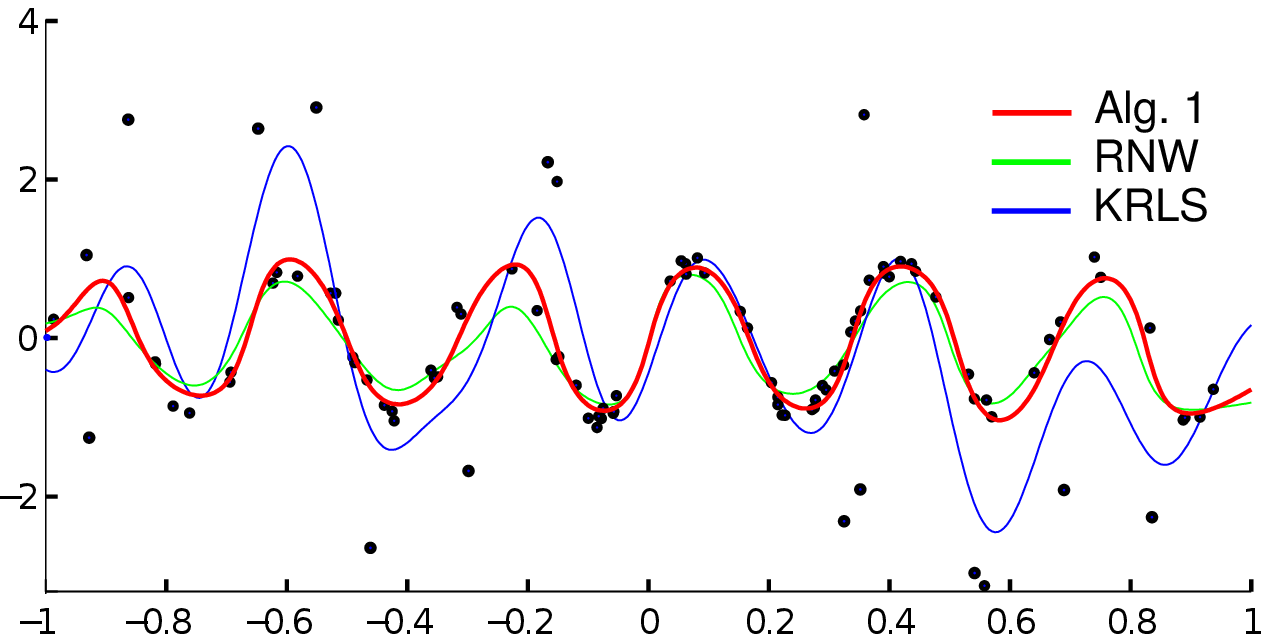}%
}{}
\end{minipage}
\begin{minipage}[t]{0.4\textwidth}
\footnotesize
\begin{tabular}{rccc}
\toprule
{\bf n} & {\bf \ref{eq:algorithm}} & {\bf RNW} & {\bf KRR} \\
\midrule
{\bf 50} & {\bf0.39} $\pm$ 0.17 & 0.45 $\pm$ 0.18 & 0.62 $\pm$ 0.13\\
{\bf 100} & {\bf0.21} $\pm$ 0.04 & 0.29 $\pm$ 0.04 & 0.47 $\pm$ 0.09\\
{\bf 200} & {\bf0.12} $\pm$ 0.02 & 0.24 $\pm$ 0.03 & 0.33 $\pm$ 0.04 \\
{\bf 500} & {\bf0.08} $\pm$ 0.01 & 0.22 $\pm$ 0.02 & 0.31 $\pm$ 0.03\\
{\bf 1000} & {\bf0.07} $\pm$ 0.01 & 0.21 $\pm$ 0.02 & 0.19 $\pm$ 0.02\\
\bottomrule
\hline
\end{tabular}
\end{minipage}
\caption{Robust estimation on the regression problem in \secref{sec:experiments} by minimizing the Cauchy loss with \ref{eq:algorithm} (Ours) or Nadaraya-Watson (Nad). KRLS as a baseline predictor. {\bf Left}. Example of one run of the algorithms. {\bf Right}. Average distance of the predictors to the actual function (without noise and outliers) over $100$ runs with respect to training sets of increasing dimension.\label{fig:robust}}
\end{floatrow}
\end{figure}
\paragraph{Robust Estimation.}
We considered a regression problems with many outliers and evaluated \ref{eq:algorithm} using the Cauchy loss (see \exref{example:general} - (3)) for robust estimation. Indeed, in this setting, $\Y = [-M,M]\subset\R$ is not structured, but the non-convexity of $\loss$ can be an obstacle to the learning process. We generated a dataset according to the model $y =  \sin(6\pi x) + \epsilon + \zeta$, where $x$ was sampled uniformly on $[-1,1]$ and $\epsilon$ according to a zero-mean Gaussian with variance $0.1$. $\zeta$ modeled the outliers and was sampled according to a zero-mean random variable that was $0$ with probability $0.90$ and a value uniformly at random in $[-3,3]$ with probability $0.1$). We compared \ref{eq:algorithm} with the {\it Nadaraya-Watson} robust estimator (RNW)~\cite{hardle1984robust} and kernel ridge regression (KRR) with a Gaussian kernel as baseline. To train \ref{eq:algorithm} we used a Gaussian kernel on the input and performed predictions (i.e. solved \eqref{eq:algorithm_with_weights}) using Matlab {\sc fminunc} function for unconstrained minimization. Experiments were performed with training sets of increasing dimension ($100$ repetitions each) and test set of $1000$ examples. $5$-fold cross-validation for model selection. Results are reported in \figref{fig:robust}, showing that our estimator significantly outperforms the others. Moreover, our method appears to greatly benefit from training sets of increasing size.
\section{Conclusions and Future Work}\label{sec:conclusions}
In this work we considered the problem of structured prediction from a Statistical Learning Theory perspective. We proposed a learning algorithm for structured prediction that is split into a learning and prediction step similarly to previous methods in the literature. We studied the statistical properties of the proposed algorithm by adopting a strategy inspired to surrogate methods. In particular, we identified a large family of loss functions for which it is natural to identify a corresponding surrogate problem. This perspective allows to prove a derivation of the algorithm proposed in this work. Moreover, by exploiting a comparison inequality relating the original and surrogate problems we were able to prove universal consistency and generalization bounds under mild assumption. In particular, the bounds proved in this work recover those already known for least squares classification, of which our approach can be seen as a generalization. We supported our theoretical analysis with experiments showing promising results on a variety of structured prediction problems. 

A few questions were left opened. First, we ask whether the comparison inequality can be improved (under suitable hypotheses) to obtain faster generalization bounds for our algorithm. Second, the surrogate problem in our work consists of a vector-valued regression (in a possibly infinite dimensional Hilbert space), we solved this problem by plain kernel ridge regression but it is natural to ask whether approaches from the multi-task learning literature could lead to substantial improvements in this setting. Finally, an interesting question is whether alternative surrogate frameworks could be derived for the setting considered in this work, possibly leading to tighter comparison inequalities. We will investigate these questions in the future.
{\small
\bibliographystyle{natbib}
\putbib[biblio]
}
\end{bibunit}

\newpage

\appendix

\begin{bibunit}[unsrt]

\section*{Appendix}

The Appendix of this work is divided in the following three sections:
\begin{enumerate}[A]
\item Proofs of Fisher consistency and comparison inequality (\autoref{teo:surrogate}).
\item Universal Consistency and Generalization Bounds for \ref{eq:algorithm}. (\autoref{teo:universal_consistency} and \ref{teo:simple_bound}).
\item The characterization of a large family of $\loss$s satisfying \autoref{assumption:main} (\autoref{teo:taxonomy}).
\end{enumerate}

\section*{Mathematical Setting}

In the following we will always assume $\X$ and $\Y$ to be Polish spaces, namely separable complete metrizable spaces, equipped with the associated Borel sigma-algebra. When referring to a probability distribution $\rho$  on $\X \times \Y$ we will always assume it to be a Borel probability measure, with $\rhox$ the marginal distribution on $\X$ and $\rho(\cdot|x)$ the conditional measure on $\Y$ given $x \in \X$. We recall \cite{dudley2002real} that $\rho(y|x)$ is a regular conditional distribution and its domain, which we will denote $\domrho$ in the following, is a measurable set contained in the support of $\rhox$ and corresponds to the support of $\rhox$ up to a set of measure zero. 

For convenience, we recall here the main assumption of our work.

\Amain*

\paragraph{Basic notation}

We recall that a Hilbert space $\H$ is a vector space with inner product $\ip{\cdot}{\cdot}_\H$, closed with respect to the norm $\|h\|_\H = \sqrt{\ip{h}{h}_\H}$ for any $h\in\H$. We denote with $L^2(\X,\rhox,\H)$ the Lebesgue space of square integrable functions on $\X$ with respect to a measure $\rhox$ and with values in a separable Hilbert space $\H$. We denote with $\ip{f}{g}_{\rhox}$ the inner product $\int \ip{f(x)}{g(x)}_\H d\rhox(x)$, for all $f, g \in L^2(\X,\rhox,\H)$. In particular when $\H = \R$ we denote with $\LX$ the space $\LXR$.

Given a linear operator $V:\H\to\H'$ between two Hilbert spaces $\H,\H'$, we denote with $\tr(V)$ the trace of $V$ and with $V^*:\H'\to\H$ the adjoint operator associated to $V$, namely such that $\ip{Vh}{h'}_\H' = \ip{h}{V^*h'}_{\H'}$ for every $h\in\H$, $h'\in\H'$. Moreover, we denote with $\|V\| = \sup_{\|h\|_\H \leq 1} \|Vh\|_{\H'}$ and $\|V\|_{HS}= \sqrt{\tr(V^*V)}$ respectively the operator norm and Hilbert-Schmidt norm of $V$. We recall that a linear operator $V$ is continuous if and only if $\|V\| < +\infty$ and we denote $\B(\H,\H')$ the set of all continuous linear operators from $\H$ to $\H'$. Moreover, we denote $\HS(\H,\H')$ the set of all operators $V:\H\to\H'$ with $\|V\|_{HS} < +\infty$ and recall that $\HS(\H,\H')$ is isometric to the space $\H' \otimes \H$, with $\otimes$ denoting the tensor product. Indeed, for the sake of simplicity, with some abuse of notation we will not make the distinction between the two spaces.

Note that in most of our results we will require $\Y$ to be non-empty and compact, so that a continuous functional over $\Y$ always attains a minimizer on $\Y$ and therefore the operator $\argmin_{y\in\Y}$ is well defined. Note that for a finite set $\Y$, we will always assume it endowed with the discrete topology, so that $\Y$ is compact and any function $\loss:\Y\times\Y\to\R$ is continuous. 

\paragraph{On the Argmin} Notice that for simplicity of notation, in the paper we denoted the minimizer of \ref{eq:algorithm} as
\eqals{
  \hat{f}(x) = \argmin_{y\in\Y} \sum_{i=1}^n \alpha_i(x) \loss(y,y_i).
}
However note that the correct notation should be
\eqals{
  \hat{f}(x) \in \argmin_{y\in\Y} \sum_{i=1}^n \alpha_i(x) \loss(y,y_i)
}
since a loss function $\loss$ can have more than one minimizer in general. In the following we keep this more pedantic, yet correct notation.

\paragraph{Expected Risk Minimization} Note that whenever we write an expected risk minimization problem, we implicitly assume the optimization domain to be the space of measurable functions. For instance, \eqref{eq:expected_risk_minimization} would be written more rigorously as
\eqal{
    \mbox{minimize } \left\{\int_{\X\times\Y} \loss(f(x),y) \ d\rho(x,y) \ \left| \ f:\X\to\Y \mbox{ measurable} \right. \right\}
}

In the next Lemma, following \cite{steinwart2008} we show that the problem in \eqref{eq:expected_risk_minimization} admits a measurable pointwise minimizer.

\begin{lemma}[Existence of a solution for \eqref{eq:expected_risk_minimization}]\label{lemma:solution-structured-risk}
Let $\loss:\Y\times\Y \to \R$ be a continuous function. Then, the expected risk minimization at \eqref{eq:expected_risk_minimization} admits a measurable minimizer $f^*: \X \to \Y$ such that
\eqal{\label{eq:solution_expected_risk}
  f^*(x) \in \argmin_{y\in\Y} ~ \int_{\Y} \loss(y,y') d\rho(y'|x)
}
for every $x\in \domrho$. Moreover, the function $m: \X \to \R$ defined as follows, is measurable
\eqals{
m(x) = \inf_{y \in \Y} r(x,y), \qquad \mbox{with} \qquad r(x,y) = \left\{ \begin{array}{cc} 
                      \int_\Y \loss(y,y')d\rho(y'|x)  & \mbox{if } x \in \domrho \\
                      0                               & \mbox{otherwise}
   \end{array} \right.
}
\end{lemma}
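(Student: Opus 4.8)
The plan is to reduce the functional minimization of $\E$ to a pointwise minimization of the inner risk $r(x,\cdot)$, and then to extract a measurable selector from the pointwise minimizers. First I would note that $\Y$ compact and $\loss$ continuous make $\loss$ bounded, so $r(x,y)$ is finite for every $x\in\domrho$. Using the disintegration of $\rho$ into its marginal $\rhox$ and the regular conditional distribution $\rho(\cdot|x)$, together with the joint measurability of $(x,y')\mapsto\loss(f(x),y')$ for measurable $f$ (composition of a measurable $f$ with the Borel function $\loss$), Fubini's theorem (valid since $\loss$ is bounded) gives
\[
  \E(f) = \int_\X r(x,f(x))\, d\rhox(x)
\]
for every measurable $f:\X\to\Y$. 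Hence any measurable $f^*$ with $f^*(x)\in\argmin_{y\in\Y} r(x,y)$ for every $x\in\domrho$ (a set of full $\rhox$-mass) satisfies $\E(f^*)\le\E(f)$ for all measurable $f$, which is the claimed minimality. It then remains to construct such a measurable selector and to show that $m=\inf_{y\in\Y} r(\cdot,y)$ is measurable.

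The key technical step is to verify that $r$ is a \emph{Carath\'eodory} function on $\domrho\times\Y$: measurable in $x$ for each fixed $y$, and continuous in $y$ for each fixed $x$. Continuity in $y$ follows from dominated convergence: if $y_n\to y$ then $\loss(y_n,\cdot)\to\loss(y,\cdot)$ pointwise and is dominated by $\sup|\loss|$, so $r(x,y_n)\to r(x,y)$. Measurability in $x$ is precisely the statement that $x\mapsto\int_\Y\loss(y,y')\,d\rho(y'|x)$ is measurable for the bounded Borel integrand $\loss(y,\cdot)$, which holds because $\rho(\cdot|x)$ is a regular conditional distribution, i.e.\ a probability (Markov) kernel; this is a standard property of such kernels that I would invoke from \cite{dudley2002real}.

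For the measurability of $m$ I would exploit that $\Y$ is Polish, hence separable: fixing a countable dense $D\subseteq\Y$ and using continuity of $r(x,\cdot)$ yields $m(x)=\inf_{y\in\Y}r(x,y)=\inf_{y\in D}r(x,y)$ on $\domrho$, a countable infimum of measurable functions and therefore measurable; extending by $0$ off the measurable set $\domrho$ gives the function $m$ in the statement. Finally, existence of a measurable minimizer is a measurable-selection statement for the correspondence $x\mapsto\{y:r(x,y)=m(x)\}$, which is nonempty (compactness of $\Y$ and continuity of $r(x,\cdot)$) and closed-valued. I would produce the selector via a standard selection/measurable-maximum theorem for Carath\'eodory functions into a Polish target---e.g.\ the Kuratowski--Ryll-Nardzewski theorem---exactly as in the corresponding argument of \cite{steinwart2008}, obtaining a measurable $f^*:\domrho\to\Y$ with $f^*(x)\in\argmin_{y\in\Y} r(x,y)$. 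Extending $f^*$ by an arbitrary fixed $y_0\in\Y$ on $\X\setminus\domrho$ keeps it measurable and, since $\rhox(\X\setminus\domrho)=0$, leaves $\E(f^*)$ unchanged.

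The main obstacle is the measurable selection itself: the continuity in $y$ and measurability in $x$ are individually routine, but converting pointwise existence of minimizers into a single measurable map $f^*$ genuinely needs the Carath\'eodory structure combined with a selection theorem, and it is here that both the topological regularity of $\Y$ (Polish and compact) and the Markov-kernel property of $\rho(\cdot|x)$ are essential.
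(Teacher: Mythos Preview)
Your proposal is correct and follows essentially the same route as the paper: verify that $r$ is a Carath\'eodory function on $\domrho\times\Y$ (continuous in $y$, measurable in $x$), then invoke a measurable-selection result to obtain both the measurability of $m$ and a measurable minimizer $f^*$, and finally conclude minimality of $\E(f^*)$ from the pointwise inequality $m(x)\le r(x,f(x))$. The only cosmetic difference is that the paper packages the measurability of $m$ and the existence of the selector into a single citation of the measurable maximum theorem (Theorem~18.19 in \cite{aliprantis2006}, equivalently Aumann's selection principle), whereas you argue the measurability of $m$ separately via a countable dense subset and then appeal to Kuratowski--Ryll-Nardzewski for the selector; these are the same argument unpacked to different depths.
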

\begin{proof}
Since $\loss$ is continuous and $\rho(y|x)$ is a regular conditional distribution, then $r$ is a Carath\'{e}odory function (see Definition $4.50$ (pp. $153$) of \cite{aliprantis2006}), namely continuous in $y$ for each $x\in\X$ and measurable in $x$ for each $y\in\Y$. Thus, by Theorem $18.19$ (pp. $605$) of \cite{aliprantis2006} (or Aumann's measurable selection principle \cite{steinwart2008,castaing2006}), we have that $m$ is measurable and that there exists a measurable $f^*:\X\to\Y$ such that $r(x, f^*(x)) = m(x)$ for all $x\in\X$. Moreover, by definition of $m$, given any measurable $f: \X \to \Y$, we have $m(x) \leq r(x, f(x))$. Therefore,
\eqals{
\E(f^*) = \int r(x, f^*(x))d\rhox(x) = \int m(x) d\rhox(x) \leq \int r(x, f(x)) d\rhox(x) = \E(f).
}
We conclude $\E(f^*) \leq \inf_{f:\X \to \Y} \E(f)$ and, since $f^*$ is measurable, $\E(f^*) = \min_{f:\X\to\Y} \E(f)$ and $f^*$ is a global minimizer.
\end{proof}

We have an immediate Corollary to \autoref{lemma:solution-structured-risk}.

\begin{corollary}\label{cor:solution-surrogate-risk}
With the hypotheses of \autoref{lemma:solution-structured-risk}, let $\tilde{f}:\X\to\Y$ such that 
$$\tilde{f}(x) \in \argmin_{y\in\Y} ~ \int_{\Y} \loss(y,y') d\rho(y'|x)$$
 for almost every $x \in \domrho$. Then $\E(\tilde{f}) = \inf_{f:\X\to\Y} \E(f)$.
\end{corollary}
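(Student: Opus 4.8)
The plan is to reduce everything to \autoref{lemma:solution-structured-risk} by exploiting the fact that the expected risk of any measurable $f$ depends only on the values of $f$ on $\domrho$ up to $\rhox$-null sets, combined with the observation that $\tilde{f}$ agrees with an everywhere-minimizer $\rhox$-almost everywhere. First I would disintegrate the risk: using that $\rho(\cdot|x)$ is a regular conditional distribution, one writes $\E(\tilde{f}) = \int_\X r(x,\tilde{f}(x)) \, d\rhox(x)$, where $r$ is the function introduced in \autoref{lemma:solution-structured-risk}. The contribution of $\X\setminus\domrho$ to this integral vanishes because $\domrho$ coincides with $\supprhox$ up to a set of $\rhox$-measure zero, so the integral over $\X$ equals the integral over $\domrho$.

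For this first step to be well posed I must check that $x\mapsto r(x,\tilde{f}(x))$ is measurable. This follows from the fact, already established in the proof of \autoref{lemma:solution-structured-risk}, that $r$ is a Carath\'eodory function (continuous in $y$, measurable in $x$), together with the measurability of $\tilde{f}$; the composition of a Carath\'eodory function with a measurable map is measurable by the standard superposition result (Lemma $4.51$ of \cite{aliprantis2006}). Next, by hypothesis $\tilde{f}(x)\in\argmin_{y\in\Y}\int_\Y \loss(y,y')\,d\rho(y'|x)$ for almost every $x\in\domrho$, which means exactly that $r(x,\tilde{f}(x)) = m(x)$ for $\rhox$-almost every $x$, where $m$ is the measurable pointwise-infimum from \autoref{lemma:solution-structured-risk}. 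Integrating this equality and using that the two exceptional null sets (the complement $\X\setminus\domrho$ and the null subset of $\domrho$ where $\tilde{f}$ fails to minimize) do not affect the integral, I obtain $\E(\tilde{f}) = \int_\X m(x)\,d\rhox(x)$. But \autoref{lemma:solution-structured-risk} shows that the everywhere-minimizer $f^*$ satisfies $\E(f^*)=\int_\X m(x)\,d\rhox(x)$ and $\E(f^*)=\inf_{f:\X\to\Y}\E(f)$, so $\E(\tilde{f})=\inf_{f:\X\to\Y}\E(f)$, as claimed.

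The main obstacle is purely the measurability and null-set bookkeeping rather than any substantive analytic difficulty: I must carefully justify that the almost-everywhere minimizer $\tilde{f}$ still produces a measurable integrand $r(\cdot,\tilde{f}(\cdot))$, and that the value of $\E$ is genuinely insensitive to modifications of $f$ on $\rhox$-null sets (which is why passing from the everywhere condition of \autoref{lemma:solution-structured-risk} to the almost-everywhere condition here is harmless). Once the identity $r(x,\tilde{f}(x))=m(x)$ holds $\rhox$-almost everywhere, the equality $\E(\tilde{f})=\E(f^*)$ is immediate and the conclusion follows directly from the lemma.
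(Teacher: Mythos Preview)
Your proposal is correct and follows essentially the same route as the paper's proof: both observe that $r(x,\tilde{f}(x)) = m(x)$ for $\rhox$-almost every $x$, integrate to obtain $\E(\tilde{f}) = \int_\X m(x)\, d\rhox(x) = \E(f^*)$, and invoke \autoref{lemma:solution-structured-risk} for the final identification with the infimum. The only difference is that you spell out the measurability of $x\mapsto r(x,\tilde{f}(x))$ and the null-set bookkeeping explicitly, whereas the paper's proof leaves these points implicit.
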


\begin{proof}
The result follows directly from \autoref{lemma:solution-structured-risk} by noting that $r(x,\tilde{f}(x)) = m(x)$ almost everywhere on $\domrho$. Hence, since $\domrho$ is equal to the support of $\rhox$ up to a set of measure zero, $\E(\tilde{f}) = \int_\X m(x) d\rhox(x) = \E(f^*) = \inf_{f}\E(f)$. 
\end{proof}

With the above basic notation and results, we can proceed to prove the results presented in this work.

\section{Surrogate Problem, Fisher Consistency and Comparison Inequality}

In this section we focus on the surrogate framework introduced in Sec.~\ref{sec:surrogate} and prove that it is {\it Fisher consistent} and that the {\it comparison inequality}. To do so, we will first characterizes the solution(s) of the surrogate expected risk minimization introduced at \eqref{eq:surrogate_ls}. We recall that in our setting, the surrogate risk was defined as the functional $\mathcal{R}(g) = \int_{\X\times\Y} \|\psi(y) - g^*(x)\|_\HY^2 d\rho(x,y)$, where $\psi:\Y\to\HY$ is continuous (by \autoref{assumption:main}). In the following, when $\psi$ is bounded, we will denote with $Q = \sup_{y\in\Y} \|\psi(y)\|_\HY$. Note that in most our results we will assume $\Y$ to be compact. In these settings we always have $Q = \max_{y\in\Y} \|\psi(y)\|_\HY$ by the continuity of $\psi$.

We start with a preliminary lemma necessary to prove \autoref{lm:relation_expected_risk} and \autoref{teo:surrogate}.
\begin{lemma}\label{lemma:surrogate-problem-sol}
Let $\HY$ a separable Hilbert space and $\psi:\Y\to\HY$ measurable and bounded. Then, the function $g^*:\X\to\HY$ such that
\eqal{\label{eq:g_average}
    g^*(x) = \int_\Y \psi(y) d\rho(y|x) \quad \forall x\in \domrho
}
and $g^*(x) = 0$ otherwise, belongs to $\LXH$ and is a minimizer of the surrogate expected risk at \eqref{eq:surrogate_ls}. Moreover, any minimizer of \eqref{eq:surrogate_ls} is equal to $g^*$ almost everywhere on the domain of $\rhox$.
\end{lemma}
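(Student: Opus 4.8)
The plan is to exploit the classical characterization of the conditional expectation as the minimizer of mean square error, transported to the Hilbert-space-valued (Bochner) setting. First I would verify that $g^*$ is well-defined and lies in $\LXH$. Since $\psi$ is bounded, say $\|\psi(y)\|_\HY \le Q$ for all $y\in\Y$, the Bochner integral $g^*(x) = \int_\Y \psi(y)\,d\rho(y|x)$ exists for each $x\in\domrho$ and satisfies $\|g^*(x)\|_\HY \le \int_\Y \|\psi(y)\|_\HY\,d\rho(y|x) \le Q$. Measurability of the map $x\mapsto g^*(x)$ follows from the fact that $\rho(\cdot|x)$ is a regular conditional distribution together with standard measurability results for parametrized Bochner integrals. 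The uniform bound then gives $\|g^*\|_{\LXH}^2 = \int_\X \|g^*(x)\|_\HY^2\,d\rhox(x) \le Q^2 < +\infty$, so $g^*\in\LXH$.

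The heart of the argument is a pointwise ``bias--variance'' decomposition of the surrogate risk. Fixing any measurable $g:\X\to\HY$, I would expand, for each $x$, the squared norm inside the conditional integral as $\|g(x)-\psi(y)\|_\HY^2 = \|g(x)-g^*(x)\|_\HY^2 + 2\ip{g(x)-g^*(x)}{g^*(x)-\psi(y)}_\HY + \|g^*(x)-\psi(y)\|_\HY^2$. Integrating over $y$ against $\rho(\cdot|x)$, the cross term vanishes precisely because $g^*(x)$ is the conditional mean of $\psi$, i.e. $\int_\Y (g^*(x)-\psi(y))\,d\rho(y|x) = 0$. Disintegrating $d\rho(x,y) = d\rho(y|x)\,d\rhox(x)$ and integrating over $x$ then yields the identity $\mathcal{R}(g) = \|g-g^*\|_{\LXH}^2 + \mathcal{R}(g^*)$, where boundedness of $\psi$ guarantees every term is finite so no indeterminate form $\infty-\infty$ arises.

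From this identity both conclusions are immediate: since $\|g-g^*\|_{\LXH}^2 \ge 0$, we get $\mathcal{R}(g) \ge \mathcal{R}(g^*)$ for all $g$, so $g^*$ minimizes the surrogate risk; and equality $\mathcal{R}(g) = \mathcal{R}(g^*)$ forces $\|g-g^*\|_{\LXH} = 0$, i.e. $g = g^*$ $\rhox$-almost everywhere (equivalently on $\domrho$, which coincides with $\supprhox$ up to a null set), giving the stated a.e.\ uniqueness. I expect the main obstacle to be not the algebra but the measure-theoretic bookkeeping: establishing that $g^*$ is genuinely $\rhox$-measurable as a Bochner-integral-valued map, which is exactly where the regular-conditional-distribution hypothesis is essential, and rigorously justifying the disintegration step that turns the double integral into an iterated one. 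Both are standard but must be invoked with care.
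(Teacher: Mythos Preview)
Your proposal is correct and follows essentially the same approach as the paper: both establish measurability and boundedness of $g^*$ via the regular conditional distribution, then derive the key identity $\mathcal{R}(g) - \mathcal{R}(g^*) = \int_\X \|g(x) - g^*(x)\|_\HY^2\,d\rhox(x)$ by expanding the square around the conditional mean, from which minimality and a.e.\ uniqueness follow. The only cosmetic difference is that the paper first shows $g^*(x)$ is the pointwise minimizer and then separately derives the excess-risk identity, whereas you obtain both conclusions at once from the bias--variance decomposition; the underlying algebra is identical.
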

\begin{proof}
By hypothesis, $\|\psi\|_{\HY}$ is measurable and bounded. Therefore, since $\rho(y|x)$ is a regular conditional probability, we have that $g^*$ is measurable on $\X$ (see for instance \cite{steinwart2008}). Moreover, the norm of $g^*$ is dominated by the constant function of value $Q$, thus $g^*$ is integrable on $\X$ with respect to $\rhox$ and in particular it is in $L^2(\X,\rhox,\HY)$ since $\rhox$ is a finite regular measure. Recall that since $\rho(y|x)$ is a regular conditional distribution, for any measurable $g:\X\to\HY$, the functional in \eqref{eq:surrogate_ls} can be written as
\eqals{
{\cal R}(g) = \int_{\X\times\Y} \|g(x) - \psi(y)\|_\HY^2 d\rho(x,y) = \int_{\X} \int_\Y \|g(x) - \psi(y)\|_\HY^2 d\rho(y|x)d\rhox(x).
}
Notice that $g^*(x) = \argmin_{\eta \in \HY} \int_\Y \|\eta - \psi(y)\|_\HY^2 d\rho(y|x)$ almost everywhere on $\domrho$. Indeed,
\eqals{
  \int_\Y \|\eta - \psi(y)\|_\HY^2 d\rho(y|x) & = \|\eta\|_\HY^2 - 2 \ip{\eta}{\left(\int_\Y \psi(y)d\rho(y|x)\right)} + \int_\Y \|\psi(y)\|_\HY^2 d\rho(y|x) \\
  & = \|\eta\|_\HY^2 - 2 \ip{\eta}{g^*(x)}_\HY + const.
}
for all $x\in\domrho$, which is minimized by $\eta = g^*(x)$ for all $x\in\domrho$. Therefore, since $\domrho$ is equal to the support of $\rhox$ up to a set of measure zero, we conclude that $\mathcal{R}(g^*)\leq \inf_{g:\X\to\HY}\mathcal{R}(g)$ and, since $g^*$ is measurable, $\mathcal{R}(g^*) = \min_{g:\X\to\HY}\mathcal{R}(g)$ and $g^*$ is a global minimizer as required.

Finally, notice that for any $g:\X\to\HY$ we have
\eqal{\label{eq:equation_excess_ls_risk}
  \mathcal{R}(g) - \mathcal{R}(g^*) & = \int_{\X\times\Y} \|g(x) - \psi(y)\|_\HY^2 - \|g^*(x) - \psi(y)\|_\HY^2 d\rho(x,y) \\ 
  & = \int_{\X} \|g(x)\|_\HY^2 - 2 \ip{g(x)}{\left(\int_\Y \psi(y) d\rho(y|x)\right)}_\HY + \|g^*(x)\|_\HY^2 d\rhox(x) \\
  & = \int_\X \|g(x)\|_\HY^2 - 2 \ip{g(x)}{g^*(x)}_\HY + \|g^*(x)\|_\HY^2d\rhox(x) \\
  & = \int_\X \|g(x) - g^*(x)\|_\HY^2 d\rhox(x)
}
Therefore, for any measurable minimizer $g':\X\to\HY$ of the surrogate expected risk at \eqref{eq:surrogate_ls}, we have $\mathcal{R}(g')-\mathcal{R}(g^*) = 0$ which, by the relation above, implies $g'(x) = g^*(x)$ a.e. on $\domrho$.
\end{proof}
\Lrelation*
\begin{proof}
By \autoref{lemma:surrogate-problem-sol} we know that $g^*(x) = \int_\Y \psi(y)d\rho(y|x)$ almost everywhere on $\domrho$ and is the minimizer of $\cal{R}$. Therefore we have
\eqals{
  \ip{\psi(y)}{Vg^*(x)}_\HY & = \ip{\psi(y)}{V\int_\Y \psi(y')d\rho(y'|x)}_\HY \\
  & = \int_\Y \ip{\psi(y)}{V\psi(y')}_\HY d\rho(y'|x) = \int_\Y \loss(y,y')d\rho(y'|x)
}
for almost every $x\in\domrho$. Thus, for any measurable function $f: \X \to \Y$ we have
\eqals{
{\cal E}(f) &= \int_{\X\times\Y} \loss(f(x),y) d\rho(x,y) = \int_{X} \int_{\Y} \loss(f(x),y) d\rho(y|x)d\rhox(x) \\
&= \int_{X}\ip{\psi(f(x))}{Vg^*(x)}_\HY d\rhox(x).
}
\end{proof}

\Tsurrogate*
\begin{proof}
For the sake of clarity, the result for the fisher consistency and the comparison inequality are proven respectively in \autoref{proposition:fisher_consistency}, \autoref{teo:comparison_inequality}. The two results are proven below.
\end{proof}
 
\begin{restatable}[Fisher Consistency]{theorem}{Prelation}\label{proposition:fisher_consistency}
Let $\loss:\Y\times\Y\to\R$ satisfy \autoref{assumption:main} with $\Y$ a compact set. Let $g^*:\X\to\HY$ be a minimizer of the surrogate problem at \eqref{eq:surrogate_ls}. Then, for any decoding $\decoding:\HY\to\Y$ satisfying \eqref{eq:decoding_function}
\eqal{\label{eq:fisher}
\E(\decoding \circ g^*) = \inf_{f:\X\to\Y} \E(f)
}
\end{restatable}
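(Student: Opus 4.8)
The plan is to show that the decoded predictor $\decoding \circ g^*$ is, for almost every $x$, a pointwise minimizer of the conditional risk $y \mapsto \int_\Y \loss(y,y')\,d\rho(y'|x)$, and then to invoke \autoref{cor:solution-surrogate-risk} to conclude that it attains $\inf_{f:\X\to\Y}\E(f)$. Compactness of $\Y$ together with the continuity of $\psi$ guarantees that the argmin defining the decoding in \eqref{eq:decoding_function} is attained, so $\decoding$ is well defined, and this is the reason $\Y$ is assumed compact.

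First I would recall from \autoref{lemma:surrogate-problem-sol} that the surrogate minimizer satisfies $g^*(x) = \int_\Y \psi(y')\,d\rho(y'|x)$ for almost every $x \in \domrho$. Substituting this into the pairing that appears in the decoding and using the linearity and boundedness of $V$ to exchange it with the integral, I obtain the pointwise identity
\eqal{
  \ip{\psi(y)}{V g^*(x)}_\HY = \int_\Y \ip{\psi(y)}{V\psi(y')}_\HY\,d\rho(y'|x) = \int_\Y \loss(y,y')\,d\rho(y'|x),
}
valid for almost every $x\in\domrho$ and every $y\in\Y$; this is exactly the computation already carried out in the proof of \autoref{lm:relation_expected_risk}. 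Consequently, by the defining property of the decoding, for almost every $x$ the point $\decoding(g^*(x))$ minimizes $\ip{\psi(\cdot)}{Vg^*(x)}_\HY$ over $\Y$ and hence also minimizes $y\mapsto\int_\Y \loss(y,y')\,d\rho(y'|x)$. In other words, $\decoding \circ g^*$ satisfies precisely the hypothesis of \autoref{cor:solution-surrogate-risk}, which then yields $\E(\decoding\circ g^*) = \inf_{f:\X\to\Y}\E(f)$, the claim.

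The step I expect to require the most care is measurability. Since the decoding $\decoding$ is an arbitrary selection from a possibly set-valued argmin, a priori $\decoding\circ g^*$ need not be a measurable $\Y$-valued map, yet \autoref{cor:solution-surrogate-risk} implicitly needs enough measurability to disintegrate $\E(\decoding\circ g^*)$ into $\int_\X r(x,\decoding(g^*(x)))\,d\rhox(x)$. I would resolve this exactly as in \autoref{lemma:solution-structured-risk}: because $r(x,y)=\int_\Y\loss(y,y')\,d\rho(y'|x)$ is a Carath\'eodory function, the measurable envelope $m(x)=\inf_{y}r(x,y)$ is measurable and is attained by a measurable selection; moreover, by the identity above, $r(x,\decoding(g^*(x)))=m(x)$ for almost every $x$, independently of which minimizer $\decoding$ happens to pick. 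Hence $\int_\X r(x,\decoding(g^*(x)))\,d\rhox(x)=\int_\X m(x)\,d\rhox(x)=\inf_{f}\E(f)$ is well posed. This is the only genuinely delicate point; everything else is a direct substitution into the risk representation of \autoref{lm:relation_expected_risk}.
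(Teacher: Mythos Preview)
Your proposal is correct and follows essentially the same route as the paper: use \autoref{lemma:surrogate-problem-sol} to write $g^*(x)=\int_\Y\psi(y')\,d\rho(y'|x)$ a.e., show this turns $\ip{\psi(y)}{Vg^*(x)}_\HY$ into the conditional risk so that the decoding is a pointwise minimizer, and conclude via \autoref{cor:solution-surrogate-risk}. Your extra paragraph on measurability is a welcome addition; the paper sidesteps this in the Fisher-consistency proof itself and treats the well-definedness of $\E(\decoding\circ g)$ and the existence of a measurable decoding in separate lemmas immediately afterward.
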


\begin{proof}
It is sufficient to show that $\decoding \circ g^*$ satisfies \eqref{eq:solution_expected_risk} almost everywhere on $\domrho$. Indeed, by directly applying \autoref{cor:solution-surrogate-risk} we have $\E(\decoding \circ g^*) = \E(f^*) = \inf_f\E(f)$ as required.

We recall that a mapping $d:\HY\to\Y$ is a decoding for our surrogate framework if it satisfies \eqref{eq:decoding_function}, namely
\eqals{
  \decoding(\eta) \in \argmin_{y\in\Y} \  \ip{\psi(y)}{V\eta}_\HY \mbox{ \ \ \ \ \ } \forall \eta\in\HY.
}
By \autoref{lemma:surrogate-problem-sol} we know that $g^*(x) = \int_\Y \psi(y)d\rho(y|x)$ almost everywhere on $\domrho$. Therefore, we have
\eqals{
  \ip{\psi(y)}{Vg^*(x)}_\HY & = \ip{\psi(y)}{V\int_\Y \psi(y')d\rho(y'|x)}_\HY \\
  & = \int_\Y \ip{\psi(y)}{V\psi(y')}_\HY d\rho(y'|x) = \int_\Y \loss(y,y')d\rho(y'|x)
}
for almost every $x\in\domrho$. As a consequence, for any $\decoding:\HY\to\Y$ satisfying \eqref{eq:decoding_function}, we have
\eqals{
d \circ g^*(x) \in \argmin_{y \in \Y} \ip{\psi(y)}{V g^*(x)}_\HY = \argmin_{y \in \Y} \int_\Y \loss(y, y') d\rho(y'|x)
}
almost everywhere on $\domrho$. We are therefore in the hypotheses of \autoref{cor:solution-surrogate-risk} with $\tilde{f} = \decoding \circ g^*$, as desired.
\end{proof}
The Fisher consistency of the surrogate problem allows to prove the comparison inequality (\autoref{teo:comparison_inequality}) between the excess risk of the structured prediction problem, namely $\E(\decoding \circ g) - \E(f^*)$, and the excess risk $\mathcal{R}(g) - \mathcal{R}(g^*)$ of the surrogate problem. However, before showing such relation, in the following result we prove that for any measurable $g:\X\to\HY$ and measurable decoding $\decoding:\HY\to\Y$, the expected risk $\E(\decoding \circ g)$ is well defined.
\begin{lemma}\label{lm:exp-risk-well-def}
Let $\Y$ be compact and $\loss:\Y\times\Y\to\R$ satisfying \autoref{assumption:main}. Let $g:\X\to\HY$ be measurable and $\decoding:\HY\to\Y$ a measurable decoding satisfying \eqref{eq:decoding_function}. Then $\E(\decoding \circ g)$ is well defined and moreover $|\E(\decoding \circ g)|\leq \Q^2 \|V\|$.
\end{lemma}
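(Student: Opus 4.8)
The plan is to bound the integrand defining $\E(\decoding \circ g)$ uniformly on $\X\times\Y$ and then exploit the fact that $\rho$ is a probability measure. First I would check that the map $(x,y)\mapsto \loss(\decoding(g(x)),y)$ is measurable, so that $\int_{\X\times\Y}\loss(\decoding(g(x)),y)\,d\rho(x,y)$ is meaningful. Since $\psi$ is continuous by \autoref{assumption:main} and $V$ is a bounded linear operator, the representation $\loss(y,y')=\ip{\psi(y)}{V\psi(y')}_\HY$ shows that $\loss$ is continuous, hence Borel measurable, on $\Y\times\Y$. By hypothesis $g:\X\to\HY$ and $\decoding:\HY\to\Y$ are measurable, so $\decoding\circ g:\X\to\Y$ is measurable, and composing with the (jointly) measurable loss yields measurability of $(x,y)\mapsto\loss(\decoding(g(x)),y)$.

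Next I would derive the uniform bound. Writing the integrand as $\loss(\decoding(g(x)),y)=\ip{\psi(\decoding(g(x)))}{V\psi(y)}_\HY$ and applying Cauchy--Schwarz together with $\|V\psi(y)\|_\HY\le\|V\|\,\|\psi(y)\|_\HY$ gives
\[
  |\loss(\decoding(g(x)),y)| \;\le\; \|V\|\,\|\psi(\decoding(g(x)))\|_\HY\,\|\psi(y)\|_\HY .
\]
Because $\Y$ is compact and $\psi$ is continuous, $\Q=\max_{y\in\Y}\|\psi(y)\|_\HY$ is finite, and both $\psi(\decoding(g(x)))$ and $\psi(y)$ are values of $\psi$ on $\Y$, so each norm on the right is at most $\Q$. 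Hence the integrand is bounded everywhere by $\Q^2\|V\|$.

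Finally, since $\rho$ is a Borel probability measure on $\X\times\Y$, a bounded measurable function is automatically $\rho$-integrable; therefore $\E(\decoding\circ g)$ is well defined, and using that $\rho$ has total mass one,
\[
  |\E(\decoding\circ g)| \;\le\; \int_{\X\times\Y}|\loss(\decoding(g(x)),y)|\,d\rho(x,y) \;\le\; \Q^2\|V\| .
\]
I do not expect a genuine obstacle here: the only point requiring minor care is the joint measurability of the integrand, which is clean precisely because \autoref{assumption:main} renders $\loss$ continuous, so the composition of measurable maps argument goes through; the remainder is a Cauchy--Schwarz estimate combined with compactness of $\Y$. (Alternatively, one could first integrate in $y$ using $\int_\Y\loss(\decoding(g(x)),y)\,d\rho(y|x)=\ip{\psi(\decoding(g(x)))}{Vg^*(x)}_\HY$ as in \autoref{lm:relation_expected_risk} and bound via $\|g^*(x)\|_\HY\le \Q$, but the direct estimate above is shorter.)
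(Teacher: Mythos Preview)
Your proof is correct and follows essentially the same route as the paper: establish measurability of $(x,y)\mapsto\loss(\decoding(g(x)),y)$ via continuity of $\loss$ and measurability of $\decoding\circ g$, bound the integrand pointwise by $\Q^2\|V\|$ using Cauchy--Schwarz and the compactness of $\Y$, and conclude by integrating against the probability measure $\rho$. Your write-up is in fact slightly more careful than the paper's (you spell out why $\loss$ is continuous and avoid a small typo in the Cauchy--Schwarz step), but the argument is the same.
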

\begin{proof}
$\loss(\decoding \circ g(x),y)$ is measurable in both $x$ and $y$ since $\loss$ is continuous and $\decoding \circ g$ is measurable by hypothesis (combination of measurable functions). Now, $\loss$ is pointwise bounded by $\Q^2 \|V\|$ since
\eqal{
    |\loss(y,y')| = |\ip{\psi(y)}{V\psi(y')}_\HY|\leq\|\psi(y)\|_\HY^2 \|V\| \leq \Q^2\|V\|.
}
Hence, by Theorem $11.23$ pp. $416$ in \cite{aliprantis2006} the integral of $\loss(\decoding \circ g(x),y)$ exists and therefore
\eqal{
    |\E(\decoding \circ g)| \leq \int_\X |\loss(\decoding \circ g(x),y)| d\rho(x,y) \leq \Q^2 \|V\| < +\infty.
}
\end{proof}
A question introduced by \autoref{lm:exp-risk-well-def} is whether a {\it measurable} decoding always exists. The following result guarantees that, under the hypotheses introduced in this work, a decoding $\decoding:\HY\to\Y$ satisfying \eqref{eq:decoding_function} always exists.
\begin{lemma}
Let $\Y$ be compact and $\psi:\Y\to\HY$ and $V:\HY\to\HY$ satisfy the requirements in \autoref{assumption:main}. Define $m:\X\to\R$ as 
\begin{equation}\label{eq:compute_m}
m(\eta) = \min_{y\in\Y} \ \ip{\psi(y)}{V\eta}_\HY \mbox{ \ \ } \forall y\in\Y, \eta\in\HY.
\end{equation}
Then, $m$ is measurable and there exists a measurable decoding $\decoding:\HY\to\Y$ satisfying \eqref{eq:decoding_function}, namely such that $m(\eta) = \ip{\psi(\decoding(\eta))}{V\eta}_\HY$ for each $\eta\in\HY$.
\end{lemma}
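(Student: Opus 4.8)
The plan is to treat this as a measurable selection problem, mirroring exactly the argument already used for \autoref{lemma:solution-structured-risk}. I would define $r:\HY\times\Y\to\R$ by $r(\eta,y)=\ip{\psi(y)}{V\eta}_\HY$, so that $m(\eta)=\min_{y\in\Y}r(\eta,y)$ is the associated value function and a decoding $\decoding$ satisfying \eqref{eq:decoding_function} is precisely a map $\eta\mapsto\decoding(\eta)\in\argmin_{y\in\Y}r(\eta,y)$. The goal is therefore to verify that $r$ fits the hypotheses of the measurable maximum theorem (Aumann's measurable selection principle, Theorem $18.19$ of \cite{aliprantis2006}), which then yields simultaneously the measurability of $m$ and the existence of a measurable minimizing selection $\decoding$.

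First I would check that $r$ is a Carath\'eodory function on $\HY\times\Y$, namely continuous in $y$ for each fixed $\eta$ and measurable in $\eta$ for each fixed $y$. For fixed $\eta\in\HY$, the map $y\mapsto\ip{\psi(y)}{V\eta}_\HY$ is continuous, being the composition of the continuous embedding $\psi:\Y\to\HY$ (\autoref{assumption:main}) with the bounded linear functional $\ip{\cdot}{V\eta}_\HY$. For fixed $y\in\Y$, passing to the adjoint I can rewrite $\ip{\psi(y)}{V\eta}_\HY=\ip{V^*\psi(y)}{\eta}_\HY$, which is a bounded linear functional of $\eta$ (since $V$ is bounded) and hence continuous, in particular Borel measurable on $\HY$. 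Because $\HY$ is a separable Hilbert space, its Borel $\sigma$-algebra supplies the required measurable structure on the parameter space, and compactness of $\Y$ guarantees that the minimum defining $m(\eta)$ is attained for every $\eta$.

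With $r$ Carath\'eodory and $\Y$ compact, the measurable maximum theorem applies to the constant, compact-valued correspondence $\eta\mapsto\Y$: it gives that $m$ is measurable and that there exists a measurable $\decoding:\HY\to\Y$ with $r(\eta,\decoding(\eta))=m(\eta)$ for every $\eta\in\HY$, which is exactly the requirement $\decoding(\eta)\in\argmin_{y\in\Y}\ip{\psi(y)}{V\eta}_\HY$ of \eqref{eq:decoding_function}. The only real content is verifying the Carath\'eodory property; once that is secured the conclusion is a direct invocation of the selection theorem, so I do not expect any genuine obstacle beyond bookkeeping. The single point worth stating carefully is the measurability in $\eta$, which is precisely where the boundedness of $V$ (via the adjoint $V^*$) enters.
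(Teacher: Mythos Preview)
Your proposal is correct and follows exactly the paper's approach: the paper's proof is a single sentence stating that, ``similarly to the proof of \autoref{lemma:solution-structured-risk}, the result is a direct application of Theorem~$18.19$ of \cite{aliprantis2006} (or Aumann's measurable selection principle),'' and you have simply spelled out the Carath\'eodory verification that this invocation presupposes.
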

\begin{proof}
Similarly to the proof of \autoref{lemma:solution-structured-risk}, the result is a direct application of Theorem $18.19$ (pp. $605$) of \cite{aliprantis2006} (or Aumann's measurable selection principle \cite{steinwart2008,castaing2006}).
\end{proof}
We now prove the {\em comparison inequality} at \eqref{eq:comparison_inequality_ls}. 
\begin{restatable}[Comparison Inequality]{theorem}{Tcomparison}\label{teo:comparison_inequality}
Let $\loss:\Y\times\Y\to\R$ satisfy \autoref{assumption:main} with $\Y$ a compact or finite set. Let $f^*:\X\to\Y$ and $g^*:\X\to\HY$ be respectively solutions to the structured and surrogate learning problems at \eqref{eq:expected_risk_minimization} and \eqref{eq:surrogate_ls}. Then, for every measurable $g:\X\to\HY$ and $d:\HY\to\Y$ satisfying \eqref{eq:decoding_function}
\begin{equation}
    \E(\decoding \circ g) - \E(f^*) \leq 2 \Q \|V\| \sqrt{\mathcal{R}(g) - \mathcal{R}(g^*)}.
\end{equation}
\end{restatable}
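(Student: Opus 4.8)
The plan is to reduce the excess risk to a single integral of a pointwise quantity and then control that quantity using the defining optimality of the decoding. First I would invoke \autoref{lm:relation_expected_risk}, which applies to any measurable map $\X\to\Y$ (and $\decoding \circ g$ is measurable by the preceding lemmas), to write both risks in the common form $\E(f)=\int_\X \ip{\psi(f(x))}{Vg^*(x)}_\HY\,d\rhox(x)$. Subtracting, this gives
\eqals{
\E(\decoding \circ g) - \E(f^*) = \int_\X \Big( \ip{\psi(\decoding(g(x)))}{Vg^*(x)}_\HY - \ip{\psi(f^*(x))}{Vg^*(x)}_\HY \Big) d\rhox(x),
}
so it suffices to bound the integrand pointwise by $2\Q\|V\|\,\|g(x)-g^*(x)\|_\HY$.

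The main step, and the one I expect to need the most care, is an ``add and subtract'' argument that exploits optimality of the decoding. Fixing $x$ and writing $h=g(x)$, $\eta=g^*(x)$, $\bar{y}=\decoding(h)$ and $y^*=f^*(x)$, the defining property \eqref{eq:decoding_function} says $\bar{y}$ minimizes $\ip{\psi(y)}{Vh}_\HY$, so $\ip{\psi(\bar{y})}{Vh}_\HY - \ip{\psi(y^*)}{Vh}_\HY \le 0$. The subtlety is that optimality is with respect to $h=g(x)$ whereas the integrand is expressed with respect to $\eta=g^*(x)$; I would bridge this by decomposing
\eqals{
\ip{\psi(\bar{y})-\psi(y^*)}{V\eta}_\HY = \ip{\psi(\bar{y})-\psi(y^*)}{Vh}_\HY + \ip{\psi(\bar{y})-\psi(y^*)}{V(\eta-h)}_\HY,
}
and discarding the first term, which is non-positive by optimality of $\bar{y}$.

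From here the argument is routine: Cauchy--Schwarz together with continuity of $V$ bounds the surviving term by $\|\psi(\bar{y})-\psi(y^*)\|_\HY\,\|V\|\,\|\eta-h\|_\HY$, and the triangle inequality with $\Q=\sup_{y}\|\psi(y)\|_\HY$ gives $\|\psi(\bar{y})-\psi(y^*)\|_\HY\le 2\Q$, yielding the desired pointwise bound $2\Q\|V\|\,\|g(x)-g^*(x)\|_\HY$. Integrating and then applying Jensen's inequality (equivalently Cauchy--Schwarz, using that $\rhox$ is a probability measure) gives
\eqals{
\E(\decoding \circ g) - \E(f^*) \le 2\Q\|V\| \int_\X \|g(x)-g^*(x)\|_\HY\,d\rhox(x) \le 2\Q\|V\| \left( \int_\X \|g(x)-g^*(x)\|_\HY^2\,d\rhox(x) \right)^{1/2}.
}
Finally I would identify the remaining term with the surrogate excess risk through \eqref{eq:equation_excess_ls_risk} of \autoref{lemma:surrogate-problem-sol}, namely $\mathcal{R}(g)-\mathcal{R}(g^*)=\int_\X\|g(x)-g^*(x)\|_\HY^2\,d\rhox(x)$, which closes the bound.
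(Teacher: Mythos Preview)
Your proof is correct and slightly cleaner than the paper's. Both arguments write the excess risk via the bilinear form, bound it pointwise by $2\Q\|V\|\,\|g(x)-g^*(x)\|_\HY$, apply Jensen, and identify the result with the surrogate excess via \eqref{eq:equation_excess_ls_risk}. The difference lies in how the pointwise bound is obtained. The paper first passes through $f_0 = \decoding\circ g^*$ (invoking Fisher consistency to replace $\E(f^*)$ by $\E(f_0)$), then splits the integrand into two pieces $A$ and $B$: $A$ isolates $\ip{\psi(f(x))}{V(g^*(x)-g(x))}$ and is bounded by a sup over $\Y$, while $B$ is a difference of two infima and is controlled by the elementary inequality $|\inf\eta-\inf\zeta|\le\sup|\eta-\zeta|$. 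Your route avoids introducing $f_0$ altogether: you add and subtract $\ip{\psi(\bar y)-\psi(y^*)}{Vh}$, discard it by the single optimality inequality $\ip{\psi(\bar y)}{Vh}\le\ip{\psi(y^*)}{Vh}$ (valid for \emph{any} comparison point $y^*$, not just a minimizer), and bound the remainder directly. This is a genuinely more economical decomposition---one inequality instead of two, no appeal to Fisher consistency, and no need for the ``difference of infima'' trick---while yielding the same constant $2\Q\|V\|$.
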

\begin{proof}
Let us denote $f = \decoding \circ g$ and $f_0 = \decoding \circ g^*$. By  Thm.~\ref{proposition:fisher_consistency} we have that $\E(f_0) = \inf_{f:\X \to \Y}\E(f)$ and so $\E(f^*) = \E(f_0)$. Now, by combining \autoref{assumption:main} with \autoref{lemma:surrogate-problem-sol}, we have
\eqals{
    \E(f) - \E(f^*) &= \E(f) - \E(f_0) = \int_{\X\times\Y} \loss(f(x),y) - \loss(f_0(x),y) d\rho(x,y) \\ 
                    & = \int_{\X\times\Y} \ip{\psi(f(x)) - \psi(f_0(x))}{V\psi(y)}_\HY d\rho(x,y) \\ 
                    & = \int_\X \ip{\psi(f(x)) - \psi(f_0(x))}{V\left(\int_\Y \psi(y)d\rho(y|x)\right)}_\HY d\rhox(x) \\
                    & = \int_\X \ip{\psi(f(x)) - \psi(f_0(x))}{Vg^*(x)}_\HY d\rhox(x) \\ 
                    & = A + B.
}
where 
\eqals{
    A & = \int_\X \ip{\psi(f(x))}{V(g^*(x) - g(x))}_\HY  ~d\rhox(x) \\
    B &= \int_\X \ip{\psi(f(x))}{Vg(x)}_\HY  d\rhox(x) - \int_\X  \ip{\psi(f_0(x))}{Vg^*(x)}_\HY d\rhox(x) 
}
Now, the term A can be minimized by taking the supremum over $\Y$ so that
\eqals{
    A \leq \int_\X \sup_{y\in\Y} \Big|\ip{\psi(y)}{V(g^*(x) - g(x))}_\HY\Big| d\rho_\X(x).
}
For B, we observe that, by the definition of the decoding $d$, we have
\eqals{
\ip{\psi(f_0(x))}{Vg^*(x)}_\HY  &= \inf_{y' \in \Y} \ip{\psi(y')}{Vg^*(x)}_\HY, \\ \ip{\psi(f(x))}{Vg(x)}_\HY  &= \inf_{y' \in \Y} \ip{\psi(y')}{Vg(x)}_\HY,
}
for all $x\in\X$l. Therefore,
\begin{align}
    B   & = \int_\X ~ \inf_{y\in\Y} \ip{\psi(y)}{Vg(x)}_\HY - \inf_{y\in\Y} \ip{\psi(y)}{Vg^*(x)}_\HY ~ d\rho_\X(x) \\
        & \leq \int_\X \sup_{y\in\Y} \Big|\ip{\psi(y)}{V(g(x)- g^*(x))}_\HY\Big| d\rho_\X(x) 
\end{align}
where we have used the fact that for any given two functions $\eta,\zeta:\Y\to\R$ we have 
\begin{equation}
|\inf_{y\in\Y} \eta(y) - \inf_{y\in\Y} \zeta(y)| \leq \sup_{y\in\Y} |\eta(y) - \zeta(y)|.
\end{equation}
Therefore, by combining the bounds on $A$ and $B$ we have
\begin{align*}
    \E(f) - \E(f^*) & \leq 2 \int_\X \sup_{y\in\Y} \Big|\ip{\psi(y)}{V(g^*(x) - g(x))}_\HY\Big| d\rho_\X(x) \\
    & \leq 2 \int_\X \sup_{y\in\Y} \|V^*\psi(y)\|_\HY \|g^*(x) - g(x)\|_\HY d\rho_\X(x) \\
    & \leq 2 \Q \|V\| \int_\X \|g^*(x) - g(x)\|_\HY d\rho_\X(x) \\
    & \leq 2 \Q \|V\| \sqrt{\int_\X \|g^*(x) - g(x)\|_\HY^2 d\rho_\X(x)}, \\
\end{align*}
where for the last inequality we have used the Jensen's inequality. The proof is concluded by recalling that (see \eqref{eq:equation_excess_ls_risk})
\begin{equation}\label{eq:residual_error_equivalence}
    \mathcal{R}(g) - \mathcal{R}(g^*) = \int_{\X} \|g(x) - g^*(x)\|_\HY^2 d\rho_\X(x)
\end{equation}
\end{proof}

\section{Learning Bounds for Structured Prediction}

In this section we focus on the analysis of the structured prediction algorithm proposed in this work (\ref{eq:algorithm}). In particular, we will first prove that, given the minimizer $\hat{g}:\X\to\HY$ of the empirical risk at \eqref{eq:kde}, its decoding can be computed in practice according to \ref{eq:algorithm}. Then, we report the proofs for the universal consistency of such approach (\autoref{teo:universal_consistency}) and generalization bounds (\autoref{teo:simple_bound}).

\subsection*{Notation}

Let $k:\X\times\X\to\R$ a positive semidefinite function on $\X$, we denote $\HX$ the Hilbert space obtained by the completion
\eqals{
  \HX = \overline{\Span \{ k(x,\cdot) \ | \ x\in\X\} }
}
according to the norm induced by the inner product $\ip{k(x,\cdot)}{k(x',\cdot)}_\HX = k(x,x')$. Spaces $\HX$ constructed in this way are known as {\it reproducing kernel Hilbert spaces} and there is a one-to-one relation between a kernel $k$ and its associated RKHS. For more details on RKHS we refer the reader to \cite{berlinet2011}. Given a kernel $k$, in the following we will denote with $\varphi:\X\to\HX$ the feature map $\varphi(x) = k(x,\cdot) \in\HX$ for all $x\in\X$. We say that a kernel is bounded if $\|\varphi(x)\|_\HX \leq \kappa$ with $\kappa>0$. Note that $k$ is bounded if and only if $k(x,x') = \ip{\varphi(x)}{\varphi(x')}_\HX \leq \|\varphi(x)\|_\HX \|\varphi(x')\|\leq \kappa^2$ for every $x,x'\in\X$.  In the following we will always assume $k$ to be continuous and bounded by $\kappa>0$. The continuity of $k$ with the fact that $\X$ is Polish implies $\HX$ to be separable \cite{berlinet2011}.

We introduce here the ideal and empirical operators that we will use in the following to prove the main results of this work. 

\begin{itemize}
\item $S:\HX\to L^2(\X,\rho_\X)$ s.t. $f\in\HX\mapsto\ip{f}{\varphi(\cdot)}_\HX\in L^2(\X,\rho_\X)$, with adjoint 
\item $S^*:\LX\to\HX$ s.t. $h\in\LX\mapsto \int_\X h(x)\varphi(x)d\rhox(x)\in\HX$,
\item $Z:\HY\to L^2(\X,\rho_\X)$ s.t. $h\in\HY\mapsto\ip{h}{g^*(\cdot)}_\HY\in L^2(\X,\rhox)$, with adjoint
\item $Z^*:\LX\to\HY$ s.t. $h\in\LX\mapsto \int_\X h(x)g^*(x)d\rhox(x)\in\HY$,
\item $C = S^*S:\HX\to\HX$ and $L = SS^*:L^2(\X,\rho_\X)\to L^2(\X,\rho_\X)$,
\end{itemize}
with $g^*(x) = \int_\Y \psi(y)d\rho(y|x)$ defined according to \eqref{eq:g_average}, (see \autoref{lemma:surrogate-problem-sol}). 

Given a set of input-output pairs $\{(x_i,y_i)\}_{i=1}^n$ with $(x_i,y_i)\in\X\times\Y$ independently sampled according to $\rho$ on $\X\times\Y$, we define the empirical counterparts of the operators just defined as
\begin{itemize}
\item $\hat{S}:\HX\to\R^n$ s.t. $f \in\HX \mapsto \frac{1}{\sqrt{n}}(\ip{\varphi(x_i)}{f}_\HX)_{i=1}^n \in \R^n$, with adjoint
\item $\hat{S}^*:\R^n\to\HX$ s.t. $v = (v_i)_{i=1}^n\in\R^n \mapsto \frac{1}{\sqrt{n}} \sum_{i=1}^n v_i \varphi(x_i)$,
\item $\hat{Z}:\HY\to\R^n$ s.t. $h \in\HY \mapsto \frac{1}{\sqrt{n}}(\ip{\psi(y_i)}{h}_\HY)_{i=1}^n \in \R^n$, with adjoint
\item $\hat{Z}^*:\R^n\to\HY$ s.t. $v = (v_i)_{i=1}^n\in\R^n \mapsto \frac{1}{\sqrt{n}} \sum_{i=1}^n v_i \psi(y_i)$,
\item $\hat{C} = \hat{S}^*\hat{S}:\HX\to\HX$ and $K = n \hat{S}\hat{S}^*\in\R^{n \times n}$ is the empirical kernel matrix. 
\end{itemize}
In the rest of this section we denote with $A + \la$, the operator $A + \la I$, for any symmetric linear operator $A$, $\la \in \R$ and $I$ the identity operator.

We recall here a basic result characterizing the operators introduced above.

\begin{proposition}\label{prop:basic_operator_result}
With the notation introduced above, 
\eqal{
  C = \int_\X \varphi(x) \otimes \varphi(x) d\rhox(x) \mbox{ \ \ \ \ and \ \ \ \ } Z^*S = \int_{\X\times\Y} \psi(y) \otimes \varphi(x) d\rho(x,y)
}
where $\otimes$ denotes the tensor product. Moreover, when $\varphi$ and $\psi$ are bounded by respectively $\kappa$ and $\Q$, we have the following facts
\begin{enumerate}[(i)]
\item $\tr(L) = \tr(C) = \|S\|_{HS}^2 = \int_\X \|\varphi(x)\|_\HX^2 d\rhox(x) \leq \kappa^2$
\item $\|Z\|_{HS}^2 = \int_X \|g^*(x)\|^2 d\rhox(x) = \|g^*\|_\rhox^2 < +\infty$.
\end{enumerate}
\end{proposition}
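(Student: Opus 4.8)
The plan is to verify the two operator identities by evaluating both sides on an arbitrary $f\in\HX$, and then to deduce the trace and norm identities as corollaries, using the standing boundedness hypotheses to justify the measure-theoretic manipulations. For the identity $C = \int_\X \varphi(x)\otimes\varphi(x)\,d\rhox(x)$ I would simply unfold the definitions: for $f\in\HX$ one has $(Sf)(x) = \ip{f}{\varphi(x)}_\HX$, whence
\eqals{
  Cf = S^*Sf &= \int_\X (Sf)(x)\,\varphi(x)\,d\rhox(x) \\
             &= \int_\X \ip{f}{\varphi(x)}_\HX\,\varphi(x)\,d\rhox(x),
}
which is exactly $\left(\int_\X \varphi(x)\otimes\varphi(x)\,d\rhox(x)\right)f$, recalling that the rank-one operator acts as $(a\otimes b)c = \ip{c}{b}\,a$. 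For the second identity I would proceed identically, writing $Z^*Sf = \int_\X \ip{f}{\varphi(x)}_\HX\,g^*(x)\,d\rhox(x)$ and then substituting the disintegration $g^*(x) = \int_\Y \psi(y)\,d\rho(y|x)$ supplied by \autoref{lemma:surrogate-problem-sol}. Since $\rho(y|x)$ is a regular conditional distribution, I can recombine $d\rho(y|x)\,d\rhox(x)$ into $d\rho(x,y)$, obtaining $\int_{\X\times\Y}\ip{f}{\varphi(x)}_\HX\,\psi(y)\,d\rho(x,y) = \left(\int_{\X\times\Y}\psi(y)\otimes\varphi(x)\,d\rho(x,y)\right)f$.

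For part (i) I would invoke the standard Hilbert--Schmidt identity $\tr(S^*S) = \tr(SS^*) = \|S\|_{HS}^2$, so that $\tr(C) = \tr(L) = \|S\|_{HS}^2$. Interchanging the continuous linear trace functional with the Bochner integral and using $\tr(\varphi(x)\otimes\varphi(x)) = \|\varphi(x)\|_\HX^2$ then yields $\|S\|_{HS}^2 = \int_\X \|\varphi(x)\|_\HX^2\,d\rhox(x)$, which is at most $\kappa^2$ because $\|\varphi(x)\|_\HX\le\kappa$ and $\rhox$ is a probability measure. For part (ii) I would expand $\|Z\|_{HS}^2 = \sum_i \|Ze_i\|_\rhox^2$ over an orthonormal basis $\{e_i\}$ of $\HY$, observe $(Ze_i)(x) = \ip{e_i}{g^*(x)}_\HY$, and apply Tonelli together with Parseval to get $\int_\X \sum_i |\ip{e_i}{g^*(x)}_\HY|^2\,d\rhox(x) = \int_\X \|g^*(x)\|_\HY^2\,d\rhox(x) = \|g^*\|_\rhox^2$. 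Finiteness is immediate since $\|g^*(x)\|_\HY \le \int_\Y\|\psi(y)\|_\HY\,d\rho(y|x) \le \Q$, so the integral is bounded by $\Q^2$; equivalently, $g^*\in\LXH$ already by \autoref{lemma:surrogate-problem-sol}.

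I expect the only genuine subtlety to lie in justifying the interchanges rather than in the algebra: representing $S^*$ and $Z^*$ as Bochner integrals and commuting them with the inner product (a Bochner--Fubini argument), recombining the conditional and marginal measures via disintegration, and swapping the trace with the integral. Each of these is legitimate precisely because the boundedness of $\varphi$ and $\psi$ makes the relevant integrands dominated and integrable and ensures the operators are Hilbert--Schmidt; once these justifications are in place, the remaining computations are routine.
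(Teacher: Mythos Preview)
Your proposal is correct and follows essentially the same route as the paper: unfold the definitions of $S$, $S^*$, $Z^*$ to identify $C$ and $Z^*S$ with the stated Bochner integrals (the paper tests against two vectors $\ip{h}{Ch'}$ rather than applying to a single $f$, but this is only a cosmetic difference), then use $\tr(S^*S)=\tr(SS^*)=\|S\|_{HS}^2$ together with linearity of the trace under the integral for (i), and the bound $\|g^*(x)\|_\HY\le Q$ (equivalently \autoref{lemma:surrogate-problem-sol}) for (ii). Your explicit orthonormal-basis computation for $\|Z\|_{HS}^2$ is slightly more detailed than the paper's ``analogous'', but the argument is the same.
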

\begin{proof}
By definition of $C = S^*S$, for each $h,h'\in\HX$ we have
\eqals{
  \ip{h}{Ch'}_\HX = \ip{Sh}{Sh'}_\rhox  & = \int_\X \ip{h}{\varphi(x)}_\HX \ip{\varphi(x)}{h'}_\HX d\rhox(x) \\ 
  & = \int_\X \left\ip{h}{\Big(\varphi(x)\ip{\varphi(x)}{h'}_\HX\Big)\right}_\HX d\rhox(x) \\
  & = \int_\X \left\ip{h}{\Big(\varphi(x)\otimes\varphi(x) \Big)h'\right} d\rhox(x) \\
  & = \ip{h}{\Big(\int_\X \varphi(x)\otimes\varphi(x)d\rhox(x)\Big)h'}_\HX
}
since $\varphi(x)\otimes\varphi(x):\HX\to\HX$ is the operator such that $h\in\HX\mapsto \varphi(x)\ip{\varphi(x)}{h}_\HX$. The characterization for $Z^*S$ is analogous.

Now, $(i)$. The relation $\tr(L) = \tr(C) = \tr(S^*S) = \|S\|_{HS}^2$ holds by definition. Moreover
\eqals{
  \tr(C) = \int_\X \tr(\varphi(x) \otimes \varphi(x)) d\rhox(x) = \int_\X \|\varphi(x)\|_\HX^2 d\rhox(x)
}
by linearity of the trace. $(ii)$ is analogous. Note that $\|g^*\|_\rhox^2 < +\infty$. by \autoref{lemma:surrogate-problem-sol} since $\psi$ is bounded by hypothesis.

\end{proof}

\subsection{Reproducing Kernel Hilbert Spaces for Vector-valued Functions}

We begin our analysis by introducing the concept of reproducing kernel Hilbert space (RKHS) for vector-valued functions. Here we provide a brief summary of the main properties that will be useful in the following. We refer the reader to \cite{micchelli2004,carmeli2006} for a more in-depth introduction on the topic.

Analogously to the case of scalar functions, a RKHS for vector-valued functions $g:\X\to\H$, with $\H$ a separable Hilbert space, is uniquely characterized by a so-called {\it kernel of positive type}, which is an operator-valued $\G:\X\times\X\to\B(\H,\H)$ generalizing the concept of scalar reproducing kernel.

\begin{definition}
Let $\X$ be a set and $\H$ be a Hilbert space, then $\G:\X\times\X\to\B(\H,\H)$ is a {\it kernel of positive type} if for each $n\in\N$, $x_1,\dots,x_n\in\X$, $c_1,\dots,c_n\in\H$ we have
\begin{equation}
    \sum_{i,j=1}^n \ip{\G(x_i,x_j) c_i }{c_j}_\H \geq 0
\end{equation}
\end{definition}

A kernel of positive type $\G$ defines an inner product $\ip{\G(x,\cdot)c}{\G(x',\cdot)c'}_{\mG_0} = \ip{\G(x,x')c}{c'}_\H$ on the space
\begin{equation}\label{eq:construction_of_RKHSvv}
    \mG_0 = \Span\{\G(x,\cdot)c \ | \ x\in\X, c\in\H \}.
\end{equation}
Then, the completion $\mG = \overline{\mG_0}$ with respect to the norm induced by $\ipv_{\mG_0}$ is known as the reproducing Kernel Hilbert space (RKHS) for vector-valued functions associated to the kernel $\G$. Indeed, we have that a reproducing property holds also for RKHS of vector-valued functions, namely for any $x\in\X$, $c\in\H$ and $g\in\mG$ we have\begin{equation}\label{eq:reproducing_property}
    \ip{g(x)}{c}_\H = \ip{g}{\G(x,\cdot)c}_\mG
\end{equation}
and that for each $x\in\X$ the function $\G(x,\cdot):\mG\to\H$ is the evaluation functional in $x$ on $\mG$, namely $\G(x,\cdot)(g) = g(x)$.

\subsubsection{Separable Vector Valued Kernels}

In this work we restrict to the special case of RKHS for vector-valued functions with associated kernel $\G:\X\times\X\to\HY$ of the form $\G(x,x') = k(x,x') I_\HY$ for each $x,x'\in\X$, where $k:\X\times\X\to\R$ is a scalar reproducing kernel and $I_\HY$ is the identity operator on $\HY$.  Notice that this choice is not restrictive in terms of the space of functions that can be learned by our algorithm. Indeed, it was proven in \cite{carmeli2010} (see Example 14) that if $k$ is a universal scalar kernel, then $\G(\cdot,\cdot) = k(\cdot,\cdot) I_\HY$ is universal. Below, we report a useful characterization of RKHS $\mG$ associated to a separable kernel.

\begin{restatable}{lemma}{Lhxy}\label{lemma:hxy_bis}
The RKHS $\mG$ associated to the kernel $\G(x,x') = k(x,x')I_\HY$ is isometric to $\HYX$ and for each $g\in\mG$ there exists a unique $G\in\HYX$ such that 
\begin{equation}\label{eq:evaluation_HS}
    g(x) = G \varphi(x) \in\HY \mbox{ \ \ \ for each \ } x\in\X
\end{equation}
\end{restatable}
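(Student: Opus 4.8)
The plan is to exhibit an explicit isometric isomorphism $W:\HY\otimes\HX\to\mG$ and then to read off the pointwise formula \eqref{eq:evaluation_HS} from it. I will use the identification, recalled in the Mathematical Setting, of $\HY\otimes\HX$ with the Hilbert--Schmidt operators $\HS(\HX,\HY)$, so that for $c\in\HY$ and $x\in\X$ the elementary tensor $c\otimes\varphi(x)$ is the rank-one operator $(c\otimes\varphi(x))h = c\,\ip{\varphi(x)}{h}_\HX$. The first step is to pin down the correspondence on generators: for $G=c\otimes\varphi(x)$ one has $G\varphi(x') = c\,\ip{\varphi(x)}{\varphi(x')}_\HX = k(x,x')\,c = (\G(x,\cdot)c)(x')$, so the function $x'\mapsto G\varphi(x')$ is exactly the generator $\G(x,\cdot)c$ of $\mG_0$ appearing in \eqref{eq:construction_of_RKHSvv}.

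Next I would check that this correspondence preserves inner products. On the tensor side, $\ip{c\otimes\varphi(x)}{c'\otimes\varphi(x')}_{\HY\otimes\HX} = \ip{c}{c'}_\HY\,\ip{\varphi(x)}{\varphi(x')}_\HX = k(x,x')\,\ip{c}{c'}_\HY$, while on the RKHS side $\ip{\G(x,\cdot)c}{\G(x',\cdot)c'}_{\mG_0} = \ip{\G(x,x')c}{c'}_\HY = k(x,x')\,\ip{c}{c'}_\HY$ by the definition of the $\mG_0$ inner product. Since the two inner products agree on generators, the linear map $W$ sending $c\otimes\varphi(x)$ to $\G(x,\cdot)c$ is a well-defined isometry on $\Span\{c\otimes\varphi(x)\}$. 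I would then extend by density: because $\{\varphi(x)=k(x,\cdot)\mid x\in\X\}$ is total in $\HX$ (the RKHS being the closure of their span), the set $\Span\{c\otimes\varphi(x)\}$ is dense in $\HY\otimes\HX$, and its image is $\mG_0$, which is dense in $\mG$ by construction. Hence $W$ extends uniquely to a surjective isometry $W:\HY\otimes\HX\to\mG$, which already proves the isometry claim.

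Finally I would recover \eqref{eq:evaluation_HS} for a general $G$ using the reproducing property \eqref{eq:reproducing_property}. For any $G\in\HY\otimes\HX$, $x\in\X$ and $c\in\HY$,
\[
\ip{(WG)(x)}{c}_\HY = \ip{WG}{\G(x,\cdot)c}_\mG = \ip{G}{c\otimes\varphi(x)}_{\HY\otimes\HX} = \ip{G\varphi(x)}{c}_\HY,
\]
where the first equality is \eqref{eq:reproducing_property} applied to $g=WG$, the middle one uses that $W$ is isometric with $W(c\otimes\varphi(x))=\G(x,\cdot)c$, and the last one is the Hilbert--Schmidt identity $\ip{G}{c\otimes\varphi(x)}_{HS} = \tr((c\otimes\varphi(x))^*G) = \ip{G\varphi(x)}{c}_\HY$. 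As $c$ is arbitrary, $(WG)(x)=G\varphi(x)$, giving the existence of $G$ for $g=WG$. Uniqueness is then immediate: if $G_1\varphi(x)=G_2\varphi(x)$ for all $x$, the bounded operator $G_1-G_2$ annihilates the total set $\{\varphi(x)\}$ and hence vanishes on $\HX$, so $G_1=G_2$.

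I expect the main obstacle to be the careful bookkeeping in transferring pointwise evaluation through the abstract completion: justifying that the abstractly defined isometry $W$ genuinely realizes $g(x)=G\varphi(x)$ pointwise, rather than only in $\mG$-norm. This is precisely what the displayed computation settles, by combining the reproducing property with the Hilbert--Schmidt inner-product identity; the density of $\Span\{c\otimes\varphi(x)\}$ and the totality of $\{\varphi(x)\}$ in $\HX$ are the two structural facts that must be invoked explicitly.
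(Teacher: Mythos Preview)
Your proof is correct and follows essentially the same approach as the paper: identify generators $\G(x,\cdot)c\leftrightarrow c\otimes\varphi(x)$, verify inner products agree, extend by density, and recover the pointwise formula via the reproducing property together with the Hilbert--Schmidt trace identity. The only cosmetic differences are that the paper defines the isometry in the opposite direction (from $\mG$ to $\HY\otimes\HX$) and leaves uniqueness implicit in the bijectivity of the isometry, whereas you argue it separately via totality of $\{\varphi(x)\}$.
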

\begin{proof}
We explicitly define the isometry $T:\mG\to\HYX$ as the linear operator such that $T(\sum_{i=1}^n \alpha_i \G(x_i,\cdot)c_i) = \sum_{i=1}^n \alpha_i \ c_i \otimes \varphi(x_i)$ for each $n\in\N$, $x_1,\dots,x_n\in\X$ and $c_1,\dots,c_n\in\HY$. By construction, $T(\mG_0) \subseteq \HYX$, with $\mG_0$ the linear space defined at \eqref{eq:construction_of_RKHSvv} and moreover  
\eqal{
\ip{T(\G(x,\cdot)c)}{T(\G(x',\cdot)c')}_{HS}  = \ip{c\otimes\varphi(x)^*}{c'\otimes\varphi(x')^*}_{HS} = \ip{c}{c'}_\HY \ip{\varphi(x)}{\varphi(x')}_\HX \\ 
\quad = \ip{k(x,x') c}{c'}_\HY  = \ip{\G(x,x')c}{c'}_\HY = \ip{\G(x,\cdot)c}{\G(x',\cdot)c'}_\mG
}
implying that $\mG_0$ is isometrically contained in $\HYX$. Since $\HYX$ is complete, also $\overline{T(\mG_0)}\subseteq\HYX$. Therefore $\mG$ is isometrically contained in $\HYX$, since $T(\mG) = T(\overline{\mG_0}) = \overline{T(\mG_0)}$. Moreover note that
\eqals{
    \overline{T(\mG_0)} & = \overline{\Span\{c\otimes\varphi(x) \ | \ x\in\X, c\in\HY \}} = \overline{\Span\{c \ | \ c\in\HY \} \otimes \Span\{\varphi(x) \ | \ x\in\X \}} \\
    & = \overline{\HY \otimes \Span\{\varphi(x) \ | \ x\in\X\} } = \overline{\HY} \otimes \overline{\Span\{\varphi(x) \ | \ x\in\X\} } = \HYX
}
from which we conclude that $\mG$ is isometric to $\HYX$ via $T$.

To prove \eqref{eq:evaluation_HS}, let us consider $x\in\X$ and $g\in\mG$ with $G = T(g) \in\HYX$. Then, $\forall c\in\HY$ we have that 
\begin{align}
    \ip{c}{g(x)}_\HY & = \ip{\G(x,\cdot)c}{g}_\mG = \ip{T(\G(x,\cdot)c)}{G}_{HS} \\ 
    & = \ip{c\ot\varphi(x)}{G}_{HS} = \tr((\varphi(x) \otimes c)G) = \tr(c^*G\varphi(x)) = \ip{c}{G\varphi(x)}_{\HY}.
\end{align}
Since the equation above is true for each $c\in\HY$ we can conclude that $g(x) = T(g)\varphi(x)$ as desired.
\end{proof}

The isometry $\mG \simeq \HYX$ allows to characterize the closed form solution for the surrogate risk introduced in \eqref{eq:surrogate_ls}. We recall that in \autoref{lemma:surrogate-problem-sol}, we have shown that $\mathcal{R}$ always attains a minimizer on $\LXH$. In the following we show that if $\mathcal{R}$ attains a minimum on $\mG \simeq \HYX$, we are able to provide a close form solution for one such element.

\begin{lemma}\label{lemma:expected_risk_solution}
Let $\psi$ and $\HY$ satisfying \autoref{assumption:main} and assume that the surrogate expected risk minimization of ${\cal R}$ at \eqref{eq:surrogate_ls} attains a minimum on $\mG$, with $\mG \simeq\HYX$. Then the minimizer $g^* \in \mG$ of $\mathcal{R}$ with minimal norm $\|\cdot\|_\mG$ is of the form
\begin{equation}
    g^*(x) = G\varphi(x),\quad \forall x \in \X \quad \mbox{with} \quad G = Z^*SC^\dagger \in \HYX.
\end{equation}
\end{lemma}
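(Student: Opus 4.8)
The plan is to use the isometry $\mG \simeq \HYX$ from \autoref{lemma:hxy_bis} to convert the minimization of $\mathcal{R}$ over $\mG$ into the minimization of an explicit convex quadratic over Hilbert--Schmidt operators $G \in \HYX \simeq \HS(\HX,\HY)$, and then read off the minimal-norm solution through a pseudoinverse. First I would parametrize every $g\in\mG$ by its image $G = T(g)\in\HYX$, so that $g(x) = G\varphi(x)$ for all $x\in\X$ by \eqref{eq:evaluation_HS}. Writing $g^*(x) = \int_\Y \psi(y)\,d\rho(y|x)$ for the conditional mean that defines the operators $Z,Z^*,C$ (as in \autoref{lemma:surrogate-problem-sol} and \autoref{prop:basic_operator_result}), the excess-risk identity \eqref{eq:equation_excess_ls_risk} gives
\[ \mathcal{R}(g) - \mathcal{R}(g^*) = \int_\X \|G\varphi(x) - g^*(x)\|_\HY^2 \, d\rhox(x). \]
Expanding the square and using $C = \int_\X \varphi(x)\otimes\varphi(x)\,d\rhox(x)$ and $Z^*S = \int_\X g^*(x)\otimes\varphi(x)\,d\rhox(x)$ from \autoref{prop:basic_operator_result}, the two non-constant terms become $\int_\X \|G\varphi(x)\|_\HY^2\,d\rhox(x) = \tr(G^*GC)$ and $\int_\X \ip{G\varphi(x)}{g^*(x)}_\HY\,d\rhox(x) = \ip{G}{Z^*S}_{HS}$. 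Hence minimizing $\mathcal{R}$ over $\mG$ is equivalent to minimizing $\Phi(G) = \tr(G^*GC) - 2\ip{G}{Z^*S}_{HS}$ over $\HS(\HX,\HY)$.

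Since $C = S^*S \ge 0$, the functional $\Phi$ is convex and differentiable, and setting its Fréchet derivative to zero yields the normal equation $GC = Z^*S$. I would then verify that $G = Z^*S C^\dagger$ is a solution, which rests on the identity $S C^\dagger C = S$: because $C$ is self-adjoint with $\ker C = \ker S$, the operator $C^\dagger C$ is the orthogonal projection onto $\overline{\ran C} = (\ker C)^\perp = (\ker S)^\perp$, and $S$ vanishes on $\ker S$, so $S C^\dagger C = S$ and therefore $Z^*S C^\dagger C = Z^*S$.

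The hard part is well-definedness and the minimal-norm claim. Since $\tr(C) \le \kappa^2 < +\infty$, the operator $C$ is trace-class, hence compact, so $C^\dagger$ is in general unbounded and $Z^*S C^\dagger$ need not be Hilbert--Schmidt; this is exactly where the hypothesis that $\mathcal{R}$ attains its minimum on $\mG$ is needed. Given a minimizer $G_0 \in \HS(\HX,\HY)$ of $\Phi$, the normal equation gives $G_0 C = Z^*S$, whence $Z^*S C^\dagger = G_0 C C^\dagger = G_0\,P_{(\ker S)^\perp}$ is Hilbert--Schmidt, so $G := Z^*S C^\dagger \in \HYX$ is admissible. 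For minimality, every solution of $GC = Z^*S$ has the form $Z^*S C^\dagger + N$ with $NC = 0$, i.e. $N = N P_{\ker C}$; since $\ran(C^\dagger) \subseteq (\ker C)^\perp$ forces $P_{\ker C} C^\dagger = 0$, a cyclicity argument shows the cross term $\ip{Z^*S C^\dagger}{N}_{HS}$ vanishes, giving $\|Z^*S C^\dagger + N\|_{HS}^2 = \|Z^*S C^\dagger\|_{HS}^2 + \|N\|_{HS}^2$. Thus $Z^*S C^\dagger$ is the minimal-norm minimizer of $\Phi$, and transporting back through $T^{-1}$ yields $g^*(x) = G\varphi(x)$ with $G = Z^*S C^\dagger$, as claimed.
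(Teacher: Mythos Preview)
Your proof is correct and follows the same route as the paper: parametrize $g\in\mG$ by $G\in\HYX$ via \autoref{lemma:hxy_bis}, expand $\mathcal{R}$ into the convex quadratic $\tr(GCG^*)-2\tr(GS^*Z)+\text{const}$, and identify the minimal-norm minimizer as $Z^*SC^\dagger$. The only difference is that the paper dispatches the last step by invoking the range condition $\ran(S^*Z)\subseteq\ran(C)$ and the pseudoinverse formula from \cite{engl1996regularization}, whereas you derive the normal equation $GC=Z^*S$ and verify well-definedness ($Z^*SC^\dagger=G_0CC^\dagger=G_0P_{(\ker C)^\perp}\in\HS$) and minimality (orthogonal decomposition of the solution set) by hand; this makes your argument more self-contained but is otherwise the same.
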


\begin{proof} 
By hypothesis we have $g^*\in\mG$. Therefore, by applying Lemma~\ref{lemma:hxy_bis} we have that there exists a unique linear operator $G: \HX \to \HY$ such that $g^*(x) = G\varphi(x), \forall x \in \X$. Now, expanding the least squares loss on $\HY$, we obtain 
\eqal{
    \mathcal{R}(g) & = \int_{\X\times\Y} \|G\varphi(x) - \psi(y)\|_\HY^2 d\rho(x,y) \\ 
  & = \tr(G(\varphi(x)\ot\varphi(x))G^*) d\rhox(x,y) - 2 \tr(G(\varphi(x)\ot\psi(y))) + \int_{\X\times\Y} \|\psi(y)\|_\HY^2  \\
  & = \tr(GCG^*) - 2\tr(GS^*Z) + const.
}
where we have used \autoref{prop:basic_operator_result} and the linearity of the trace. Therefore $\mathcal{R}$ is a quadratic functional, and is convex since $C$ is positive semidefinite. We can conclude that $\mathcal{R}$ attains a minimum on $\mG$ if and only if the range of $S^*Z$ is contained in the range of $C$, namely $\ran(S^*Z) \subseteq \ran(C) \subset \HX$ (see \cite{engl1996regularization} Chap. 2). In this case $G = Z^*SC^\dagger \in \HYX$ exists and is the minimum norm minimizer for $\mathcal{R}$, as desired.
\end{proof}

Analogously to \autoref{lemma:expected_risk_solution}, a closed form solution exists for the regularized surrogate empirical risk minimization problem introduced in \eqref{eq:kde}. We recall that the associated functional is $\hat{\mathcal{R}}_\lambda:\mG\to\R$ defined as
\eqals{
\hat{\mathcal{R}_\lambda}(g) = \hat{\mathcal{R}}(g) + \lambda ~ \|g\|_\mG^2 = \frac{1}{n}\sum_{i=1}^n \|g(x_i) - \psi(y_i)\|_\HY^2 + \la ~ \|g\|_\HY^2
}
for all $g\in\mG$ and $\{(x_i,y_i)\}_{i=1}^n$ points in $\X\times\Y$. The following result characterizes the closed form solution for the empirical risk minimization when $\mG \simeq \HYX$ and guarantees that such a solution always exists.

\begin{lemma}\label{lemma:empirical_risk_solution}
Let $\mG  \simeq \HYX$. For any $\lambda>0$, the solution $\hat{g}_\lambda\in\mG$ of the empirical risk minimization problem at \eqref{eq:kde} exists, is unique and is such that
\eqal{\label{eq:empirical_closed_form}
  \hat{g}_\lambda(x) = \hat{G}_\lambda \varphi(x), \quad \forall x\in\X \quad \mbox{with} \quad \hat{G}_\lambda = \hat{Z}^*\hat{S}(\hat{C} + \lambda)^{-1} \in \HYX.
}
\end{lemma}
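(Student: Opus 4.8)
The plan is to mirror the argument of \autoref{lemma:expected_risk_solution}, exploiting the isometry $\mG \simeq \HYX$ of \autoref{lemma:hxy_bis} to recast the variational problem over $\mG$ as an unconstrained quadratic minimization over the Hilbert--Schmidt operators $G \in \HYX$, and then to observe that the regularization term makes the relevant operator boundedly invertible, thereby removing the range condition that was needed in the unregularized case.

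First I would invoke \autoref{lemma:hxy_bis} to write every $g \in \mG$ as $g(x) = G\varphi(x)$ for a unique $G \in \HYX$, with $\|g\|_\mG^2 = \|G\|_{HS}^2$. Substituting this into $\hat{\mathcal{R}}_\lambda$ and expanding the squared $\HY$-norm exactly as in \autoref{lemma:expected_risk_solution}, but now with the empirical operators, I would obtain
\[
\hat{\mathcal{R}}_\lambda(g) = \tr\big(G(\hat{C}+\lambda)G^*\big) - 2\,\tr\big(G\,\hat{S}^*\hat{Z}\big) + \text{const},
\]
where the quadratic and cross terms arise from the identities $\hat{C} = \hat{S}^*\hat{S} = \tfrac1n\sum_{i=1}^n \varphi(x_i)\otimes\varphi(x_i)$ and $\hat{S}^*\hat{Z} = \tfrac1n\sum_{i=1}^n \varphi(x_i)\otimes\psi(y_i)$, and the penalty $\lambda\|G\|_{HS}^2$ contributes the $\lambda$ inside the quadratic form.

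For existence and uniqueness I would note that, since $\lambda>0$, the self-adjoint operator $\hat{C}+\lambda$ satisfies $\hat{C}+\lambda \geq \lambda I$, so the functional is strongly convex (indeed $\tr(G(\hat{C}+\lambda)G^*)\geq \lambda\|G\|_{HS}^2$) and therefore admits a unique minimizer. Setting the Fréchet derivative of the functional to zero in every direction $H\in\HYX$ gives $\tr\big(H[(\hat{C}+\lambda)G^*-\hat{S}^*\hat{Z}]\big)=0$ for all $H$, i.e.\ the stationarity equation $(\hat{C}+\lambda)G^* = \hat{S}^*\hat{Z}$; because $\hat{C}+\lambda$ is boundedly invertible this yields the unique solution $G^* = (\hat{C}+\lambda)^{-1}\hat{S}^*\hat{Z}$, hence $\hat{G}_\lambda = \hat{Z}^*\hat{S}(\hat{C}+\lambda)^{-1}$ after taking adjoints and using self-adjointness of $\hat{C}+\lambda$. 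To see $\hat{G}_\lambda \in \HYX$, observe that $\hat{Z}^*\hat{S}$ has rank at most $n$ and is therefore Hilbert--Schmidt, and composing with the bounded operator $(\hat{C}+\lambda)^{-1}$ preserves this property; the claim $\hat{g}_\lambda(x) = \hat{G}_\lambda\varphi(x)$ then follows from the isometry of \autoref{lemma:hxy_bis}.

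The computations are routine; the main conceptual point — and the only genuine difference from \autoref{lemma:expected_risk_solution} — is precisely that the regularization forces $\ran(\hat{S}^*\hat{Z})\subseteq\HX = \ran(\hat{C}+\lambda)$ automatically, so existence holds \emph{unconditionally}, with no analogue of the range condition $\ran(S^*Z)\subseteq\ran(C)$ required in the population case. The one step deserving care is the adjoint bookkeeping in the stationarity equation, since $G:\HX\to\HY$ whereas $\hat{S}^*\hat{Z}:\HY\to\HX$, which I would verify explicitly to ensure the operators in the final formula are composed in the correct order.
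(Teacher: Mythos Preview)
Your proposal is correct and follows precisely the approach the paper indicates: the paper's own proof simply says the argument is analogous to that of \autoref{lemma:expected_risk_solution}, noting that $(\hat{C}+\lambda)^{-1}$ is bounded for $\lambda>0$ so the range condition is automatic and uniqueness follows from strict convexity. Your write-up fills in exactly those details --- the isometry, the quadratic expansion in $G$, strong convexity from $\hat{C}+\lambda \geq \lambda I$, and the stationarity computation --- and correctly highlights that the regularization removes the range condition needed in the population case.
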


\begin{proof}
The proof of is analogous to that of \autoref{lemma:expected_risk_solution} and we omit it. Note that, since $(\hat{C} + \la)^{-1}$ is always bounded for $\la > 0$, its range corresponds to $\HX$ and therefore the range of $S^*Z$ is always contained in it. Then $\hat{G}_\lambda$ exists for any $\la > 0$ and is unique since $\|\cdot\|_\HY^2$ is strictly convex.
\end{proof}

The closed form solutions provided by \autoref{lemma:expected_risk_solution} and \autoref{lemma:empirical_risk_solution} will be key in the analysis of the structured prediction algorithm \ref{eq:algorithm} in the following.

\subsection{The Structured Prediction Algorithm}

In this section we prove that \ref{eq:algorithm} corresponds to the decoding of the surrogate empirical risk minimizer $\hat{g}$ (\eqref{eq:kde}) via a map $\decoding: \HY \to \Y$ satisfying \eqref{eq:decoding_function}.

Recall that in \autoref{lemma:hxy_bis} we proved that the vector-valued RKHS $\mG$ induced by a kernel $\G(x,x') = k(x,x') I_\HY$, for a scalar kernel $k$ on $\X$, is isometric to $\HYX$. For the sake of simplicity, in the following, with some abuse of notation, we will not make the distinction between $\mG$ and $\HYX$ when it is clear from context.

\Pcomputable*

\begin{proof}
From \autoref{lemma:empirical_risk_solution} we know that $\hat{g}(x) = \hat{Z}^*\hat{S}(\hat{C}+\la)^{-1}\varphi(x)$ for all $x\in\X$. Recall that $\hat{C} = \hat{S}^*\hat{S}$ and $\K = n\hat{S}\hat{S}^*\in\R^{n \times n}$, is the empirical kernel matrix associated to the inputs, namely such that $\K_{ij} = k(x_i,x_j)$ for each $i,j=1,\dots,n$. Therefore we have $\hat{S}(\hat{C} + \la)^{-1} = \sqrt{n}(\K+\la n)^{-1}\hat{S}:\HX\to\R^n$. Now, by denoting $\Kx = \sqrt{n}\hat{S}\varphi(x) = (k(x_i,x))_{i=1}^n\in\R^n$, we have 
\eqals{
\hat{S}(\hat{C}+\la)^{-1}\varphi(x) = (\K + \la n I)^{-1}\Kx = \alpha(x) \in \R^n.
}
Therefore, by applying the definition of the operator $\hat{Z}:\HY\to\R^n$, we have 
\eqal{
    \hat{g}(x) = \hat{Z}^* \hat{S}(\hat{C}+\la)^{-1}\varphi(x) = \hat{Z}^* \alpha(x) = \sum_{i=1}^n \alpha_i(x) \psi(y_i), \quad \forall x  \in \X
}
By plugging $\hat{g}(x)$ in the functional minimized by the decoding (\eqref{eq:decoding_function}),
\eqal{
    \ip{\psi(y)}{V\hat{g}(x)}_\HY = \ip{\psi(y)}{& V\hat{Z}^*\alpha(x)}_\HY = \ip{\psi(y)}{\sum_{i=1}^n \alpha_i(x)\psi(y_i)}_\HY \\
    & = \sum_{i=1}^n \alpha_i(x) \ip{\psi(y)}{V\psi(y_i)}_\HY = \sum_{i=1}^n \alpha_i(x) \loss(y,y_i),
}
where we have used the bilinear form for $\loss$ in \autoref{assumption:main} for the final equation. We conclude that
\eqal{
    d\circ \hat{g}(x) \in \argmin_{y\in\Y} \ip{\psi(y)}{V\hat{g}(x)}_\HY = \argmin_{y\in\Y} \sum_{i=1}^n \alpha_i(x) \loss(y,y_i)
}
as required.
\end{proof}

\autoref{prop:computable} focuses on the computational aspects of \ref{eq:algorithm}. In the following we will analyze its statistical properties.

\subsection{Universal Consistency}\label{sec:appendix_bounds}

In following, we provide a probabilistic bound on the excess risk $\mathcal{E}(\decoding \circ g) - \mathcal{E}(f^*)$ for any $g\in\mG~\simeq \HYX$ that will be key to prove both universal consistency (\autoref{teo:universal_consistency}) and generalization bounds (\autoref{teo:simple_bound}). To do so, we will make use of the comparison inequality from \autoref{teo:comparison_inequality} to control the structured excess risk by means of the excess risk of the surrogate $\mathcal{R}(g) - \mathcal{R}(g^*)$. Note that the surrogate problem consists in a vector-valued kernel ridge regression estimation. In this setting, the problem of finding a probabilistic bound has been studied (see \cite{caponnetto2007} and references therein). Indeed, our proof will consist of a decomposition of the surrogate excess risk that is similar to the one in \cite{caponnetto2007}. However, note that we cannot direct apply \cite{caponnetto2007} to our setting, since in \cite{caponnetto2007} the operator-valued kernel $\G$ associated to $\mG$ is required to be such that $\G(x,x')$ is trace class $\forall x,x'\in\X$, which does not hold for the kernel used in this work, namely $\G(x,x') = k(x,x')I_\HY$ when $\HY$ is infinite dimensional.

In order to express the bound on the excess risk more compactly, here we introduce a measure for the approximation error of the surrogate problem. According to \cite{caponnetto2007}, we define the following quantity
\eqal{\label{eq:Alambda}
{\cal A}(\la) = \la \| Z^* (L + \la)^{-1} \|_{HS}.
}
\begin{lemma}\label{lemma:prob-bound}
Let $\Y$ be compact, $\loss:\Y\times\Y\to\R$ satisfying \autoref{assumption:main} and $k$ a bounded positive definite kernel on $\X$ with $\sup_{x\in\X}k(x,x) \leq \kappa^2$ and associated RKHS $\HX$. Let $\rho$ a Borel probability measure on $\X\times\Y$ and $\{(x_i,y_i)\}_{i=1}^n$ independently sampled according to $\rho$. Let $f^*$ be a solution of the problem in \eqref{eq:expected_risk_minimization}, $\hat{f} = \decoding \circ \hat{g}$ as in \ref{eq:algorithm}. Then, for any $\la \leq \kappa^2$ and $\delta > 0$, the following holds with probability $1 - \delta$:
\eqals{
\E(\hat{f}) - \E(f^*) \leq 8 \kappa Q \|V\| \frac{Q + {\cal B}(\la)}{\sqrt{\la n}}\left(1 + \sqrt{\frac{4\kappa^2}{\la \sqrt{n}}}\right)\log^2 \frac{8}{\delta} + 2 Q \|V\|{\cal A}(\la),
}
with ${\cal B}(\la) = \kappa\|Z^* S (C +\la)^{-1}\|_{HS}$.
\end{lemma}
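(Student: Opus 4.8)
The plan is to reduce the structured excess risk to the surrogate least-squares error and then control the latter by a bias--variance decomposition of the vector-valued kernel ridge regression estimator $\hat{g}$. First I would invoke the comparison inequality (\autoref{teo:comparison_inequality}) together with the residual identity \eqref{eq:residual_error_equivalence} to obtain $\E(\hat{f}) - \E(f^*) \leq 2\Q\|V\|\,\|\hat{g} - g^*\|_{\rhox}$, so that everything rests on bounding $\|\hat{g}-g^*\|_{\rhox}$. Introducing the population regularized solution $g_\la(x) = G_\la\varphi(x)$ with $G_\la = Z^*S(C+\la)^{-1} \in \HYX$ (well defined and Hilbert--Schmidt since $\|G_\la\|_{HS} = \mathcal{B}(\la)/\kappa < +\infty$), the triangle inequality splits the error into an \emph{approximation} term $\|g_\la - g^*\|_{\rhox}$ and a \emph{sample} term $\|\hat{g} - g_\la\|_{\rhox}$.

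For the approximation term I would prove the exact identity $\|g_\la - g^*\|_{\rhox} = \mathcal{A}(\la)$. Decomposing along an orthonormal basis $\{e_j\}$ of $\HY$, the separable kernel $k(\cdot,\cdot)I_\HY$ makes the problem decouple into scalar ridge regressions whose regression functions are the components $Ze_j \in \LX$ of $g^*$. The push-through identity $S(C+\la)^{-1}S^* = L(L+\la)^{-1}$ then shows that the $j$-th component of $g_\la$ is $L(L+\la)^{-1}Ze_j$, so that the residual equals $-\la(L+\la)^{-1}Ze_j$. Summing the squared $\LX$-norms gives $\|g_\la - g^*\|_{\rhox}^2 = \la^2\|(L+\la)^{-1}Z\|_{HS}^2 = \mathcal{A}(\la)^2$, which yields the final summand $2\Q\|V\|\mathcal{A}(\la)$.

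The sample term is the crux. Using $\|\hat{g}-g_\la\|_{\rhox} = \|(\hat{G}_\la - G_\la)C^{1/2}\|_{HS}$ (from the identity $\|G\varphi(\cdot)\|_{\rhox}^2 = \tr(GCG^*)$) together with the closed forms $\hat{G}_\la = \hat{Z}^*\hat{S}(\hat{C}+\la)^{-1}$ and $G_\la = Z^*S(C+\la)^{-1}$, I would use the algebraic split $\hat{G}_\la - G_\la = (\hat{Z}^*\hat{S} - Z^*S)(\hat{C}+\la)^{-1} + Z^*S(\hat{C}+\la)^{-1}(C - \hat{C})(C+\la)^{-1}$, where the second piece comes from the resolvent identity. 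Multiplying on the right by $C^{1/2}$ and bounding by products of the form $\|A\|_{HS}\|B\|$, each factor is controlled by two concentration events: (i) an operator Bernstein inequality for the i.i.d. self-adjoint variables $\varphi(x_i)\otimes\varphi(x_i)$ (bounded by $\kappa^2$), which controls $\|(C+\la)^{-1/2}(\hat{C}-C)(C+\la)^{-1/2}\|$, hence the empirical-to-population ratio $\|(\hat{C}+\la)^{-1}(C+\la)^{1/2}\|$ and the $1/\sqrt{\la}$ produced by $\|(\hat{C}+\la)^{-1}C^{1/2}\|$, yielding the multiplicative factor $(1 + \sqrt{4\kappa^2/(\la\sqrt{n})})$; and (ii) a Hilbert--Schmidt Bernstein (Pinelis-type) inequality for the mean-zero variables $\psi(y_i)\otimes\varphi(x_i) - Z^*S$, bounded in HS norm by $\Q\kappa$, giving $\|\hat{Z}^*\hat{S} - Z^*S\|_{HS} \lesssim \Q\kappa/\sqrt{n}$. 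The $\Q$ and $\mathcal{B}(\la)$ in the numerator track respectively the noise (first piece) and the factor $\|Z^*S(C+\la)^{-1}\|_{HS} = \mathcal{B}(\la)/\kappa$ appearing in the second piece, while a union bound over the two events together with the logarithmic dependence of the Bernstein tails collects the $\log^2(8/\delta)$.

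The main obstacle is precisely step (ii): because $\HY$ is infinite dimensional, the operator-valued kernel $\G = kI_\HY$ fails to be trace class, so the effective-dimension machinery of \cite{caponnetto2007} cannot be applied directly. The resolution I would rely on is that boundedness of $\psi$ (i.e. $\Q < +\infty$) keeps the HS norm of the noise variables bounded by $\Q\kappa$ and keeps $\|Z\|_{HS} = \|g^*\|_{\rhox}$ finite (\autoref{prop:basic_operator_result}), so that $\mathcal{A}(\la)$ and $\mathcal{B}(\la)$ are finite and a Hilbert-space Bernstein bound replaces the trace-class argument throughout.
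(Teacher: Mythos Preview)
Your proposal is correct and follows essentially the same route as the paper: both reduce to the surrogate $L^2$ error via the comparison inequality, rewrite it as an operator Hilbert--Schmidt norm, and split it into the same three pieces (noise term, resolvent perturbation term, and approximation term $\mathcal{A}(\lambda)$), controlling the first two via Hilbert-space Bernstein bounds on $\hat Z^*\hat S - Z^*S$ and $\hat C - C$. The only minor discrepancy is that you invoke a preconditioned operator-Bernstein bound on $\|(C+\lambda)^{-1/2}(\hat C - C)(C+\lambda)^{-1/2}\|$ to control the empirical-to-population ratio, whereas the paper uses the cruder route $\|(\hat C+\lambda)^{-1}S^*\| \le \lambda^{-1/2}\bigl(1+\lambda^{-1/2}\|C-\hat C\|^{1/2}\bigr)$ together with a plain Hilbert--Schmidt concentration for $\|C-\hat C\|$; this elementary bound is in fact what produces exactly the factor $\bigl(1+\sqrt{4\kappa^2/(\lambda\sqrt{n})}\bigr)$ you quote, so your narrative matches the paper's constants only if you downgrade to that simpler step.
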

\begin{proof}
According to \autoref{teo:comparison_inequality}
\eqal{
\E(d \circ \hat{g}) -  \E(f^*) \leq 2 Q \|V\|\sqrt{{\cal R}(\hat{g}) - {\cal R}(g^*)}.
}
From \autoref{lemma:empirical_risk_solution} we know that $\hat{g}(x) = \hat{G}\varphi(x)$ for all $x\in\X$, with $\hat{G} = \hat{Z}^*\hat{S}(\hat{C}+\la)^{-1} \in \HYX$. By \autoref{lemma:surrogate-problem-sol}, we know that $g^*(x) =  \int_\Y \psi(y) d\rho(y|x)$ almost everywhere on the support of $\rhox$. Therefore, a direct application of \autoref{prop:basic_operator_result} leads to
\eqals{
  {\cal R} & (\hat{g}) - {\cal R}(g^*) = \int \|\hat{g}(x) - g^*(x)\|_\HY^2 d\rhox(x) = \\
  & = \int_\X \|\hat{G}\varphi(x)\|_\HY^2 - 2 \ip{\hat{G}\varphi(x)}{g^*(x)}_\HY + \|g^*(x)\|_\HY^2 d\rhox(x) \\
  & = \int_\X \tr\left(\hat{G} \Big( \varphi(x) \otimes \varphi(x) \Big) \hat{G}^*\right) - 2 \tr\left(\hat{G} \Big(\varphi(x)\otimes g^*(x)\Big)\right) + \tr(g^*(x) \otimes g^*(x)) d\rhox(x) \\
  & = \tr(\hat{G}S^*S\hat{G}) - 2\tr(\hat{G}S^*Z) + \tr(Z^*Z) = \|\hat{G}S^* - Z^*\|_{HS}^2
}  
To bound $\|\hat{G}S^* - Z^*\|_{HS}$, we proceed with a decomposition similar to the one in \cite{caponnetto2007}. In particular $\|\hat{G}S^* - Z^*\|_{HS} \leq A_1 + A_2 + A_3$, with
\eqals{
  A_1 &= \|\hat{Z}^*\hat{S}(\hat{C}+\la)^{-1}S^* - Z^* S (\hat{C}+\la)^{-1} S^* \|_{HS} \\
  A_2 &= \|Z^* S (\hat{C}+\la)^{-1} S^* - Z^* S (C+\la)^{-1} S^* \|_{HS} \\
  A_3 &= \|Z^* S (C+\la)^{-1} S^* - Z^* \|_{HS}.
}
Let $\tau = \delta/4$.
Now, for the term $A_1$, we have
\eqals{
  A_1 & \leq \|\hat{Z}^*\hat{S} - Z^* S\|_{HS} \| (\hat C + \la)^{-1} S^*\|.
}
To control the term $\|\hat{Z}^*\hat{S} - Z^* S\|_{HS}$, note that $\hat{Z}^*\hat{S} = \frac{1}{n} \sum_{i=1}^{n}\zeta_i$ with $\zeta_i$ the random variable $\zeta_i = \psi(y_i) \otimes \varphi(x_i)$. By \autoref{prop:basic_operator_result}, for any $1 \leq i \leq n$ we have
\eqals{
\mathbb{E} \zeta_i = \int \psi(y) \otimes \varphi(x) d\rho(x,y) = Z^*S,
}
and
\eqals{
 \|\zeta_i\|_{HS} \leq \sup_{y\in\Y}\|\psi(y)\|_\HY \sup_{x\in\X}\|\varphi(x)\|_\HX  \leq \Q\kappa,
}
almost surely on the support of $\rho$ on $\X\times\Y$, and so $\mathbb{E} \|\zeta_i\|_{HS}^2 \leq Q^2\kappa^2$. Thus, by applying Lemma~2 of \cite{smale2007learning}, we have
\eqals{
\|\hat{Z}^*\hat{S} - Z^* S\|_{HS} \leq \frac{2 Q\kappa \log\frac{2}{\tau}}{n} + \sqrt{\frac{2Q^2\kappa^2\log\frac{2}{\tau}}{n}} \leq \frac{4 Q\kappa \log\frac{2}{\tau}}{n}
}
with probability $1 - \tau$, since $\log 2/\tau \geq 1$. To control $\| (\hat C + \la)^{-1} S^*\|$ we proceed by recalling that $C = S^* S$ and that for any $\la>0$ $\|(\hat{C}+\la)^{-1}\| \leq \la^{-1}$ and $\|\hat{C}(\hat{C}+\la)^{-1}\|\leq 1$. We have
\eqals{
\| (\hat C + \la)^{-1} S^*\| & = \| (\hat C + \la)^{-1} C (\hat C + \la)^{-1}\|^{1/2} \\
& \leq\| (\hat C + \la)^{-1} (C - \hat{C}) (\hat C + \la)^{-1}\|^{1/2} + \| (\hat C + \la)^{-1} \hat{C} (\hat C + \la)^{-1}\|^{1/2} \\
& \leq \| (\hat C + \la)^{-1}\| \|C - \hat{C}\|^{1/2} + \| (\hat C + \la)^{-1} \|^{1/2} \|\hat{C} (\hat C + \la)^{-1}\|^{1/2}  \\
& \leq \la^{-1/2}(1 + \la^{-1/2} \|C - \hat{C}\|^{1/2}).
}
To control $\|C - \hat{C}\|$, note that $\hat{C} = \frac{1}{n}\sum_{i=1}^n \zeta_i$ where $\zeta_i$ is the random variable defined as $\zeta_i = \varphi(x_i)\otimes \varphi(x_i)$ for $1 \leq i \leq n$. Note that ${\mathbb E} \zeta_i = C$, $\|\zeta_i\| \leq \kappa^2$ almost surely and so $\mathbb{E} \|\zeta_i\|^2 \leq \kappa^4$ for $1 \leq i \leq n$. Thus we can again apply Lemma~2 of \cite{smale2007learning}, obtaining
\eqals{
\|C - \hat{C}\| \leq \|C - \hat{C}\|_{HS} \leq \frac{2 \kappa^2 \log\frac{2}{\tau}}{n} + \sqrt{\frac{2 \kappa^4 \log\frac{2}{\tau}}{n}} \leq \frac{4\kappa^2 \log \frac{2}{\tau}}{\sqrt{n}},
}
with probability $1-\tau$. Thus, by performing an intersection bound, we have
\eqals{
A_1 \leq \frac{4Q\kappa\log\frac{2}{\tau}}{\sqrt{\la n}}\left(1 + \sqrt{\frac{4\kappa^2 \log \frac{2}{\tau}}{\la \sqrt{n}}}\right).
}
with probability $1 - 2\tau$.
The term $A_2$ can be controlled as follows
\eqals{
  A_2 &= \|Z^* S (\hat{C}+\la)^{-1} S^* - Z^* S (C+\la)^{-1} S^* \|_{HS}  \\
      & = \|Z^* S ( (\hat{C}+\la)^{-1} - (C+\la)^{-1} ) S^* \|_{HS} \\
    &= \|Z^* S (C +\la)^{-1} (C - \hat{C}) (\hat{C}+\la)^{-1} S^* \|_{HS} \\ 
  & \leq \|Z^* S (C +\la)^{-1}\|_{HS} \|C - \hat{C}\| \|(\hat{C} + \la)^{-1} S^*\| \\
  & = k^{-1} \mathcal{B}(\la) \|C - \hat{C}\| \|(\hat{C} + \la)^{-1} S^*\|
}
where we have used the fact that for two invertible operators $A$ and $B$ we have $A^{-1} - B^{-1} = A^{-1}(B-A)B^{-1}$. Now, by controlling $ \|C - \hat{C}\|,~\| (\hat C + \la)^{-1} S^*\|$ as for $A_1$ and performing an intersection bound, we have
\eqals{
A_2 & \leq \frac{4{\cal B}(\la)\kappa \log \frac{2}{\tau}}{\sqrt{\la n}}\left(1 + \sqrt{\frac{4\kappa^2 \log \frac{2}{\tau}}{\la \sqrt{n}}}\right)
}
with probability $1 - 2\tau$. Finally the term $A_3$ is equal to
\eqals{
  A_3 & = \|Z^* (S (C+\la)^{-1} S^* - I) \|_{HS} = \|Z^* (L (L+\la)^{-1} - I) \|_{HS} \\ 
  & = \|Z^* (L(L+\la)^{-1} - (L+\la)(L+\la)^{-1}) \|_{HS} = \la \|Z^* (L + \la)^{-1}\|_{HS} = \mathcal{A}(\la)
}
where $I$ denotes the identity operator. Thus, by performing an intersection bound of the events for $A_1$ and $A_2$, we have
\eqals{
\E(\hat{f}) - \E(f^*) \leq 8 \kappa Q \|V\| \frac{Q + {\cal B}(\la)}{\sqrt{\la n}}\left(1 + \sqrt{\frac{4\kappa^2}{\la \sqrt{n}}}\right)\log^2 \frac{2}{\tau} + 2 Q \|V\|{\cal A}(\la). 
}
with probability $1 - 4\tau$. Since $\delta = 4\tau$ we obtain the desired bound.
\end{proof}
Now we are ready to give the universal consistency result.
\Tuniversal*
\begin{proof}
By applying \autoref{lemma:prob-bound}, we have
\eqals{
\E(\hat{f}) - \E(f^*) \leq 8 \kappa Q \|V\| \frac{Q + {\cal B}(\la)}{\sqrt{\la n}}\left(1 + \sqrt{\frac{4\kappa^2}{\la \sqrt{n}}}\right)\log^2 \frac{8}{\delta} + 2 Q \|V\|{\cal A}(\la),
}
with probability $1-\delta$. Note that, since $C = S^*S$, $\|(C+\la)^{-1}\| \leq \la^{-1}$ and $\|C(C+\la)^{-1}\| \leq 1$, we have
\eqals{
\kappa^{-1} {\cal B}(\la) & = \|Z^*S(C+\la)^{-1}\|_{HS} \leq \|Z\|_{HS}\|S(C+\la)^{-1}\| \\
& \leq \|Z\|_{HS} \|(C+\la)^{-1} S^*S(C+\la)^{-1}\|^{1/2}\\
& \leq \|Z\|_{HS} \|(C+\la)^{-1}\|^{1/2}\|C(C+\la)^{-1}\|^{1/2} \\
& \leq \|Z\|_{HS} \la^{-1/2}. 
}
Therefore
\eqals{
\E(\hat{f}) - \E(f^*) \leq 8 \kappa Q \|V\| \frac{Q + \frac{\kappa}{\sqrt{\la}}\|Z\|_{HS} }{\sqrt{\la n}}\left(1 + \sqrt{\frac{4\kappa^2}{\la \sqrt{n}}}\right)\log^2 \frac{8}{\delta} + 2 Q \|V\|{\cal A}(\la),
}
Now by choosing $\la = \kappa^2 n^{-1/4}$, we have
\eqals{
\E(\hat{f}_n) - \E(f^*) \leq 24 Q \|V\| (Q + \|Z\|_{HS})n^{-1/4} \log^2\frac{8}{\delta}  + 2\Q\|V\|{\cal A}(\la), 
}
with probability $ 1- \delta$. Now we study ${\cal A}(\la)$, let $L = \sum_{i \in \N} \sigma_i \ u_i \otimes u_i$ be the eigendecomposition of the compact operator $L$, with $\sigma_i \geq \sigma_j > 0$  for $1 \leq i \leq j \in \N$ and $u_i \in L^2(\X,\rhox)$. Now, let $w_i^2 = \ip{u_i}{ZZ^* u_i}_{\LXR} = \int \ip{g^*(x)}{u_i}_\HY^2 d\rhox(x)$ for $i \in \N$. We need to prove that $(u_i)_{i \in \N}$ is a basis for $\LX$.
Let $W \subseteq \X$ be the support of $\rhox$, note that $W$ is compact and Polish since it is closed and subset of the compact Polish space $\X$. Let ${\cal L}$ be the RKHS defined by ${\cal L} = \overline{\Span\{k(x,\cdot)~|~x \in W\}}$, with the same inner product of $\HX$. By the fact that $W$ is a compact polish space and $k$ is continuous, then ${\cal L}$ is separable. By the universality of $k$ we have that ${\cal L}$ is dense in $C(W)$, and, by Corollary 5 of \cite{micchelli2006universal}, we have $C(W) = \overline{\Span\{u_i~|~i \in \N\}}$. Thus, since $C(W)$ is dense in $\LX$, we have $(u_i)_{i \in \N}$ is a basis of $\LX$.
 Thus $\sum_{i \in \N} w_i^2 = \int \|g^*(x)\|_\HY^2 d\rhox(x) = \|Z\|_{HS}^2 < \infty$. Therefore
\eqals{
{\cal A}(\la_n)^2 = \la_n^2 \| Z^* (L + \la_n)^{-1} \|_{HS}^2 = \sum_{i \in \N} \frac{\la_n^2 w_i^2}{(\sigma_i + \la)^2}.
}
Let $t_n = n^{-1/8}$, and $T_n = \{i \in \N \ | \ \sigma_i \geq t_n\} \subset \N$. For any $n\in\N$ we have
\eqals{
{\cal A}(\la_n) &= \sum_{i\in T_n} \frac{\la_n^2 w_i^2}{(\sigma_i + \la_n)^2} + \sum_{i \in \N\setminus T_n} \frac{\la_n^2 w_i^2}{(\sigma_i + \la_n)^2} \\
& \leq \frac{\la_n^2}{t_n^2}\sum_{i \in T_n} w_i^2 + \sum_{i \in \N\setminus T_n} w_i^2  \leq \kappa^4 \|Z\|_{HS}^2 ~ n^{-1/4} + \sum_{i\in\N\setminus T_n} w_i^2
}
since $\la_n/t_n = \kappa^2 n^{-1/4}/n^{-1/8} = \kappa^2 n^{-1/8}$. We recall that $L$ is a trace class operator, namely $\tr(L) = \sum_{i=1}^{+\infty} \sigma_i < +\infty$. Therefore $\sigma_i\to0$ for $i\to+\infty$, from which we conclude
\eqals{
0 \leq \lim_{n \to \infty} {\cal A}(\la_n) \leq \lim_{n \to \infty} \kappa^4 \|Z\|_{HS}^2 n^{-1/4} + \sum_{i \in \N\setminus T_n} w_i^2 = 0.
}
Now, for any $n\in\N$, let $\delta_n = n^{-2}$ and $E_n$ be the event associated to the equation
\eqals{
\E(\hat{f}_n) - \E(f^*) > 24 \Q \|V\| (\Q + \|Z\|_{HS})n^{-1/4} \log^2(8n^2)  + 2\Q \|V\|{\cal A}(\la).
}
By \autoref{lemma:prob-bound}, we know that the probability of $E_n$ is at most $\delta_n$. Since $\sum_{n=1}^{+\infty} \delta_n < +\infty$, we can apply the Borel-Cantelli lemma (Theorem 8.3.4. pag 263 of \cite{dudley2002real}) on the sequence $(E_n)_{n \in \N}$ and conclude that the statement
\eqals{
\lim_{n \to \infty} \E(\hat{f}_n) - \E(f^*) > 0,
}
holds with probability $0$. Thus, the converse statement
\eqals{
\lim_{n \to \infty} \E(\hat{f}_n) - \E(f^*) = 0.
}
holds with probability $1$.
\end{proof}

\subsection{Generalization Bounds}

Finally, we show that under the further hypothesis that $g^*$ belongs to the RKHS $\mG \simeq \HYX$, we are able to prove generalization bounds for the structured prediction algorithm.

\Tsimplebound*
\begin{proof}
By applying \ref{lemma:prob-bound}, we have
\eqals{
\E(\hat{f}) - \E(f^*) \leq 8 \kappa Q \|V\| \frac{Q + {\cal B}(\la)}{\sqrt{\la n}}\left(1 + \sqrt{\frac{4\kappa^2}{\la \sqrt{n}}}\right)\log^2 \frac{8}{\delta} + 2 Q \|V\|{\cal A}(\la),
}
with probability $1-\delta$. By assumption, $g^*\in\mG$ and therefore there exists a $G \in \HYX$ such that 
\eqals{
g^*(x) = G \varphi(x), \quad \forall x \in \X.
}
This implies that $Z^* = G S^*$ since, by definition of $Z$ and $S$, for any $h\in\LX$,
\eqals{
Z^*h = \int_\X g^*(x)h(x) d\rhox(x) = \int_\X G \varphi(x)h(x) d\rhox(x) = GS^*h.
}
Thus, since $L = S S^*$ and $\|(L+\la)^{-1} L\| \leq 1$ and $\|(L+\la)^{-1}\| \leq \la^{-1}$ for any $\la > 0$, we have
\eqals{
{\cal A}(\la) &= \la \|Z^* (L + \la)^{-1}\|_{HS} = \la \|G S^* (L + \la)^{-1}\|_{HS} \\
& \leq \la \|G\|_{HS} \|S^*(L+\la)^{-1}\| = \la \|G\|_{HS} \|(L+\la)^{-1} S S^*(L+\la)^{-1}\|^{1/2} \\
& \leq \la \|G\|_{HS} \|(L+\la)^{-1} L\|^{1/2} \|(L+\la)^{-1}\|^{1/2} \\
& \leq \la^{1/2} \|G\|_{HS}.
}
Moreover, since $C = S^* S$, we have
\eqals{
\kappa^{-1}{\cal B}(\la) & = \|Z^* S (C + \la)^{-1}\|_{HS} = \|G S^*S (C + \la)^{-1}\|_{HS} \\
& \leq \|G\|_{HS} \|C (C + \la)^{-1}\| \\
& \leq \|G\|_{HS}.
}
Now, let $\la = \kappa^2 n^{-1/4}$, we have
\eqals{
\E(\hat{f}) - \E(f^*) & \leq 24 \Q \|V\| (\Q + \kappa \|G\|_{HS}) n^{-1/4}\log^2 \frac{8}{\delta} + 2 \Q \|V\| \kappa n^{-1/4} \\
& \leq 24 \Q \|V\| (\Q + \kappa \|G\|_{HS} + \kappa) n^{-1/4}\log^2 \frac{8}{\delta} \\
& = c\tau^2 n^{-1/4}
}
with probability $1 - 8 e^{-\tau}$, where we have set $\delta = 8 e^{-\tau}$ and $c = 24 \Q \|V\| (\Q + \kappa \|G\|_{HS} + \kappa)$ to obtain the desired inequality.
\end{proof}

\section{Examples of Loss Functions}\label{sec:losses}

In this section we prove \autoref{teo:taxonomy} to show that a wide range of functions $\loss:\Y\times\Y\to\R$ useful for structured prediction learning satisfies the loss trick (\autoref{assumption:main}). In the following we state \autoref{teo:taxonomy}, then we use it to prove that all the losses considered in \autoref{example:general} satisfy \autoref{assumption:main}. Finally we give two lemmas, necessary to prove \autoref{teo:taxonomy} and then conclude with its proof.

\begin{restatable}{theorem}{Ttaxonomy}\label{teo:big_taxonomy}\label{teo:taxonomy}
Let $\Y$ be a set. A function $\loss: \Y \times \Y \to \R$ satisfy \autoref{assumption:main} when at least one of the following conditions hold:
\begin{enumerate}
\item $\Y$ is a finite set, with discrete topology.
\item $\Y = [0, 1]^d$ with $d \in \N$, and the mixed partial derivative $L(y,y') = \frac{\partial^{2d} \loss(y_1,\dots,y_d,y'_1,\dots,y'_d)}{\partial y_1,\dots,\partial y_d, \partial y'_1,\dots,\partial y'_d}$ exists almost everywhere, where $y = (y_i)_{i=1}^d, y' = (y'_i)_{i=1}^d \in \Y$, and satisfies
\eqal{\label{eq:mix-der-def}
\int_{\Y\times \Y} |L(y,y')|^{1+\epsilon} dydy' < \infty, \quad \textrm{with} \quad \epsilon > 0.
}
\item $\Y$ is compact and $\loss$ is a continuous kernel, or $\loss$ is a function in the RKHS induced by a kernel $K$. Here $K$ is a continuous kernel on $\Y \times \Y$, of the form 
$$K((y_1,y_2),(y_1',y_2')) = K_0(y_1,y_1') K_0(y_2,y_2'), \quad \forall y_i,y_i' \in \Y, i = 1,2,$$
with $K_0$ a bounded and continuous kernel on $\Y$.
\item $\Y$ is compact and
$$\Y \subseteq \Y_0, \quad \loss = \loss_0|_{\Y},$$ 
that is the restriction of $\loss_0:\Y_0 \times \Y_0 \to \R$ on $\Y$, and $\loss_0$ satisfies \autoref{assumption:main} on $\Y_0$,
\item $\Y$ is compact and 
$$\loss(y,y') = f(y)\loss_0(F(y),G(y'))g(y'),$$
with $F, G$ continuous maps from $\Y$ to a set $\cal Z$ with $\loss_0:{\cal Z}\times{\cal Z}\to\R$ satisfying \autoref{assumption:main} and $f, g: \Y \to \R$, bounded and continuous. 
\item $\Y$ compact and 
$$\loss = f(\loss_1,\dots,\loss_p),$$ 
where $f:[-M, M]^d \to \R$ is an analytic function (e.g. a polynomial), $p \in \N$ and $\loss_1,\dots,\loss_p$ satisfy \autoref{assumption:main} on $\Y$. Here $M \geq \sup_{1\leq i \leq p}\|V_i\|C_i$ where $V_i$ is the operator associated to the loss $\loss_i$ and $C_i$ is the value that bounds the norm of the feature map $\psi_i$ associated to $\loss_i$, with $i \in \{1,\dots,p\}$.
\end{enumerate}
\end{restatable}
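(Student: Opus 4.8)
The plan is to prove each of the six conditions separately, in each case exhibiting an explicit triple $(\HY,\psi,V)$ realizing \eqref{eq:linearized_loss}; since the conclusion requires only that \emph{at least one} condition hold, the six arguments are logically independent, though several share a common assembly mechanism. I would dispatch the base cases first. For (1), with $\Y=\{y_1,\dots,y_N\}$, take $\HY=\R^N$, $\psi(y_i)=e_i$ the canonical basis, and $V$ the matrix $V_{ij}=\loss(y_i,y_j)$; continuity of $\psi$ is automatic in the discrete topology and $V$ is trivially bounded. Condition (4) is immediate: if $\loss=\loss_0|_\Y$, then the restriction $\psi_0|_\Y$ and the \emph{same} operator $V_0$ witness \autoref{assumption:main} on $\Y$, continuity being inherited under restriction. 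Condition (3) is the conceptual heart: if $\loss$ is a continuous kernel, its canonical feature map $\psi$ into the associated RKHS together with $V=I$ works; more interestingly, if $\loss$ lies in the RKHS of the product kernel $K=K_0\ot K_0$, I would use the isometries $\H_K\simeq\H_{K_0}\ot\H_{K_0}\simeq\mathrm{HS}(\H_{K_0})$ to identify $\loss$ with a Hilbert--Schmidt (hence bounded, though in general neither symmetric nor positive) operator $V$ on $\HY=\H_{K_0}$, so that $\loss(y,y')=\ip{\psi_0(y)}{V\psi_0(y')}_{\HY}$ with $\psi_0$ the feature map of $K_0$. This is exactly where the freedom of a non-positive $V$ in \autoref{assumption:main} pays off.

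The closure conditions (5) and (6) reduce to combining given representations. For (5), writing $\loss_0(z,z')=\ip{\psi_0(z)}{V_0\psi_0(z')}_{\H_0}$, I would set $\HY=\H_0\op\H_0$, $\psi(y)=(f(y)\,\psi_0(F(y)),\,g(y)\,\psi_0(G(y)))$, and take $V$ to be the off-diagonal block operator $(a,b)\mapsto(V_0 b,0)$; a one-line computation gives $\ip{\psi(y)}{V\psi(y')}_\HY=f(y)g(y')\loss_0(F(y),G(y'))$, with continuity and boundedness following from those of $f,g,F,G,\psi_0$ and $V_0$. Condition (6) is the most delicate. Expanding the analytic $f$ as a power series $f(u)=\sum_\alpha c_\alpha u^\alpha$ convergent on the relevant polydisc, each monomial $\prod_i\loss_i^{\alpha_i}$ is handled by iterating a product rule: if two losses are represented by $(\psi_a,V_a)$ and $(\psi_b,V_b)$, their product is represented on $\H_a\ot\H_b$ by $\psi_a\ot\psi_b$ and $V_a\ot V_b$, with norms and operator norms multiplying. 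I would then assemble the series as a Hilbert direct sum $\HY=\bop_\alpha\H_\alpha$, distributing each coefficient through scalars $\lambda_\alpha$ as $\psi_\alpha\mapsto\lambda_\alpha\psi_\alpha$ and $V_\alpha\mapsto(c_\alpha/\lambda_\alpha^2)V_\alpha$ so that the product is unchanged. The balance $\lambda_\alpha^2=|c_\alpha|\prod_i\|V_i\|^{\alpha_i}$ keeps $V=\bop_\alpha(c_\alpha/\lambda_\alpha^2)V_\alpha$ bounded (block norm $\le 1$), while $\sum_\alpha\lambda_\alpha^2\|\psi_\alpha(y)\|^2\le\sum_\alpha|c_\alpha|\prod_i(\|V_i\|C_i^2)^{\alpha_i}$ converges by analyticity within the radius controlled by $M$, placing $\psi(y)$ in $\HY$; uniform summability over the compact $\Y$ then yields continuity of $\psi$. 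I expect the two auxiliary lemmas to formalize precisely these two operations, namely products via tensor products and series via rescaled direct sums.

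Condition (2) requires a genuinely analytic construction. The idea is to represent $\loss$ through its mixed partial derivative by the multivariate fundamental theorem of calculus, writing $\loss(y,y')$ as a full $2d$-fold integral $\int\!\int\mathbb{1}[s\le y]\,L(s,t)\,\mathbb{1}[t\le y']\,ds\,dt$ plus lower-order boundary terms supported on the faces of the cube. The main term is realized by the indicator feature map $\psi(y)=\mathbb{1}[\,\cdot\le y\,]$ into a weighted space $\HY=L^2([0,1]^d,\nu)$ together with the integral operator $V$ with kernel $L/(\rho\ot\rho)$, where $\rho=d\nu/d\lambda$. The crux, and what I expect to be the main obstacle, is the boundedness of $V$: since $L$ is only assumed in $L^{1+\epsilon}$ (hence in $L^1$ on the finite-measure cube, but possibly not in $L^2$), the naive integral operator on unweighted $L^2$ need not be bounded. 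I would therefore choose $\rho\gtrsim 1+\int|L(\cdot,t)|\,dt+\int|L(s,\cdot)|\,ds$ and invoke the Schur test to force boundedness, while keeping $\nu$ finite so that $\psi$ is bounded and, by dominated convergence, continuous. The remaining boundary terms are lower-complexity instances of the same indicator-plus-kernel construction and are absorbed into a finite direct sum, so that cases (2), (5) and (6) are all completed by the tensor-product and direct-sum machinery packaged in the two preceding lemmas.
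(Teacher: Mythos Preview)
Cases (1), (3), (4), (5), and (6) match the paper's proof essentially verbatim: the same canonical-basis, RKHS-tensor, restriction, off-diagonal block, and tensor/direct-sum constructions, with the same normalization in (6) (the paper also scales each monomial by $\sqrt{|\alpha_t|\prod_i\|V_i\|^{t_i}}$ so that $\|V\|\le 1$, and proves continuity of $\psi$ by uniform approximation through the truncated polynomials).

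Case (2) is where you depart from the paper, and your guess about the two auxiliary lemmas is wrong. The paper's approach to (2) is purely Fourier-analytic: the two lemmas establish that any function on $[-\pi,\pi]^{2d}$ with absolutely summable multiple Fourier coefficients satisfies \autoref{assumption:main} (via feature maps $y\mapsto(\sqrt{\alpha_h}\,e^{ih\cdot y})_h$ into $\ell_2(\Z^d)\oplus\ell_2(\Z^d)$ and an off-diagonal identity block for $V$), and then case (2) is dispatched in one line by citing a theorem of M\'oricz that the $L^{1+\epsilon}$ bound on the mixed partial derivative forces $\sum_{h,k}|\widehat\loss_{h,k}|<\infty$. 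Your fundamental-theorem-of-calculus plus Schur-test route is a genuinely different and more constructive argument; the Schur bound for the main term works as you describe (with $\rho$ dominating both marginals of $|L|$, the operator on the weighted $L^2$ has norm $\le 1$, and the inner product unwinds to the desired double integral). Two points need care, however: (i) existence a.e.\ of the mixed derivative in $L^{1+\epsilon}$ does not by itself yield the $2d$-fold integral representation of $\loss$ --- you need absolute continuity along each coordinate, which M\'oricz's theorem also assumes, so the paper is equally implicit on this; and (ii) the ``lower-order boundary terms'' in general dimension number $2^{2d}-1$, and each requires its own bounded-feature/bounded-operator representation, which in turn needs integrability of lower-order mixed derivatives not present in the stated hypothesis. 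The Fourier approach hides all of this bookkeeping inside a single external citation, which is precisely what it buys over your more hands-on construction.
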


Below we expand \autoref{example:general} by proving that the considered losses satisfy \autoref{assumption:main}. The proofs are typically a direct application of \autoref{teo:big_taxonomy} above. 
\begin{enumerate}
\item{\em Any loss with, $\Y$ finite.}~ This is a direct application of Thm.~\ref{teo:big_taxonomy}, point 1.
\item{\em Regression and classification loss functions.}~ Here $\Y$ is an interval on $\R$ and the listed loss functions satisfies Thm.~\ref{teo:big_taxonomy}, point 2. For example, let $\Y = [-\pi, \pi]$ the mixed partial derivative of the Hinge loss $\loss(y, y') = \max(0,1 - y y')$ is defined almost everywhere as $L(y,y') = -1$ when $y y' < 1$ and $L(y,y') = 0$ otherwise. Note that $L$ satisfies \eqref{eq:mix-der-def}, for any $\la > 0$.
\item{\em Robust loss functions.}~ Here, again $\Y$ is an interval on $\R$. The listed loss functions are: {\em Cauchy} $\gamma \log(1 + |y - y'|^2/\gamma)$, {\em German-McLure} $|y - y'|^2/(1 + |y - y'|^2)$ {\em ``Fair''} $\gamma |y - y'| - \gamma^2 \log(1 + |y - y'|/\gamma)$ or the {\em ``$\mathbf{L_2-L_1}$''} $\sqrt{1 + |y - y'|^2} - 1$. They are differentiable on $\R$, hence satisfy \autoref{teo:taxonomy}, point 2. The {\em Absolute value} $|y-y'|$ is Lipschitz and satisfies \autoref{teo:taxonomy}, point 2, as well. 
\item{\em KDE.}~ When $\Y$ is a compact set and $\loss$ is a kernel, the point 3 of \autoref{teo:big_taxonomy} is applicable.
\item{\em Diffusion Distances on Manifolds.}~
Let $M \in \N$ and $\Y\subset\R^M$ be a compact Reimannian manifold. The {\it heat kernel} (at time $t>0$), $k_t:\Y\times\Y\to\R$ induced by the Laplace-Beltrami operator of $\Y$ is a reproducing kernel \cite{schoen1994}. The {\em squared diffusion distance} is defined in terms of $k_t$ as follows $\loss(y,y') = 1 - k_t(y,y')$. Then, point 3 of \autoref{teo:taxonomy} is applicable.
\item{\em Distances on Histograms/Probabilities.}~
Let $M \in \N$. A discrete probability distribution (or a normalized histogram) over $M$ entries can be represented as a $y = (y_i)_{i=1}^M\in\Y\subset [0,1]^M$ the $M$-simplex, namely $\sum_{i=1}^M y_i= 1$ and $y_i\geq0$ $\forall i=1,\dots,M$. A typical measure of distance on $\Y$ is the squared {\it { Hellinger} (or Bhattacharya)} $\loss_H(y,y') = (1/\sqrt{2}) \|\sqrt{y} - \sqrt{y}'\|_2^2$, with $\sqrt{y} = (\sqrt{y_i})_{i=1}^M$. By \autoref{teo:taxonomy}, points 4, 6 we have that $\loss_H$ satisfies \autoref{assumption:main}. Indeed, consider the kernel $k$ on $\R$, $k(r,r') = (\sqrt{rr'} + 1)^2$ with feature map $\varphi(r) = (r, \sqrt{2r}, 1)^\top\in\R^3$, Then 
$$\loss_0(r,r') = (\sqrt{r} - \sqrt{r'})^2 = r - 2\sqrt{rr'} + r' = \varphi(r)^\top V \varphi(r') \quad \textrm{with}\quad V = \left(\begin{array}{ccc} 0 & 0 & 1 \\ 0 & -1 & 0 \\ 1 & 0 & 0 \end{array}\right).$$ $\loss_H$ is obtained by $M$ summations of $\loss_0$ on $[0,1]$, by \autoref{teo:taxonomy}, point 6 (indeed $\loss_H(y,y') = f(\loss_0(y_1,y'_1),\dots,\loss_0(y_M,y'_M))$ with $y = (y_i)_{i=1}^M, y' = (y'_i)_{i=1}^M \in \Y$ and the function $f: \R^M \to \R$ defined as $f(t_1,\dots, t_M) = \sum_{i=1}^M t_i$, which is analytic on $\R^M$), and then restriction on $\Y$ \autoref{teo:taxonomy}, point 4. A similar reasoning holds when the loss function is the $\chi^2$ distance on histograms. Indeed the function $(r - r')^2/(r + r')$ satisfies point 2 on $\Y = [0,1]$, then point 6 and 4 are applied.
\end{enumerate}

To prove \autoref{teo:big_taxonomy} we need the following two Lemmas.

\begin{lemma}[multiple Fourier series]\label{lemma:fseries}
Let $D = [-\pi,\pi]^d$,  $(\widehat  f_h)_{h \in \Z^d} \in \CC$ and $f: D  \to \CC$ with $d \in \N$ defined as
$$f(y) = \sum_{h \in \Z^d} \widehat f_h e^{i h^\top y}, \quad \forall y \in D, \quad \textrm{with}\quad \sum_{h \in \Z^d} |\widehat f_h| \leq B,$$
for a $B < \infty$ and $i = \sqrt{-1}$. Then the function $f$ is continuous and
$$\sup_{y \in D} |f(y)| \leq B.$$
\end{lemma}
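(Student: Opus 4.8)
The plan is to treat the two assertions—the uniform bound and the continuity of $f$—separately, both flowing from the single hypothesis that $\sum_{h \in \Z^d} |\widehat f_h| \le B$, i.e. absolute summability of the Fourier coefficients.

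First I would dispatch the bound by the triangle inequality. For any fixed $y \in D$ each character has unit modulus, $|e^{i h^\top y}| = 1$ for every $h \in \Z^d$, so $|\widehat f_h e^{i h^\top y}| = |\widehat f_h|$, and therefore
\[
  |f(y)| \;\le\; \sum_{h \in \Z^d} |\widehat f_h \, e^{i h^\top y}| \;=\; \sum_{h \in \Z^d} |\widehat f_h| \;\le\; B,
\]
where passing the modulus inside the sum is legitimate precisely because the series converges absolutely. Taking the supremum over $y \in D$ yields $\sup_{y \in D} |f(y)| \le B$.

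For continuity, the key point is that the defining series converges \emph{uniformly} on $D$. I would invoke the Weierstrass $M$-test: each term $y \mapsto \widehat f_h\, e^{i h^\top y}$ is continuous on $D$ and is dominated in modulus by the constant $M_h := |\widehat f_h|$, independent of $y$, with $\sum_{h} M_h \le B < \infty$. Consequently the partial sums over any exhaustion of $\Z^d$ by finite subsets form a uniformly Cauchy sequence of continuous functions, and hence converge uniformly to $f$. Since a uniform limit of continuous functions is continuous, $f$ is continuous on $D$.

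The argument is entirely standard and I do not anticipate a genuine obstacle; the only point deserving care is purely notational. For $d > 1$ the index set $\Z^d$ carries no canonical ordering, so ``the series'' and its ``partial sums'' must be read as a net over finite subsets of $\Z^d$ (equivalently, via an arbitrary enumeration). Absolute summability is exactly what guarantees that the sum is independent of this choice and that the $M$-test applies uniformly regardless of the enumeration, so making this summation convention explicit is what renders the application of the $M$-test rigorous in the multi-index setting.
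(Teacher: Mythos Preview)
Your proof is correct and matches the paper's approach for the bound, which is the same triangle-inequality computation using $|e^{i h^\top y}|=1$. For continuity the paper simply cites a reference (Kahane, \emph{Fourier Series and Wavelets}), whereas you supply the underlying Weierstrass $M$-test argument explicitly; this is the same standard fact, just spelled out rather than quoted, so the two proofs are essentially identical in substance.
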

\begin{proof}
For the continuity, see \cite{kahane1995fourier} (pag. 129 and Example 2). For the boundedness we have
$$ \sup_{y \in D} |f(y)| \leq \sup_{y \in D} \sum_{h \in \Z^d} |\widehat f_h| |e^{i h^\top y}| \leq \sum_{h \in \Z^d} |\widehat f_h| \leq B.$$
\end{proof}

\begin{lemma}\label{lemma:abs-cont-functions}
Let $\Y = [-\pi,\pi]^d$ with $d \in \N$, and $\loss: \Y \times \Y \to \R$ defined by
$$ \loss(y, z) = \sum_{h,k \in \Z^d} \widehat\loss_{h,k} e_h(y)e_k(z), \quad \forall y, z \in \Y,$$
with $e_h(y) = e^{i h^\top y}$ for any $y \in \Y$, $i = \sqrt{-1}$ and $\widehat\loss_{h,k} \in \CC$ for any $h,k \in \Z^d$. The loss $\loss$ satisfies \autoref{assumption:main} when
$$\sum_{h,k \in \Z^d} |\widehat\loss_{h,k}| < \infty.$$
\end{lemma}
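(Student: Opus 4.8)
I would realize $\loss$ as $\ip{\psi(y)}{V\psi(y')}_\HY$ with $\HY = \ell^2(\Z^d;\CC)$ and a feature map of the ``weighted Fourier'' form $\psi(y) = (w_h\, e_h(y))_{h\in\Z^d}$, where $(w_h)_{h\in\Z^d}$ is a sequence of strictly positive weights to be chosen and $e_h(y) = e^{i h^\top y}$. With this ansatz, expanding the (first-slot-conjugate) inner product gives $\ip{\psi(y)}{V\psi(z)}_\HY = \sum_{h,k} w_h\,\overline{e_h(y)}\,V_{h,k}\,w_k\,e_k(z)$, where $(V_{h,k})$ is the matrix of $V$. Using $\overline{e_h(y)} = e_{-h}(y)$ and reindexing $h\mapsto -h$, matching this against the target $\loss(y,z)=\sum_{h,k}\widehat\loss_{h,k}e_h(y)e_k(z)$ forces the choice $V_{h,k} = \widehat\loss_{-h,k}/(w_h w_k)$. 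Thus the whole statement reduces to choosing weights for which $\psi$ lands in (and is continuous into) $\ell^2$ \emph{and} the resulting $V$ is bounded.

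\textbf{Choice of weights.} Let $\rho_h = \sum_k |\widehat\loss_{h,k}|$ and $\gamma_k = \sum_h |\widehat\loss_{h,k}|$ be the row and column $\ell^1$-sums, so that by hypothesis $\sum_h \rho_h = \sum_k \gamma_k = B := \sum_{h,k}|\widehat\loss_{h,k}| < \infty$. I would fix any strictly positive summable sequence $(\delta_h)$ and set $w_h^2 = \max(\rho_{-h},\gamma_h) + \delta_h$. Then $\sum_h w_h^2 \le 2B + \sum_h \delta_h < \infty$, so $\psi(y)$ is a genuine element of $\ell^2$ with constant norm $\sqrt{\sum_h w_h^2}$; this gives boundedness of $\psi$, and continuity follows from dominated convergence, since each coordinate $w_h e_h(\cdot)$ is continuous and $|w_h e_h(y) - w_h e_h(y')|^2 \le 4w_h^2$ is a summable envelope. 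For $V$ I would invoke Schur's test with the test sequence $p_k = w_k$: the row sums are $\sum_k |V_{h,k}|\,w_k = \rho_{-h}/w_h \le w_h$ and the column sums are $\sum_h |V_{h,k}|\,w_h = \gamma_k/w_k \le w_k$, precisely because $w_h^2 \ge \rho_{-h}$ and $w_k^2 \ge \gamma_k$ by construction. Schur's test then yields $\|V\| \le 1$, so $V$ is a bounded linear operator.

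\textbf{Main obstacle and conclusion.} The one genuine difficulty is the tension built into the ansatz: decay of the weights is needed to keep $\psi$ square-summable, but the entries $\widehat\loss_{-h,k}/(w_h w_k)$ of $V$ blow up when the weights are too small, and only the absolute summability of $\widehat\loss$ (via its row and column sums) is available to control this. The choice $w_h^2 \asymp \max(\rho_{-h},\gamma_h)$ is exactly what balances the two requirements, letting the $\ell^1$-sums absorb the division in the Schur estimate. A secondary, purely formal point is that $\ell^2(\Z^d;\CC)$ carries a sesquilinear form whereas \autoref{assumption:main} is read over $\R$; since $\loss$ is real valued, I would pass to the realification of $\ell^2(\Z^d;\CC)$ equipped with the inner product $\mathrm{Re}\,\ip{\cdot}{\cdot}$ (still a separable Hilbert space, with $V$ still real-linear and of the same operator norm), and observe that $\mathrm{Re}\,\ip{\psi(y)}{V\psi(z)}_\HY = \ip{\psi(y)}{V\psi(z)}_\HY = \loss(y,z)$. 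This produces the separable $\HY$, continuous embedding $\psi$, and bounded operator $V$ required by \autoref{assumption:main}, completing the proof.
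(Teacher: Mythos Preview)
Your proof is correct but proceeds along a genuinely different route from the paper's. You work in a single copy of $\ell^2(\Z^d)$ with one weighted Fourier feature map $\psi(y)=(w_h e_h(y))_h$ and push all the structure of $\loss$ into the operator $V$, whose matrix is the rescaled Fourier coefficient array $\widehat\loss_{-h,k}/(w_h w_k)$; boundedness of $V$ then requires an operator-theoretic estimate (Schur's test), and to make that estimate go through you must balance \emph{both} row and column $\ell^1$-sums of $(\widehat\loss_{h,k})$ in the choice $w_h^2=\max(\rho_{-h},\gamma_h)+\delta_h$. The paper instead doubles the space, taking $\HY=\ell^2\oplus\ell^2$ with two distinct feature maps $\psi_1(y)=(\sqrt{\alpha_h}\,e_h(y))_h$ and $\psi_2(z)=(\sqrt{\alpha_h}\,f_h(z))_h$, where $\alpha_h=\sum_k|\widehat\loss_{h,k}|$ and $f_h(z)=\alpha_h^{-1}\sum_k\widehat\loss_{h,k}e_k(z)$ is uniformly bounded by $1$; then $V=\begin{pmatrix}0&I\\0&0\end{pmatrix}$ is trivially bounded and only the row sums $\alpha_h$ are ever needed. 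In short: the paper trades a larger (asymmetric) ambient space for a trivial $V$ and avoids any Schur-type argument, while your symmetric construction keeps the Hilbert space minimal at the price of a nontrivial but clean boundedness estimate. Both arguments handle continuity of $\psi$ the same way (via absolute summability and a dominated-convergence/kernel-continuity argument), and your realification step to accommodate the real inner product in \autoref{assumption:main} is a point the paper leaves implicit.
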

\begin{proof}
Note that by applying \autoref{lemma:fseries}  with $D = \Y \times \Y$, the function $\loss$ is bounded continuous. Now we introduce the following sequences
$$ \alpha_{h} = \sum_{k \in \Z^d} |\widehat\loss_{h,k}|,  \quad f_h(z) = \frac{1}{\alpha_h} \sum_{h \in \Z^d} \widehat\loss_{hk} e_k(z) \quad \forall h \in \Z^d, z \in \Y.$$
Note that $(\alpha_h)_{h \in \Z^d} \in \ell_1$ and that $f_h$ bounded by $1$ and is continuous by \autoref{lemma:fseries} for any $h \in \Z^d$.
Therefore
$$ \loss(y, z) = \sum_{h,k \in \Z^d} \widehat\loss_{h,k} e_h(y)e_h(z) = \sum_{h \in \Z^d} \alpha_h e_h(y)f_h(z).$$
Now we define two feature maps $\psi_1,\psi_2: \Y \to \H_0$, with $\H_0 = \ell_2(\Z^d)$, as
$$\psi_1(y) = (\sqrt{\alpha_h}e_h(y))_{h \in \Z^d},\quad \psi_2(y) = (\sqrt{\alpha_h}f_h(y))_{h \in \Z^d}, \quad \forall y \in \Y.$$
Now we prove that the two feature maps are continuous. Define $k_1(y,z) = \ip{\psi_1(y)}{\psi_1(z)}_{\H_0}$ and $k_2(y,z) = \ip{\psi_2(y)}{\psi_2(z)}_{\H_0}$  for all $y,z \in \Y$. We have 
\eqals{
k_1(y,z) &= \sum_{h \in \Z^d} \alpha_h \overline{e_h(y)}e_h(z),\\
k_2(y,z) &= \sum_{h \in \Z^d} \alpha_h \overline{f_h(y)} f_h(z) = \sum_{k,l \in \Z^d} \beta_{k,l} \overline{e_k(y)} e_l(z)
}
with $\beta_{k,l} = \sum_{h \in \Z^d} \frac{\overline{\widehat\loss_{h,k}}\widehat\loss_{h,l}}{\alpha_h}$, for $k, l \in \Z^d$, therefore $k_1$, $k_2$ are bounded and continuous by \autoref{lemma:fseries} with $D = \Y \times \Y$, since $\sum_{h \in \Z^d} \alpha_h < \infty$ and $\sum_{k,l \in \Z^d} |\beta_{k,l}| < \infty$.
Note that $\psi_1$ and $\psi_2$ are bounded, since $k_1$ and $k_2$ are. Moreover for any $y, z \in \Y$, we have
\eqals{
\|\psi_1(y) - \psi_1(z)\|^2 &= \ip{\psi_1(z)}{\psi_1(z)}_{\H_0} + \ip{\psi_1(y)}{\psi_1(y)}_{\H_0} - 2 \ip{\psi_1(z)}{\psi_1(y)}_{\H_0} \\
& = k_1(z,z) + k_1(y,y) - 2 k_1(z,y) \leq |k_1(z,z) - k_1(z,y)| + |k_1(z,y) - k_1(y,y)|,
}
and the same holds for $\psi_2$ with respect to $k_2$.
Thus the continuity of $\psi_1$ is entailed by the continuity of $k_1$ and the same for $\psi_2$ with respect to $k_2$.
Now we define $\HY = \H_0 \oplus \H_0$ and $\psi: \Y \to \HY$ and $V: \HY \to \HY$ as
$$ \psi(y) = (\psi_1(y), \psi_2(y)),~~\forall y \in \Y \quad \textrm{and} \quad V = \begin{pmatrix} 0 & I\\ 0 & 0 \end{pmatrix},$$
where $I: \H_0 \to \H_0$ is the identity operator.
Note that $\psi$ is bounded continuous, $V$ is bounded and
$$\ip{\psi(y)}{V\psi(z)}_\HY = \ip{\psi_1(y)}{\psi_2(z)}_{\H_0} = \sum_{h \in \Z^d} \alpha_h e_h(y) f_h(z) = \loss(y,z).$$
\end{proof}

We can now prove \autoref{teo:big_taxonomy}. 

\begin{proof}(\autoref{teo:big_taxonomy})

\begin{enumerate}
\item[1] Let $N = \{1,\dots,|\Y|\}$ and $q: \Y \to N$ be a one-to-one function. Let $\HY = \R^{|\Y|}$, $\psi(y):\Y \to \HY$ defined by $\psi(y) = e_{q(y)}$ for any $y \in \Y$ with $(e_i)_{i=1}^{|\Y|}$ the canonical basis for $\HY$, finally $V \in \R^{|\Y|\times|\Y|}$ with $V_{i,j} = \loss(q^{-1}(i),q^{-1}(j))$ for any $i,j \in N$. Then $\loss$ satisfies \autoref{assumption:main}, with $\psi$ and $V$.
\item[2] By \autoref{lemma:abs-cont-functions}, we know that any loss $\loss$ whose Fourier expansion is absolutely summable, satisfies \autoref{assumption:main}. The required conditions in points 2 are sufficient ( see Theorem~6' pag.~291 of \cite{moricz2007absolute}).
\item[3] Let $\Y$ be a compact space. For the first case let $\loss$ be a bounded and continuous reproducing kernel on $\Y$ and let $\HY$ the associated RKHS, then there exist a bounded and continuous map $\psi:\Y \to \HY$ such that $\loss(y,y') = \ip{\psi(y)}{\psi(y')}$ for any $y, y' \in \Y$, which satisfies \autoref{assumption:main} with $V = I$ the identity on $\HY$. For the second case, let $K$ defined in terms of $K_0$ as in equation. Let $\H_0$ be the RKHS induced by $K_0$ and $\psi$ the associated feature map, then, by definition, the $RKHS$ induced by $K$ will be $\H = \H_0 \otimes \H_0$. Since $\loss$ belongs to $\H$, then there exists a $v \in \H$ such that $\loss(y,y') = \ip{v}{\psi(y)\otimes\psi(y')}_{\H_0\otimes\H_0}$. Now note that $\H = \H_0 \otimes \H_0$ is isomorphic to $B_2(\H_0, \H_0)$, that is the linear space of Hilbert-Schmidt operators from $\H_0$ to $\H_0$, thus, there exist an operator $V \in B_2(\H_0, \H_0)$ such that 
$$\loss(y,y') = \ip{v}{\psi(y)\otimes\psi(y')}_{\H_0\otimes\H_0} = \ip{\psi(y)}{V\psi(y')}_{\H_0},\quad \forall y, y' \in \Y.$$
Finally note that $\psi$ is continuous and bounded, since it is $K_0$, and $V$ is bounded since it is Hilbert-Schmidt. Thus $\loss$ satisfies \autoref{assumption:main}.
\item[4] Since $\loss_0$ satisfies \autoref{assumption:main} we have that there exists a kernel on $\Y_0$ such that \autoref{eq:linearized_loss} holds. Note that the restriction of a kernel on a subset of its domain is again a kernel. Thus, let $\psi = \psi_0|_{\Y}$, we have that $\loss|_{\Y}$ satisfies \autoref{eq:linearized_loss} with $\psi$ and the same bounded operator $V$ as $\loss$.
\item[5] Let $\loss_0:{\cal Z} \times {\cal Z} \to \R$ satisfy \autoref{assumption:main}, then $\loss_0(z,z') = \ip{\psi_0(z)}{V_0\psi_0(z')}$ for all $z, z' \in {\cal Z}$ with $\psi_0: \Y \to \H_0$ bounded and continuous and $\H_0$ a separable Hilbert space. Now we define two feature maps $\psi_1(y) = f(y)\psi_0(F(y))$ and $\psi_2(y) = g(y)\psi_0(G(y))$. Note that both $\psi_1, \psi_2: \Y \to \H_0$ are bounded and continuous. We define $\HY = \H_0 \oplus \H_0$, $\psi: \Y \to \HY$ as $\psi(y) = (\psi_1(y), \psi_2(y'))$ for any $y \in \Y$, and $V = \begin{pmatrix}0 & V_0\\ 0 & 0 \end{pmatrix}$. Note that now
\eqals{
\ip{\psi(y)}{V \psi(y')}_{\HY} &= (\psi_1(y)~ \psi_2(y))  \begin{pmatrix}0 & V_0\\ 0 & 0 \end{pmatrix}  \begin{pmatrix}\psi_1(y)\\ \psi_2(y)\end{pmatrix} = \ip{\psi_1(y)}{V_0 \psi_2(y')}_{\HY} \\
& = f(y)\ip{\psi(F(y))}{V_0 \psi(G(y'))}_{\HY}g(y') =   f(y)\loss_0(F(y),G(y'))g(y').
}
\item[6] Let $\Y$ be compact and $\loss_i$ satisfies \autoref{assumption:main} with $\H_i$ the associated RKHS, with continuous feature maps $\psi_i:\Y \to \H_i$  bounded by $C_i$ and with a bounded operator $V_i$, for $i \in \{1,\dots,p\}$.
Since an analytic function is the limit of a series of polynomials, first of all we prove that a finite polynomial in the losses satisfies \autoref{assumption:main}, then we take the limit.
First of all, note that $\alpha \loss_1 + \beta \loss_2$, satisfies \autoref{assumption:main}, for any $\alpha_1, \alpha_2 \in \R$. Indeed we define $\HY = \H_1 \oplus \H_2$, and $\psi(y) = (\sqrt{|\alpha|\|V_1\|}\psi_1(y),~~\sqrt{|\beta|\|V_2\|}\psi_2(y))$ for any $y \in \Y$, so that
$$\alpha_1 \loss_1(y,y') + \alpha_2 \loss_2(y,y') = \ip{\psi(y)}{V \psi(y')}_{\HY}, \quad \textrm{with}\quad V = \begin{pmatrix}  \frac{\textrm{sign}(\alpha_1)}{\|V_1\|}V_1 & 0 \\ 0 & \frac{\textrm{sign}(\alpha_2)}{\|V_2\|}V_2 \end{pmatrix},$$
for any $y, y' \in \Y$, where $\psi$ is continuous, $\sup_{y \in \Y} \|\psi\|_{\HY} \leq |\alpha_1| \|V_1\| C_1 + |\alpha_2| \|V_2\|C_2$ and $\|V\| \leq 1$. In a similar way we have that $\loss_1 \loss_2$ satisfies \autoref{assumption:main}, indeed, we define $\HY = \H_1 \otimes \H_2$ and $\psi$ to be $\psi(y) = \psi_1(y) \otimes \psi_2(y)$ for any $y \in \Y$,
thus
$$\loss_1(y,y')\loss_2(y,y') = \ip{\psi(y)}{V\psi(y)}_{\HY}, \quad \textrm{with} \quad V = V_1 \otimes V_2,$$
for any $y, y' \in \Y$, where $\psi$ is continuous, $\sup_{y \in \Y} \|\psi\|_{\HY} \leq C_1C_2$ and $\|V\| \leq \|V_1\|\|V_2\|$.
Given a polynomial $P(\loss)$, with $\loss = 
(\loss_1,\dots,\loss_p)$ we write it as
$$P(\loss) = \sum_{t \in \N^p} \alpha_t \loss^t, \quad \textrm{with} \quad \loss^t = \prod_{i=1}^p \loss_i^{t_i},~~\forall t \in \N^p.$$
where the $\alpha$'s are the coefficents of the polynomial and such that only a finite number of them are non-zero. By applying the construction of the product on each monomial and of the sum on the resulting monomials, we have that
$P(\loss)$ is a loss satisfying \autoref{assumption:main} for a continuous $\psi$ and a $V$ such that
$$\sup_{y \in \Y} \|\psi\|_{\HY} \leq \bar{P}(\|V_1\|C_1,\dots,\|V_1\|C_1)$$ 
and $\|V\| \leq 1$, where 
$\bar{P}(\loss) = \sum_{t \in \N^p} |\alpha_t| \loss^t$ and $\HY = \oplus_{t \in \N^p} ~ \otimes_{i=1}^p \H_i^{t_i}$. Note that $\HY$ is again separable.
Let now consider
$$f(\loss) = \sum_{t \in \N^p} \alpha_t \loss^t,\quad \bar{f}(\loss) = \sum_{t \in \N^p} |\alpha_t| \loss^t.
$$
Assume that $\bar{f}(\|V_1\|C_1,\dots, \|V_p\|C_p) < \infty$. Then by repeating the construction for the polynomials, we produce a bounded $\psi$ and a bounded $V$ such that 
$$f(\loss_1(y,y'),\dots,\loss_p(y,y')) = \ip{\psi(y)}{V \psi(y')}_{\HY}, \quad \forall ~ y,y' \in \Y,$$
in particular $\HY$ is the same for the polynomial case and $\psi = \oplus_{t \in \N^p} \sqrt{|\alpha_t|v_t} \otimes_{i=1}^p \psi_{i}^{\otimes t_i}$, with $v_t = \prod_{i=1}^p \|V_i\|^{t_i}$, for any $t \in \N^p$.
Now we prove that $\psi$ is continuous on $\Y$. Let $\psi_q$ be the feature map defined for the polynomial $\bar{P}_q(\loss) = \sum_{t \in \N^p,~\|t\| \leq q} |\alpha_t| \loss^t$. We have that
\eqals{
\sup_{y \in \Y} \|\psi(y) - \psi_q(y)\|^2_{\HY}  & = \sup_{y \in \Y} \left\|\oplus_{t \in \N^p, \|t\| > q} \sqrt{|\alpha_t|v^t} \otimes_{i=1}^p \psi_{i}^{\otimes t_i}\right\|^2_{\HY}  \\
& \leq  \sum_{t \in \N^d, \|t\| > q} |\alpha_t| v_t \sup_{y \in \Y} \|\otimes_{i=1}^p \psi_{i}^{\otimes t_i}\|^2_{\HY} \leq \sum_{t \in \N^d, \|t\| > q} |\alpha_t| v_t c_t,
}
with $c_t = \prod_{i=1}^p C_i^{t_i}$ for any $t \in \N^p$. No note that 
$$ \sum_{t \in \N^d} |\alpha_t| v_t c_t = \bar{f}(\|V_1\|C_1,\dots,\|V_p\|C_p) < \infty,
$$
thus $\lim_{q \to \infty} \sum_{t \in \N^d, \|t\| > q} |\alpha_t| v_t c_t = 0$.
Therefore
\eqals{
\lim_{q \to \infty} \sup_{y \in \Y} \|\psi(y) - \psi_q(y)\|^2_{\HY} &\leq \lim_{q \to \infty} \sum_{t \in \N^p,~\|t\|> q} |\alpha_t| v_t c_t = 0.
}
Now, since $\psi_q$ is a sequence of continuous bounded functions, and the sequence converges uniformly to $\psi$, then $\psi$ is continuous bounded. So $f(\loss)$ is a loss function satisfying \autoref{assumption:main}, with a continuous $\psi$ and an operator $V$ such that
$$
\sup_{y \in \Y} \|\psi\|_{\HY} \leq \bar{f}(\|V_1\|C_1,\dots,\|V_1\|C_1),
$$ and
$\|V\| \leq 1$.
\end{enumerate}
\end{proof}

\putbib[biblio]
\end{bibunit}

\end{document}